\documentclass{article}
\usepackage{jld_preprint,times}
\usepackage{amsmath,amsfonts,bm}

\def\eqref#1{equation~\ref{#1}}
\def\1{\bm{1}}

\newcommand{\test}{\mathcal{D_{\mathrm{test}}}}

\DeclareMathAlphabet{\mathsfit}{\encodingdefault}{\sfdefault}{m}{sl}
\SetMathAlphabet{\mathsfit}{bold}{\encodingdefault}{\sfdefault}{bx}{n}

\usepackage{graphicx}
\usepackage{tikz}
\usepackage{float}
\usepackage{booktabs}
\usepackage{amsmath,amssymb,amsthm}
\usepackage{xcolor}
\usepackage{colortbl}
\usepackage{subcaption}
\usepackage{multirow}
\usepackage{wrapfig}
\usepackage{needspace}
\usepackage{placeins}
\usepackage{algorithm}
\usepackage{algpseudocode}
\usepackage{url}
\definecolor{paperlink}{HTML}{285A78}
\definecolor{jldcode}{HTML}{7A8388}
\definecolor{citeblue}{HTML}{1F3FBF}% citations: standard blue
\definecolor{refred}{HTML}{C0141C}% figures, tables, sections, equations, appendix
\definecolor{urlorange}{HTML}{E06A00}% external links
\usepackage[colorlinks=true,linkcolor=refred,citecolor=citeblue,urlcolor=urlorange]{hyperref}
\graphicspath{{figures/}}
\providecommand{\name}{}\renewcommand{\name}{JLD}
\providecommand{\namefast}{}\renewcommand{\namefast}{JLD-fast}   % block 1 only, 2.5 ms
\providecommand{\namelin}{}\renewcommand{\namelin}{JLD-lin}      % patch embedding only
\providecommand{\namevideo}{}\renewcommand{\namevideo}{JLD-video}
\providecommand{\M}{}\renewcommand{\M}{\mathbf{M}}            % sensitivity matrix
\providecommand{\U}{}\renewcommand{\U}{\mathbf{U}}            % Jacobian lens (top-k eigenvectors)
\providecommand{\tok}{}\renewcommand{\tok}[1]{\mathbf{h}_{#1}} % patch token t
\providecommand{\out}{}\renewcommand{\out}{\mathbf{z}}        % output embedding
\providecommand{\dist}{}\renewcommand{\dist}{D}               % the distance
\providecommand{\pre}{}\renewcommand{\pre}{R}                 % chroma pre-filter
\providecommand{\srcc}{}\renewcommand{\srcc}{SRCC}
\providecommand{\plcc}{}\renewcommand{\plcc}{PLCC}
\providecommand{\krcc}{}\renewcommand{\krcc}{KRCC}

\theoremstyle{plain}
\newtheorem{theorem}{Theorem}
\newtheorem{proposition}{Proposition}
\newtheorem{lemma}{Lemma}
\theoremstyle{definition}

\usepackage{listings}
\usepackage{xcolor}

\definecolor{codeaccent}{HTML}{465A9C}

\lstdefinestyle{jldcode}{
    language=Python,
    basicstyle=\ttfamily\footnotesize\color{black},
    keywordstyle=\color{codeaccent}\bfseries,
    commentstyle=\color{black!55},
    stringstyle=\color{black},
    backgroundcolor=\color{black!3},
    frame=single,
    rulecolor=\color{black!15},
    framerule=0.4pt,
    xleftmargin=0.4em,
    xrightmargin=0.4em,
    framexleftmargin=0.4em,
    framexrightmargin=0.4em,
    breaklines=true,
    breakatwhitespace=false,
    showstringspaces=false,
    columns=fullflexible,
    keepspaces=true,
    aboveskip=5pt,
    belowskip=5pt
}

\newenvironment{tightlist}{\begin{itemize}\setlength{\itemsep}{0pt}\setlength{\parskip}{0pt}\setlength{\topsep}{2pt}\setlength{\leftmargin}{1.2em}}{\end{itemize}}

\renewcommand{\topfraction}{0.85}
\renewcommand{\bottomfraction}{0.7}
\renewcommand{\textfraction}{0.15}
\renewcommand{\floatpagefraction}{0.8}
\addtocontents{toc}{\protect\setcounter{tocdepth}{-10}}

\title{\name{}: Perceptual Distance through a Jacobian Lens}

\author{
Shreshth Saini$^{1,2}$ \quad
Balu Adsumilli$^{2}$ \quad
Alan C. Bovik$^{1,3}$ \\
$^{1}$The University of Texas at Austin, Austin, TX, USA \\
$^{2}$Google, USA \\
$^{3}$University of Colorado Boulder, Boulder, CO, USA \\
\texttt{saini.2@utexas.edu}
}

\iclrfinalcopy

\begin{document}
\maketitle
\begin{abstract}
Image compression, restoration, and generation all require a way to measure how different two images look to a person. Pixel error ignores how people see, while the most accurate perceptual distances are typically fitted to human judgments, tying them to a fixed data and resolution. For example, when image resolution is doubled, the correlation of DISTS with human scores on TID2013 drops from 0.815 to 0.717.
We introduce the Jacobian Lens Distance (\name{}), which derives its perceptual geometry from a frozen vision encoder rather than from human labels. \name{} combines the locality of early patch features with the perceptual sensitivity captured by later encoder representations. Specifically, we use the encoder Jacobian to identify directions in the early feature space that most strongly affect the encoder output, producing a fixed metric tensor, $\mathbb{E}[J^\top J]$, which we call the Jacobian lens. The lens is fitted only once from 100 unlabeled images, taking about 35 seconds.
Locally, this construction defines a pullback metric in pixel space, giving \name{} a clear geometric interpretation that can be directly analyzed on real images. Across four standard perceptual databases, \name{} achieves state-of-the-art performance and consistently outperforms LPIPS, DISTS, PieAPP, and DreamSim. \name{} is also robust to changes in image resolution, on TID2013, its lens-term correlation remains nearly unchanged when the resolution is doubled, decreasing only from 0.850 to 0.845. We further introduce \namefast{}, which is $4\times$ faster than LPIPS-VGG while achieving a mean correlation of 0.911. Finally, \name{} naturally extends to video, reaching a correlation of 0.786 on Waterloo IVC 4K compared with 0.611 for VMAF. \href{https://shreshthsaini.github.io/jld/}{Code}.
\end{abstract}

% Overview + motivation figure. v23.5 (worker M): the method pipeline of the old Fig 1 (fig:hero, now commented out
% in main.tex) is merged in as the top row; figures/v235/M_pipeline.pdf (scripts/v235/M_overview_pipeline.py).
% Bottom row unchanged from v23.4 (worker A2). Sources: figures/v234/A2_motivation_numbers.json
% (selection scripts/v234/A2_motivation_select.py over the released per-pair scores; maps scripts/v234/A2_motivation_maps.py;
% plot scripts/v234/A2_motivation_fig.py). Matched-PSNR agreement: figures/v233/B_numbers.json (pairs.matched_psnr_agreement.tid2013).
\begin{figure}[H]
\centering
\includegraphics[width=0.86\linewidth]{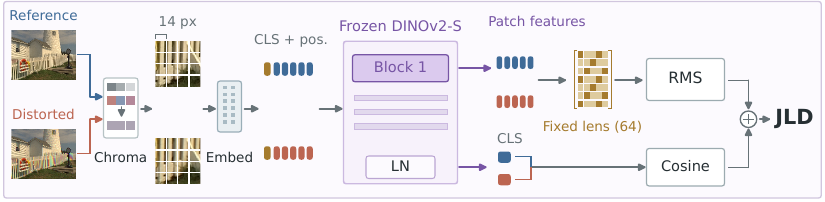}\\[2pt]% arXiv layout: was 0.92
\includegraphics[width=0.86\linewidth]{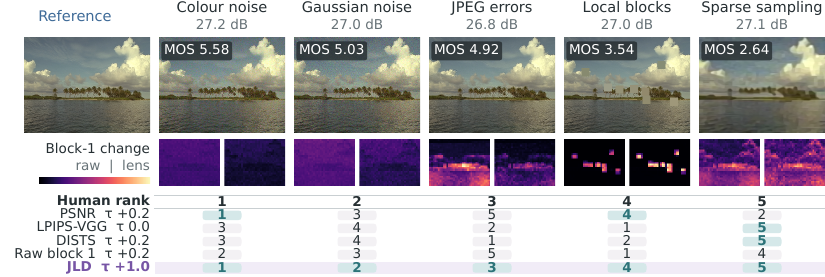}% arXiv layout: was 0.92
\vspace{-4pt}
\caption{Top: block-1 patch features of the images are projected onto a fixed lens (\name{}), fitted from the output's sensitivity without labels; Bottom: five TID2013 distortions at equal PSNR (26.8 to 27.2\,dB), ordered by human score, with the per-patch change, and rank and Kendall $\tau$ against the human ranks.}
\label{fig:motivation}\label{fig:hero}% fig:hero kept so old references to the Fig 1 pipeline resolve
\vspace{-6pt}
\end{figure}

\section{Introduction}
\label{sec:introduction}
Measuring how different two images look to a person is a basic problem in imaging.
Lossy codecs are tuned using perceptual measures, while restoration and generative models are trained and evaluated with them \citep{wang2004ssim,zhang2018lpips}. Pixel error, measured using mean squared error or PSNR, is simple and differentiable, but it treats every pixel change equally, whereas human vision does not \citep{wang2004ssim,ponomarenko2015tid2013}. Figure~\ref{fig:motivation} shows five distortions from TID2013 \citep{ponomarenko2015tid2013} whose PSNR values differ by less than 0.4\,dB. Despite their similar pixel error, their perceptual severity is very different. Across all TID2013 pairs with equal PSNR, it selects the image preferred by human observers in only 51.1\% of cases, whereas \name{} does so in 79.9\%. A useful perceptual distance should therefore assign small distances to changes that people barely notice and large distances to changes they find visually important. At the same time, if it is to be used as a loss or an evaluation measure, its geometry should remain well defined and interpretable.

Existing approaches largely fall into two families. 
Classical full-reference measures explicitly model properties associated with early vision.
SSIM and MS-SSIM compare local luminance, contrast, and structure \citep{wang2004ssim,wang2003msssim};
FSIM, GMSD, and VSI compare phase-congruency, gradient, and saliency maps \citep{zhang2011fsim,xue2014gmsd,zhang2014vsi}.
Deep feature distances instead compare representations from pretrained networks. LPIPS, DISTS, PieAPP, and DreamSim, etc., are additionally fitted to human judgements \citep{zhang2018lpips,ding2022dists,prashnani2018pieapp,fu2023dreamsim}. 
This fitting makes such metrics poor at generalizability. When TID2013 is rescored at twice the image resolution, from 256 to 512 pixels, the Spearman correlation (SRCC) of DISTS with ground truth drops from 0.815 to 0.717, while LPIPS-VGG \citep{zhang2018lpips,simonyan2015vgg} drops from 0.757 to 0.661. 
A separate line of work derives perceptual geometry from models of unlabelled data.
IEM~\citep{ohayon2026iem} integrates denoising errors under a learned density, while LASI~\citep{severo2024lasi} fits linear embeddings at inference time. These methods require no human labels and provide a principled geometry, but they are considerably more expensive to evaluate. For example, IEM requires 725.7\,ms per image pair at $256\times256$ pixels, compared with 12.2\,ms for \name{} at $512\times384$ (Table~\ref{tab:main}).

In this work, we ask a simple question: \emph{does a network that has only learned to see already contain a useful perceptual distance?} We find that it does, but the relevant information does not appear at any single network depth (Appendix Figure~\ref{fig:A2-depth}).
In a frozen DINOv2-S encoder \citep{oquab2024dinov2}, early patch tokens preserve local image structure.
However, treating all 384 feature directions equally gives too much weight to directions that have little effect on the network output, reaching a SRCC of only 0.784 on all full TID2013 dataset.
Late representations capture information that is more relevant to the encoder output, but much of the spatial detail has already been pooled away; the output tokens reach a correlation of only 0.742. The key observation is that these two properties can be combined. We retain the spatial detail of early features, but weight their directions according to how strongly they affect the network output.
This raises the correlation to 0.850 and, unlike the unweighted alternatives, remains stable as image resolution increases. The same effect is visible in Figure~\ref{fig:motivation}, raw features rank the five distorted images almost at random relative to human observers ($\tau=0.2$), whereas the weighted features reproduce the human ordering exactly. %We summarise this principle as measuring early and weighing by late. 

The weights come from the Jacobian of the output with respect to an early patch token. This Jacobian describes how a small change in the token affects the encoder output.
Averaging its squared response over natural images gives a single fixed metric tensor in feature space, $\mathbb{E}\!\left[J^{\top}J\right]$. For block 1 of DINOv2-S, this metric is strongly anisotropic, only 64 of its 384 eigendirections account for 76.5\% of its trace. Based on this observation, we introduce the Jacobian Lens Distance (\name{}).
\name{} retains the 64 leading eigendirections of $\M$ as a fixed projection, which we call the Jacobian lens.
%It then measures the root-mean-square displacement between the lensed block-1 patch tokens of two images, together with a small output-level term (Figure~\ref{fig:motivation}). The lens is estimated once, in 35 seconds, from 100 unlabelled images using random output probes, and 
It never uses human judgements (Appendix Figure~\ref{fig:A-lens}). Because the lens is a fixed linear map, the geometry induced by \name{} can be computed and visualised directly on real images, independently of any benchmark.
To our knowledge, \name{} is the first perceptual distance to transfer a network's output sensitivity back to its early features through a single fixed metric. Contributions:
\begin{tightlist}\interlinepenalty=10000
\item We introduce a novel perceptual distance derived entirely from the output sensitivity of a frozen vision encoder, without human labels. Its lens term defines a pseudometric whose local form is a pullback metric in pixel space (Section~\ref{sec:method}).

\item We analyse the induced geometry directly on real images. The iso-distance contours of \name{} agree with the ellipses predicted by the encoder, its sensitivity decreases with local contrast (Section~\ref{sec:geometry}).

\item We evaluate \name{} against 15 full-reference distances. \name{} achieves the SOTA performance, and beat every learned deep distance, and remains stable as image resolution doubles. \namefast{} is four times faster than LPIPS-VGG, and a single fitted lens transfers across 14 encoders and extends to video (Section~\ref{sec:analysis}).
\end{tightlist}

\section{Background}\label{sec:related}\label{sec:background}
A perceptual distance should assign small distances to images that look the same.
A feature distance $\|\Phi(x)-\Phi(y)\|_{\mathbf Q}$, for a feature map $\Phi$ and positive semidefinite matrix $\mathbf Q$, satisfies the metric axioms by construction, while its local geometry around an image is determined by the pullback $A^{\top}\mathbf QA$, where $A=\partial\Phi/\partial x$.
Existing feature distances differ mainly in how $\mathbf Q$ is chosen, raw feature distance, DeepWSD and DeepDC weight all channels equally \citep{liao2022deepwsd,zhu2025deepdc}, LPIPS learns a diagonal $\mathbf Q$ from human judgements \citep{zhang2018lpips}, DISTS fits weights on feature statistics \citep{ding2022dists}, and DreamSim fine-tunes the encoder on human triplets \citep{fu2023dreamsim}. Classical metrics derive the tensor from models of early vision, either through the perceptual metric of a divisive-normalisation transform \citep{epifanio2003metric,malo2006nonlinear,laparra2010divisive,laparra2016nlpd} or through the Fisher information of a response model, with eigen-distortions predicting the most and least visible directions \citep{berardino2017eigen,feather2025principal,vacher2024fisher}; both produce an image-dependent matrix in pixel space.
IEM integrates differences between denoising errors under a learned density \citep{ohayon2026iem}, while LASI fits linear embeddings at inference time \citep{severo2024lasi}.

Pullback metrics through neural networks are established in other settings, active subspaces based on $\mathbb E[\nabla f\nabla f^{\top}]$ \citep{constantine2014active}, decoder-induced metrics on latent spaces \citep{arvanitidis2018oddity}, the pullback Fisher metric for steering language models \citep{wang2026fishback}, and the Jacobian lens of \citet{gurnee2026workspace}, which decodes averaged intermediate-to-final Jacobians through the vocabulary. \name{} differs from these approaches by deriving the tensor from a frozen encoder's own output, rather than from labels, a hand-designed vision model, or a learned density, and by fixing that tensor on early features so that it is shared across images; to our knowledge, no perceptual distance has been constructed this way.

\section{The Jacobian lens distance}\label{sec:method}
% arXiv layout: fig:geo-contours is input in sections/geometry.tex, next to its first reference.
% \input{sections/figs/fig_geo_contours}

\subsection{Encoder and features}\label{sec:setup}\label{sec:model}
A perceptual distance needs two properties that a single layer of a vision encoder does not provide together. Early patch features are local and respond to every change in the image, but they weight all changes alike. Late representations separate the changes that matter from those that do not, but they have lost the spatial detail on which low-level distortions act. \name{} takes the features from an early block and the weighting from the encoder output, and the Jacobian is the object that connects the two.

\name{} compares two aligned images using a single frozen encoder, DINOv2-S (ViT-S/14), pretrained without human judgements \citep{oquab2024dinov2}.
An RGB image $x\in[0,1]^{3\times H\times W}$ with sides divisible by $p=14$ produces $N=HW/p^{2}$ patch positions.
At position $t$, we take the patch token $\tok{t}(x)\in\mathbb{R}^{d}$, with $d=384$, after block 1 of 12.
The final layer-normalised class token $\out(x)\in\mathbb{R}^{m}$ is the output whose sensitivity defines the metric.
Images are center cropped to a multiple of 14 pixels, so the same lens applies at every image size. A lens can be fitted after any block; on the TID2013 pairs, it improves agreement with human scores at every depth, with the largest gain at block 1: 0.850, compared with 0.803 on the patch embedding, 0.816 after block 2, and at most 0.780 from block 3 onward (Appendix Figure~\ref{fig:A2-depth}). After one attention block, each patch has interacted once with its neighbours, so the tokens retain local structure while already incorporating context. Block 1 is therefore the natural place to measure distortions, and the lens supplies the weighting that these early tokens lack.

\subsection{Sensitivity as a metric tensor}\label{sec:lens}
Raw feature distance weights all $d$ directions equally, even though many have little effect on what the encoder computes. We replace this uniform weighting with the encoder's own sensitivity. The Jacobian $J_t(x)=\partial\out(x)/\partial\tok{t}(x)\in\mathbb{R}^{m\times d}$ maps a small change $\delta$ at patch $t$ to the first-order output change $J_t(x)\delta$. Averaging its squared response over images and positions gives a single fixed matrix,
\begin{equation}
\M=\mathbb E_x\Big[\frac1N\sum_{t=1}^N J_t(x)^{\top}J_t(x)\Big]=\sum_{i=1}^d\mu_iu_iu_i^{\top},
\qquad \U_k=[u_1\,\cdots\,u_k],
\label{eq:sensitivity}
\end{equation}
\label{def:sensitivity}%
where $\mu_1\geq\dots\geq\mu_d\geq0$. The matrix $\M$ is positive semidefinite, and $\delta^{\top}\M\delta$ gives the expected squared output change caused by moving one patch by $\delta$.
Its eigenvectors order the feature directions by how strongly the encoder output responds to them: a displacement along $u_1$ changes the output the most, and a displacement along $u_d$ leaves it almost unchanged.
We call the orthonormal matrix $\U_k$, with $k=64$, the Jacobian lens.
Among all $k$-dimensional subspaces, it retains the most sensitivity, $\operatorname{tr}(\U_k^{\top}\M\U_k)$, while any displacement orthogonal to it changes the output by at most $\sqrt{\mu_{k+1}}\,\|\delta\|$ in RMS (Proposition~\ref{prop:optimality}).
The 64 retained directions account for 76.5\% of the trace (Appendix~\ref{app:method-details}). The lens is thus the optimal rank-$k$ summary of what the encoder attends to, and it is a property of the encoder alone: one matrix, shared by every image, position and resolution.

% Worker A2 (v23.4): compact main-text algorithm; the appendix copy (alg:jld) is commented out (v23.5).
\begin{wrapfigure}[16]{r}{0.47\linewidth}
\vspace{-12pt}
\begin{minipage}{\linewidth}
\hrule height .8pt\vspace{2pt}
\captionof{algorithm}{\name{}}\label{alg:A2-jld}
\vspace{-4pt}\hrule\vspace{1pt}
\small
\begin{algorithmic}[1]
\Statex \textbf{Fit} (unlabelled crops $\mathcal X$, $P$ probes, rank $k$)
\State $\hat\M\gets 0$
\For{$x\in\mathcal X$, $v\sim\mathcal N(0,I_m)$, $P$ times}
\State $g_t\gets\nabla_{\tok{t}}\,v^{\top}\out(x)$ for all $t$
\State $\hat\M\gets\hat\M+\frac{1}{N|\mathcal X|P}\sum_t g_tg_t^{\top}$
\EndFor
\State $\U_k\gets$ top-$k$ eigenvectors of $\hat\M$
\Statex \textbf{Score} (images $x$, $y$)
\State $\tilde x,\tilde y\gets\pre x,\pre y$ \Comment{Eq.~\ref{eq:A2-prefilter}}
\State $\Delta_t\gets\U_k^{\top}\big(\tok{t}(\tilde x)-\tok{t}(\tilde y)\big)$ for all $t$
\State $c\gets 1-\cos\big(\out(\tilde x),\out(\tilde y)\big)$
\State \Return $\big(\frac1N\sum_t\|\Delta_t\|^2\big)^{1/2}+\frac12 c$
\end{algorithmic}
\vspace{1pt}\hrule height .8pt
\end{minipage}
\vspace{-8pt}
\end{wrapfigure}

Fitting the lens requires no explicit Jacobian matrices (Appendix Figure~\ref{fig:A-lens}; Algorithm~\ref{alg:A2-jld}).
A Gaussian probe $v\sim\mathcal N(0,I_m)$ on the output, backpropagated once, gives $g_t=\nabla_{\tok{t}}(v^{\top}\out)=J_t^{\top}v$ at every patch, and $\mathbb E_v[g_tg_t^{\top}]=J_t^{\top}\mathbb E_v[vv^{\top}]J_t=J_t^{\top}J_t$; therefore, averaging $N^{-1}\sum_t g_tg_t^{\top}$ over crops and probes gives an unbiased estimate of $\M$ (Proposition~\ref{prop:estimation}).
Using four $224\times224$ crops from each of 100 unlabelled DIV2K images \citep{agustsson2017div2k} and eight probes per crop, the fit takes 35\,s on one A100 GPU and uses no image pairs, distortions, or human labels (Appendix~\ref{app:fit}). The fit is stable: independent refits with fresh crops and probes recover the same subspace, with overlap 0.962 (Appendix~\ref{app:md-stability}).
Human data are used only to select among 245 configurations on CSIQ and KADID-10k development splits (Appendix~\ref{app:sweep}).
The resulting lens displacement also tracks the encoder output far better than raw block-1 features (Appendix Figure~\ref{fig:A-lens}).

\subsection{The distance}\label{sec:distance}\label{sec:prefilter}\label{sec:csf}\label{sec:properties}\label{sec:card}
Human contrast sensitivity falls faster with spatial frequency for colour than for luminance \citep{campbell1968csf,mullen1985}. Following S-CIELAB \citep{zhang1996scielab}, we use a fixed pre-filter that blurs chroma while preserving luminance, 
\begin{equation}
\pre x=T^{-1}\big(Y,\ G_\sigma*C_b,\ G_\sigma*C_r\big),\qquad (Y,C_b,C_r)=T(x),\qquad \sigma=2\text{ pixels},
\label{eq:A2-prefilter}
\end{equation}
where $T$ maps RGB to YCbCr and $G_\sigma$ is a Gaussian. Both filtered images are passed through the encoder on the same patch grid, and
\begin{equation}
\dist(x,y)=
\underbrace{\Big[\frac1N\sum_{t=1}^N
\big\|\U_k^{\top}\big(\tok{t}(\pre x)-\tok{t}(\pre y)\big)\big\|^2\Big]^{1/2}}_{\text{lens term } \dist_{\rm lens}(x,y)}
+\underbrace{\tfrac12\big[1-\cos\big(\out(\pre x),\out(\pre y)\big)\big]}_{\text{output term}}.
\label{eq:lens-distance}
\end{equation}
\label{eq:final-jld}\label{def:lens-distance}\label{sec:output}%
The two terms are complementary. The lens term measures where and how strongly the image changed, patch by patch, along the directions the encoder is sensitive to, and its per-patch values produce the lens maps in Figure~\ref{fig:motivation}. The output term captures whole-image changes, such as global colour shifts, that are spread too thinly over patches to dominate the lens term. Development data fixed its weight at $1/2$ (Appendix~\ref{app:output-weight}).
The lens term is euclidean distance after a fixed feature map, $\dist_{\rm lens}(x,y)=\|\Phi(x)-\Phi(y)\|_2$, where $\Phi$ stacks the lensed tokens $N^{-1/2}\U_k^{\top}\tok{t}(\pre x)$. We call it pseudometric: it is non-negative, symmetric, zero on identical images, and obeys the triangle inequality (Proposition~\ref{prop:metric}).
The output term need not satisfy the triangle inequality, and we empirically verify that violations by the full distance are rare (Section~\ref{sec:geometry}).
\namefast{} keeps the pre-filter and lens term but stops the encoder after block 1, reducing the cost per pair from 118.5 to 10.7 GFLOPs; \namelin{} applies the lens to the linear patch embedding (Appendix~\ref{app:visualisation}).
Scoring requires two forward passes and no learned parameters beyond the frozen encoder and the fixed lens, and every operation is differentiable, so \name{} can also be used as a loss function. 

Near an image, the lens term becomes a quadratic form in pixel space: with $A_t(x)=\partial\tok{t}(\pre x)/\partial x$,
\begin{equation}
\dist_{\rm lens}(x,x+\varepsilon)^2=\varepsilon^{\top}G(x)\,\varepsilon+o(\|\varepsilon\|^2),
\qquad
G(x)=\frac1N\sum_{t=1}^N A_t(x)^{\top}\,\U_k\U_k^{\top}A_t(x)
\label{eq:A2-pullback}
\end{equation}
(Proposition~\ref{prop:local}). $G(x)$ is the pullback of the fixed lens through the encoder and therefore varies with the image. The output term contributes only $O(\|\varepsilon\|^2)$, so Equation~\ref{eq:A2-pullback} also gives the leading local behaviour of $\dist$.

\IfFileExists{sections/geometry.tex}{\section{Geometry of the distance}\label{sec:geometry}
% arXiv layout: contours first (Figure 2), slices before Section 4.2 (Figure 3).
% Worker A2 (v23.4). Numbers: figures/v234/A2_geo_contours_numbers.json; script scripts/v234/A2_geo_contours.py
% (cached grids and rings of scripts/v23/geo_contours.py and scripts/v233/D1_geo_contours2.py).
\begin{figure}[t]
\centering
\includegraphics[width=\linewidth]{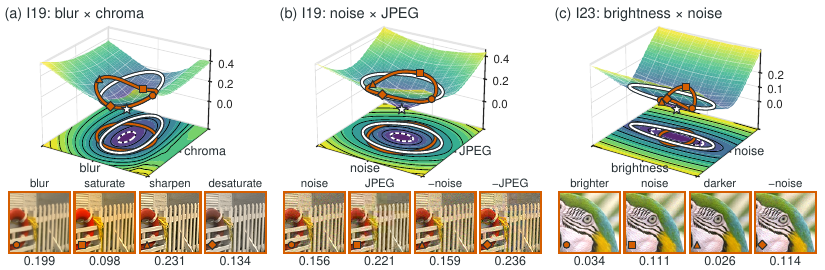}
\vspace{-16pt}
\caption{\name{} on three planes spanned by two distortion directions of equal pixel energy around a reference (star), with the ellipse predicted by the pullback $G$ (dashed). White ring: constant \name{}; orange ring: constant MSE (25.1, 24.0 and 29.7\,dB), whose four marked images are shown below with their \name{}.}
\label{fig:geo-contours}
% \vspace{-6pt}% arXiv layout: dropped, body text follows the caption directly
\end{figure}

In this section, we examine the geometry that the \name{} induces on image space. Because \name{} is an explicit function of a frozen encoder and a fixed projection, we can study its geometry directly on real images, before comparing it with human judgements.
\label{sec:geo-axioms}
As guaranteed by Section~\ref{sec:distance}, the lens term satisfies the triangle inequality on all 171.4 million same-reference triplets from four databases; the full distance violates it on at most 107 triplets per million, with a maximum violation of 3.6\%.
\label{sec:geo-spectrum}%
A single lens also generalises across images, a lens fitted to one image overlaps the shipped lens by a median 0.72, comparable to the 0.69 overlap between two fits of the same image using different methods (Appendix~\ref{app:geo-local-lens}).

\subsection{The local shape of the distance}\label{sec:geo-local}
Near an image $x$, Equation~\ref{eq:A2-pullback} gives $\dist(x,x+\varepsilon)\approx(\varepsilon^{\top}G(x)\,\varepsilon)^{1/2}$.
With the eigen decomposition $G(x)=\sum_j\lambda_j e_je_j^{\top}$, $\lambda_1\geq\lambda_2\geq\dots\geq0$, perturbations at distance $r$ form, to first order, $\{\varepsilon:\varepsilon^{\top}G(x)\varepsilon=r^2\}$, whose half-axis along $e_j$ has length $r/\sqrt{\lambda_j}$.
Directions with large $\lambda_j$ are fixed, small pixel changes along them are registered strongly. Directions with small $\lambda_j$ are soft and are tolerated more. For MSE, $G\propto I$, so the balls are spheres and all changes with equal energy are treated equally. Writing $\varepsilon^{\top}G(x)\varepsilon=\frac1N\sum_t\|\U_k^{\top}A_t(x)\varepsilon\|^2$ makes the two steps explicit, the lienarized first block $A_t(x)$ maps a pixel change to a token change that depends on local image content, and the fixed lens keeps only the component of that change that matters to the output.
A pixel change can therefore be nearly invisible to \name{} in two ways: it may barely move the tokens, or it may move them mostly along directions that the output ignores. Based on our observation, the second case is more common.
Each term in $G(x)$ has rank at most $k$, so $\operatorname{rank}G(x)\leq kN=64N$, while the image contains $3p^2N=588N$ pixel values; therefore, at least 89\% of pixel directions leave the lens term unchanged to first order at every image. Thus, \name{} concentrates its sensitivity on an encoder-selected subspace that changes with the image.

Figure~\ref{fig:geo-contours} restricts the distance to planes $\varepsilon=s\,a+t\,b$ spanned by two distortion directions $a,b$ with equal pixel energy.
On such a plane, $G$ reduces to the $2\times2$ matrix $[a\;b]^{\top}G(x)[a\;b]$, whose contours are ellipses with half-axis ratio $(\lambda_{\max}/\lambda_{\min})^{1/2}$, which measures their anisotropy.
Around image I19 (TID2013) (Figure~\ref{fig:geo-contours}a), MSE produces nearly circular contours (radius ratio 1.06), whereas \name{} produces ellipses 2.33 times longer along chroma than blur; LPIPS-VGG and DISTS give ratios of 1.61 and 1.64. The encoder predicts this shape well: the restricted pullback has anisotropy 2.07, its fixed axis lies 3$^\circ$ from blur, and its ellipse matches the innermost contour within 9.2\% in radius. Blur is therefore a fixed direction, while an equal-energy chroma change is soft, consistent with the lower spatial acuity of human vision for colour \citep{mullen1985}. The same prediction holds on the other planes, JPEG is stiff relative to noise (anisotropy 1.62, within 10.8\%), while on I23 (TID2013) brightness is soft relative to noise (3.14).
Across the three rings of equal MSE, \name{} varies by factors of 2.4, 1.5, and 4.4, showing that equal pixel error can hide large differences in perceptual severity. When evaluated in small cells across a photograph, $G$ becomes smaller in textured regions, where people also detect faint patterns less reliably (masking; SRCC $-0.86$ against local contrast, Appendix Figure~\ref{fig:geo-field}).

% Owner: worker A (v23.3). Numbers: figures/v233/A_geo_slices_numbers.json; script scripts/v233/A_geo_slices.py
% (the v23 figure, scripts/v23/geo_slices.py, with a level-5 table of PSNR and MOS from results/baselines_fr_tid2013_official.csv).
\begin{figure}[t]
\centering
\includegraphics[width=0.9\linewidth]{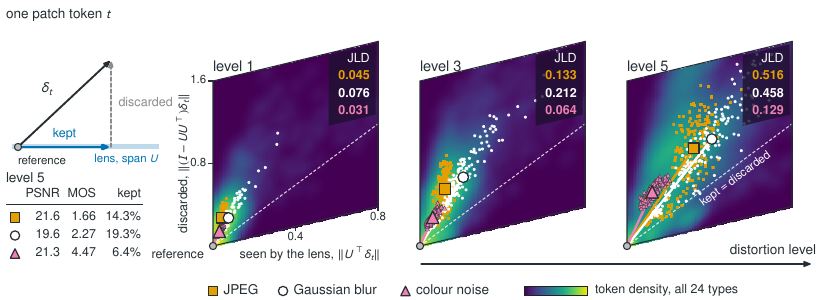}% arXiv layout: was 0.8
\vspace{-4pt}
\caption{Three distortions of I19 (TID2013) at levels 1, 3 and 5, per patch. Block-1 displacement the lens keeps (horizontal) against the part it discards (vertical). Markers sit at each family's RMS coordinates; their horizontal position is the lens term.}
\label{fig:geo-slices}
% \vspace{-6pt}% arXiv layout: dropped, body text follows the caption directly
\end{figure}

\subsection{Distortions through the lens}\label{sec:geo-paths}
For a finite distortion, each block-1 token moves by $\Delta_t=\tok{t}(\pre y)-\tok{t}(\pre x)$, which splits orthogonally into a kept part $\U_k\U_k^{\top}\Delta_t$ and a discarded part $(I-\U_k\U_k^{\top})\Delta_t$, with $\|\Delta_t\|^2=\|\U_k^{\top}\Delta_t\|^2+\|(I-\U_k\U_k^{\top})\Delta_t\|^2$.
The lens term measures the RMS of the kept component, whereas raw feature distance weighs both components equally.
A displacement with no preferred direction would retain $k/d=64/384$, or 16.7\%, of its squared norm; a smaller fraction means that the distortion moves the tokens mainly along directions ignored by the output.
Figure~\ref{fig:geo-slices} shows the kept and discarded components for every patch.
At level 5 on I19 (TID2013), JPEG and colour noise have nearly identical PSNR (21.6 and 21.3\,dB), but very different MOS values, 1.66 and 4.47. The lens retains 14.3\% of the JPEG displacement but only 6.4\% of the noise displacement, making \name{} four times larger for JPEG (0.516 versus 0.129), in agreement with human judgements.
Colour noise moves the tokens substantially, but mostly along discarded directions; JPEG block edges move them along directions used by the output. Across all 120 distortions of I19 (TID2013), the lens retains 9.6\% of the displacement energy; across all 24 distortion types, the median is 10.6\% (Appendix~\ref{app:geo-fingerprint}), well below the isotropic share. The lens therefore removes most block-1 feature motion; raw block-1 features count this discarded motion in full and rank the five images in Figure~\ref{fig:motivation} almost at random.

}{}
\section{Predicting human judgements}\label{sec:experiments}\label{sec:analysis}
In this section, we evaluate \name{} against 15 full-reference distances on human judgements of images and videos. Because no human label enters the lens fit, every result below is a direct test of the geometry developed in Sections~\ref{sec:method} and~\ref{sec:geometry}, not of a fit to the benchmarks.

\subsection{Image similarity benchmarks}\label{sec:fr}\label{sec:cost}\label{sec:selection}\label{sec:study}\label{sec:qual}
% v24: ranking marks and the IEM dagger are in the caption; cost protocol is in Section sec:fr.
% v23.3 (worker B): Mean column from figures/v233/B_numbers.json (table_main_mean; results/table_main_v2.csv); ourMetrics darkened to E4DAF2.
% Generated by scripts/70_tables_v5.py; v22 hand edits (W7): caption, correlation heading, first column heading.
% Cost protocol: results/notes/cost.md; inputs=512x384, IEM=256x256, A100, batch=1.
% Costs count visible tensor operations; values below 0.05 GFLOPs print as <0.1.
% Cell subset n=2032; exact=2032.0; source: results/table_main_v2.csv:24, row=iem_n, column=kadid10k_test.
% Cell subset n=250; exact=250.0; source: results/table_main_v2.csv:24, row=iem_n, column=pipal_val.
\begin{table}[t]
\centering\small
\setlength{\tabcolsep}{3pt}
\renewcommand{\arraystretch}{1.03}
\caption{Image distances compared on agreement with human ratings and on cost per pair. \name{} uses no human label, has the best mean over the datasets of every distance. \textbf{Bold} marks the best distance in a column; \underline{underlining} marks the second. $\dagger$ marks IEM results from subsets of $2{,}032$ KADID and $250$ PIPAL pairs.}
\label{tab:main}
\colorlet{humanJudgements}{gray!7}
\definecolor{ourMetrics}{HTML}{E4DAF2}
\begin{tabular*}{\linewidth}{@{\extracolsep{\fill}}lrrrrrrrr@{}}
\toprule
& \multicolumn{2}{c}{Cost per pair $\downarrow$} & \multicolumn{6}{c}{Spearman correlation with human ratings $\uparrow$} \\
\cmidrule(lr){2-3}\cmidrule(l){4-9}
Method & GFLOPs & ms & \shortstack{TID2013\\(diag.)} & \shortstack{CSIQ\\(dev.)} & \shortstack{LIVE\\test} & \shortstack{KADID-10k\\test} & \shortstack{\textbf{Mean}\\first four} & \shortstack{PIPAL\\val} \\
\midrule
\multicolumn{9}{@{}l}{\textbf{Learned deep distances, fitted to human judgements (similarity choices or quality scores)}} \\
% Cell GFLOPs=240.6; exact=240.582918144; source: results/cost.csv:13, metric=lpips_vgg, column=gflops_per_pair.
% Cell ms=10.9; exact=10.914095502812415; source: results/cost.csv:13, metric=lpips_vgg, column=gpu_ms_per_pair.
% Cell tid2013=0.670; exact=0.6695960547701812; source: results/table_main_v2.csv:13, row=lpips_vgg, column=tid2013.
% Cell csiq=0.883; exact=0.8830028532982714; source: results/table_main_v2.csv:13, row=lpips_vgg, column=csiq.
% Cell live=0.932; exact=0.9318494120383872; source: results/table_main_v2.csv:13, row=lpips_vgg, column=live.
% Cell kadid10k_test=0.724; exact=0.7238198259209079; source: results/table_main_v2.csv:13, row=lpips_vgg, column=kadid10k_test.
% Cell pipal_val=0.612; exact=0.6123784083784084; source: results/table_main_v2.csv:13, row=lpips_vgg, column=pipal_val.
\rowcolor{humanJudgements}LPIPS-VGG & 240.6 & 10.9 & 0.670 & 0.883 & 0.932 & 0.724 & 0.802 & 0.612 \\
% Cell GFLOPs=240.7; exact=240.747282432; source: results/cost.csv:15, metric=dists, column=gflops_per_pair.
% Cell ms=9.9; exact=9.936597023624929; source: results/cost.csv:15, metric=dists, column=gpu_ms_per_pair.
% Cell tid2013=0.818; exact=0.8178180388400073; source: results/table_main_v2.csv:15, row=dists, column=tid2013.
% Cell csiq=0.943; exact=0.9427767318284221; source: results/table_main_v2.csv:15, row=dists, column=csiq.
% Cell live=0.954; exact=0.9535764376250466; source: results/table_main_v2.csv:15, row=dists, column=live.
% Cell kadid10k_test=0.885; exact=0.8853784949063407; source: results/table_main_v2.csv:15, row=dists, column=kadid10k_test.
% Cell pipal_val=0.704; exact=0.70394017994018; source: results/table_main_v2.csv:15, row=dists, column=pipal_val.
\rowcolor{humanJudgements}DISTS & 240.7 & 9.9 & 0.818 & 0.943 & 0.954 & 0.885 & 0.900 & 0.704 \\
% Cell GFLOPs=385.7; exact=385.679253504; source: results/cost.csv:16, metric=pieapp, column=gflops_per_pair.
% Cell ms=16.5; exact=16.537297051399946; source: results/cost.csv:16, metric=pieapp, column=gpu_ms_per_pair.
% Cell tid2013=0.844; exact=0.8437660024619354; source: results/table_main_v2.csv:16, row=pieapp, column=tid2013.
% Cell csiq=0.897; exact=0.8968451251988768; source: results/table_main_v2.csv:16, row=pieapp, column=csiq.
% Cell live=0.918; exact=0.9182120141542843; source: results/table_main_v2.csv:16, row=pieapp, column=live.
% Cell kadid10k_test=0.864; exact=0.8643840757697688; source: results/table_main_v2.csv:16, row=pieapp, column=kadid10k_test.
% Cell pipal_val=0.706; exact=0.7062682782682783; source: results/table_main_v2.csv:16, row=pieapp, column=pipal_val.
\rowcolor{humanJudgements}PieAPP & 385.7 & 16.5 & 0.844 & 0.897 & 0.918 & 0.864 & 0.881 & 0.706 \\
% Cell GFLOPs=212.2; exact=212.154212352; source: results/cost.csv:18, metric=dreamsim, column=gflops_per_pair.
% Cell ms=51.3; exact=51.26894399290904; source: results/cost.csv:18, metric=dreamsim, column=gpu_ms_per_pair.
% Cell tid2013=0.812; exact=0.8119637299627569; source: results/table_main_v2.csv:17, row=dreamsim, column=tid2013.
% Cell csiq=0.911; exact=0.9114138879209581; source: results/table_main_v2.csv:17, row=dreamsim, column=csiq.
% Cell live=0.910; exact=0.9099194368953291; source: results/table_main_v2.csv:17, row=dreamsim, column=live.
% Cell kadid10k_test=0.851; exact=0.8505368950786634; source: results/table_main_v2.csv:17, row=dreamsim, column=kadid10k_test.
% Cell pipal_val=\textbf{0.759}; exact=0.7589268029268029; source: results/table_main_v2.csv:17, row=dreamsim, column=pipal_val.
\rowcolor{humanJudgements}DreamSim & 212.2 & 51.3 & 0.812 & 0.911 & 0.910 & 0.851 & 0.871 & \textbf{0.759} \\
\midrule
\multicolumn{9}{@{}l}{\textbf{Distances not fitted to human judgements}} \\
% Cell GFLOPs=\textbf{$<$0.1}; exact=0.0; source: results/cost.csv:4, metric=psnr, column=gflops_per_pair.
% Cell ms=\textbf{1.4}; exact=1.4140065177343786; source: results/cost.csv:4, metric=psnr, column=gpu_ms_per_pair.
% Cell tid2013=0.687; exact=0.6869111186179578; source: results/table_main_v2.csv:2, row=psnr, column=tid2013.
% Cell csiq=0.809; exact=0.8086909105297525; source: results/table_main_v2.csv:2, row=psnr, column=csiq.
% Cell live=0.873; exact=0.872970661502979; source: results/table_main_v2.csv:2, row=psnr, column=live.
% Cell kadid10k_test=0.673; exact=0.6730671670055804; source: results/table_main_v2.csv:2, row=psnr, column=kadid10k_test.
% Cell pipal_val=0.255; exact=0.2547583467583468; source: results/table_main_v2.csv:2, row=psnr, column=pipal_val.
PSNR & \textbf{$<$0.1} & \textbf{1.4} & 0.687 & 0.809 & 0.873 & 0.673 & 0.760 & 0.255 \\
% Cell GFLOPs=0.2; exact=0.234252968; source: results/cost.csv:5, metric=ssim, column=gflops_per_pair.
% Cell ms=2.7; exact=2.678551943972707; source: results/cost.csv:5, metric=ssim, column=gpu_ms_per_pair.
% Cell tid2013=0.627; exact=0.6269215690600412; source: results/table_main_v2.csv:3, row=ssim, column=tid2013.
% Cell csiq=0.837; exact=0.8366574573995449; source: results/table_main_v2.csv:3, row=ssim, column=csiq.
% Cell live=0.910; exact=0.9102790599554298; source: results/table_main_v2.csv:3, row=ssim, column=live.
% Cell kadid10k_test=0.621; exact=0.6212832943376392; source: results/table_main_v2.csv:3, row=ssim, column=kadid10k_test.
% Cell pipal_val=0.363; exact=0.3634878394878395; source: results/table_main_v2.csv:3, row=ssim, column=pipal_val.
SSIM & \underline{0.2} & 2.7 & 0.627 & 0.837 & 0.910 & 0.621 & 0.749 & 0.363 \\
% Cell GFLOPs=0.3; exact=0.303561768; source: results/cost.csv:6, metric=ms_ssim, column=gflops_per_pair.
% Cell ms=5.1; exact=5.118806497193873; source: results/cost.csv:6, metric=ms_ssim, column=gpu_ms_per_pair.
% Cell tid2013=0.786; exact=0.7859331139143848; source: results/table_main_v2.csv:4, row=ms_ssim, column=tid2013.
% Cell csiq=0.913; exact=0.9131442004422196; source: results/table_main_v2.csv:4, row=ms_ssim, column=csiq.
% Cell live=0.951; exact=0.9508257413297416; source: results/table_main_v2.csv:4, row=ms_ssim, column=live.
% Cell kadid10k_test=0.826; exact=0.825911654586731; source: results/table_main_v2.csv:4, row=ms_ssim, column=kadid10k_test.
% Cell pipal_val=0.491; exact=0.4906342546342547; source: results/table_main_v2.csv:4, row=ms_ssim, column=pipal_val.
MS-SSIM & 0.3 & 5.1 & 0.786 & 0.913 & 0.951 & 0.826 & 0.869 & 0.491 \\
% Cell GFLOPs=$<$0.1; exact=0.005308416; source: results/cost.csv:7, metric=fsim, column=gflops_per_pair.
% Cell ms=22.8; exact=22.846897481940687; source: results/cost.csv:7, metric=fsim, column=gpu_ms_per_pair.
% Cell tid2013=0.851; exact=0.8509237604556346; source: results/table_main_v2.csv:5, row=fsim, column=tid2013.
% Cell csiq=0.931; exact=0.9308888220293462; source: results/table_main_v2.csv:5, row=fsim, column=csiq.
% Cell live=\textbf{0.965}; exact=0.9646174453827938; source: results/table_main_v2.csv:5, row=fsim, column=live.
% Cell kadid10k_test=0.853; exact=0.8525771834733207; source: results/table_main_v2.csv:5, row=fsim, column=kadid10k_test.
% Cell pipal_val=0.468; exact=0.46784999525354526; source: results/table_main_v2.csv:5, row=fsim, column=pipal_val.
FSIM & \textbf{$<$0.1} & 22.8 & 0.851 & 0.931 & \textbf{0.965} & 0.853 & 0.900 & 0.468 \\
% Cell GFLOPs=$<$0.1; exact=0.015335424; source: results/cost.csv:8, metric=vsi, column=gflops_per_pair.
% Cell ms=9.9; exact=9.872902475763112; source: results/cost.csv:8, metric=vsi, column=gpu_ms_per_pair.
% Cell tid2013=\textbf{0.895}; exact=0.8949480473399499; source: results/table_main_v2.csv:6, row=vsi, column=tid2013.
% Cell csiq=0.940; exact=0.9397328817046876; source: results/table_main_v2.csv:6, row=vsi, column=csiq.
% Cell live=0.949; exact=0.9489654338574829; source: results/table_main_v2.csv:6, row=vsi, column=live.
% Cell kadid10k_test=0.876; exact=0.8763342263535929; source: results/table_main_v2.csv:6, row=vsi, column=kadid10k_test.
% Cell pipal_val=0.450; exact=0.4499776513275843; source: results/table_main_v2.csv:6, row=vsi, column=pipal_val.
VSI & \textbf{$<$0.1} & 9.9 & \textbf{0.895} & 0.940 & 0.949 & 0.876 & \underline{0.915} & 0.450 \\
% Cell GFLOPs=$<$0.1; exact=0.011403264; source: results/cost.csv:9, metric=gmsd, column=gflops_per_pair.
% Cell ms=2.6; exact=2.5983754894696176; source: results/cost.csv:9, metric=gmsd, column=gpu_ms_per_pair.
% Cell tid2013=0.804; exact=0.8043827011850794; source: results/table_main_v2.csv:7, row=gmsd, column=tid2013.
% Cell csiq=0.957; exact=0.95696941737002; source: results/table_main_v2.csv:7, row=gmsd, column=csiq.
% Cell live=0.960; exact=0.9602495815006324; source: results/table_main_v2.csv:7, row=gmsd, column=live.
% Cell kadid10k_test=0.849; exact=0.8486989796577927; source: results/table_main_v2.csv:7, row=gmsd, column=kadid10k_test.
% Cell pipal_val=0.585; exact=0.5847956727956728; source: results/table_main_v2.csv:7, row=gmsd, column=pipal_val.
GMSD & \textbf{$<$0.1} & 2.6 & 0.804 & 0.957 & 0.960 & 0.849 & 0.893 & 0.585 \\
% Cell GFLOPs=$<$0.1; exact=0.049952464; source: results/cost.csv:11, metric=nlpd, column=gflops_per_pair.
% Cell ms=15.6; exact=15.560497005935758; source: results/cost.csv:11, metric=nlpd, column=gpu_ms_per_pair.
% Cell tid2013=0.799; exact=0.7993513266405812; source: results/table_main_v2.csv:9, row=nlpd, column=tid2013.
% Cell csiq=0.937; exact=0.9366452783775855; source: results/table_main_v2.csv:9, row=nlpd, column=csiq.
% Cell live=0.937; exact=0.9366972283652567; source: results/table_main_v2.csv:9, row=nlpd, column=live.
% Cell kadid10k_test=0.811; exact=0.8112403472101343; source: results/table_main_v2.csv:9, row=nlpd, column=kadid10k_test.
% Cell pipal_val=0.370; exact=0.37020814620814624; source: results/table_main_v2.csv:9, row=nlpd, column=pipal_val.
NLPD & \textbf{$<$0.1} & 15.6 & 0.799 & 0.937 & 0.937 & 0.811 & 0.871 & 0.370 \\
% Cell GFLOPs=60.2; exact=60.189179904; source: results/cost.csv:19, metric=deepwsd, column=gflops_per_pair.
% Cell ms=13.5; exact=13.5155440075323; source: results/cost.csv:19, metric=deepwsd, column=gpu_ms_per_pair.
% Cell tid2013=0.855; exact=0.8552530065285766; source: results/table_main_v2.csv:11, row=deepwsd, column=tid2013.
% Cell csiq=0.959; exact=0.9589637541571434; source: results/table_main_v2.csv:11, row=deepwsd, column=csiq.
% Cell live=0.955; exact=0.9550590482305559; source: results/table_main_v2.csv:11, row=deepwsd, column=live.
% Cell kadid10k_test=0.882; exact=0.8817277468423692; source: results/table_main_v2.csv:11, row=deepwsd, column=kadid10k_test.
% Cell pipal_val=0.460; exact=0.46049852849852846; source: results/table_main_v2.csv:11, row=deepwsd, column=pipal_val.
DeepWSD & 60.2 & 13.5 & 0.855 & \underline{0.959} & 0.955 & 0.882 & 0.913 & 0.460 \\
% Cell GFLOPs=326.4; exact=326.356697088; source: results/cost.csv:17, metric=deepdc, column=gflops_per_pair.
% Cell ms=16.1; exact=16.072013997472823; source: results/cost.csv:17, metric=deepdc, column=gpu_ms_per_pair.
% Cell tid2013=0.816; exact=0.816085021125814; source: results/table_main_v2.csv:12, row=deepdc, column=tid2013.
% Cell csiq=0.954; exact=0.9536247776165595; source: results/table_main_v2.csv:12, row=deepdc, column=csiq.
% Cell live=0.950; exact=0.9503388908554229; source: results/table_main_v2.csv:12, row=deepdc, column=live.
% Cell kadid10k_test=\textbf{0.900}; exact=0.8996447545161771; source: results/table_main_v2.csv:12, row=deepdc, column=kadid10k_test.
% Cell pipal_val=0.751; exact=0.7511368991368992; source: results/table_main_v2.csv:12, row=deepdc, column=pipal_val.
DeepDC & 326.4 & 16.1 & 0.816 & 0.954 & 0.950 & \textbf{0.900} & 0.905 & \underline{0.751} \\
% Cell GFLOPs=0.3; exact=0.339738624; source: results/cost.csv:20, metric=lasi, column=gflops_per_pair.
% Cell ms=377.6; exact=377.6092884945683; source: results/cost.csv:20, metric=lasi, column=gpu_ms_per_pair.
% Cell tid2013=0.522; exact=0.5216480838070444; source: results/table_main_v2.csv:18, row=lasi, column=tid2013.
% Cell csiq=0.840; exact=0.8400458639998005; source: results/table_main_v2.csv:18, row=lasi, column=csiq.
% Cell live=0.883; exact=0.8825029952828819; source: results/table_main_v2.csv:18, row=lasi, column=live.
% Cell kadid10k_test=0.723; exact=0.7231234797209626; source: results/table_main_v2.csv:18, row=lasi, column=kadid10k_test.
% Cell pipal_val=0.581; exact=0.5811168411168411; source: results/table_main_v2.csv:18, row=lasi, column=pipal_val.
LASI & 0.3 & 377.6 & 0.522 & 0.840 & 0.883 & 0.723 & 0.742 & 0.581 \\
% Cell GFLOPs=4,999.1; exact=4999.11643136; source: results/cost.csv:21, metric=iem, column=gflops_per_pair.
% Cell ms=725.7; exact=725.7345230318606; source: results/cost.csv:21, metric=iem, column=gpu_ms_per_pair.
% Cell tid2013=0.810; exact=0.8103680917006899; source: results/table_main_v2.csv:19, row=iem, column=tid2013.
% Cell csiq=0.917; exact=0.9174106366457856; source: results/table_main_v2.csv:19, row=iem, column=csiq.
% Cell live=0.922; exact=0.922002995098844; source: results/table_main_v2.csv:19, row=iem, column=live.
% Cell kadid10k_test=0.821$^{\dagger}$; exact=0.8206601110745053; source: results/table_main_v2.csv:19, row=iem, column=kadid10k_test.
% Cell pipal_val=0.182$^{\dagger}$; exact=0.1821652186434983; source: results/table_main_v2.csv:19, row=iem, column=pipal_val.
IEM$\dagger$ & 4,999.1 & 725.7 & 0.810 & 0.917 & 0.922 & 0.821$^{\dagger}$ & 0.868$^{\dagger}$ & 0.182$^{\dagger}$ \\
\midrule
\multicolumn{9}{@{}l}{\textbf{Ours: Label Free}} \\
% Cell GFLOPs=10.7; exact=10.720791552; source: results/cost.csv:2, metric=jld, column=gflops_per_pair.
% Cell ms=2.5; exact=2.525739546399564; source: results/cost.csv:2, metric=jld, column=gpu_ms_per_pair.
% Cell tid2013=0.875; exact=0.8745150738826878; source: results/table_main_v2.csv:21, row=jld_v2, column=tid2013.
% Cell csiq=0.956; exact=0.9558676422362274; source: results/table_main_v2.csv:21, row=jld_v2, column=csiq.
% Cell live=0.945; exact=0.9448069352546935; source: results/table_main_v2.csv:21, row=jld_v2, column=live.
% Cell kadid10k_test=0.870; exact=0.8704005316941635; source: results/table_main_v2.csv:21, row=jld_v2, column=kadid10k_test.
% Cell pipal_val=0.578; exact=0.578056178056178; source: results/table_main_v2.csv:21, row=jld_v2, column=pipal_val.
\rowcolor{ourMetrics}\namefast{} & 10.7 & \underline{2.5} & 0.875 & 0.956 & 0.945 & 0.870 & 0.911 & 0.578 \\
% Cell GFLOPs=118.5; exact=118.467293184; source: results/cost.csv:3, metric=jld_full12, column=gflops_per_pair.
% Cell ms=12.2; exact=12.249739957042038; source: results/cost.csv:3, metric=jld_full12, column=gpu_ms_per_pair.
% Cell tid2013=0.877; exact=0.8767106009025009; source: results/table_main_v2.csv:22, row=jld_v2p, column=tid2013.
% Cell csiq=\textbf{0.971}; exact=0.9708250295632062; source: results/table_main_v2.csv:22, row=jld_v2p, column=csiq.
% Cell live=0.964; exact=0.9638385267329496; source: results/table_main_v2.csv:22, row=jld_v2p, column=live.
% Cell kadid10k_test=0.892; exact=0.8916927527531087; source: results/table_main_v2.csv:22, row=jld_v2p, column=kadid10k_test.
% Cell pipal_val=0.624; exact=0.6242270522270523; source: results/table_main_v2.csv:22, row=jld_v2p, column=pipal_val.
\rowcolor{ourMetrics}\name{} & 118.5 & 12.2 & \underline{0.877} & \textbf{0.971} & \underline{0.964} & \underline{0.892} & \textbf{0.926} & 0.624 \\
\bottomrule
\end{tabular*}
\end{table}

We report Spearman rank correlation (\srcc{}) between each distance and human scores on TID2013, CSIQ, LIVE, KADID-10k and PIPAL \citep{ponomarenko2015tid2013,larson2010mad,sheikh2006live,lin2019kadid,gu2020pipal}.
LIVE, KADID-10k test references and PIPAL validation are held out; CSIQ and the KADID development references select the settings, while TID2013 informs the pre-filter. All methods are evaluated under one protocol (Appendix~\ref{app:sweep}). Table~\ref{tab:main} averages the first four databases. All costs are measured on an A100 using $512\times384$ inputs, except IEM at $256\times256$.

% v23.7 worker M7: compact main-text version of fig:choice (the label moved here from sections/figs/fig_choice.tex,
% whose input in sections/app_main_moved.tex is commented out). Re-plot only: scripts/v237/M7_choice.py reads
% figures/v23/fig_choice_numbers.json -> figures/v237/M7_choice.pdf; numbers in figures/v237/M7_numbers.json.
% v23.7b: the user asked for the original A14 figure (per dataset) in the main text; the compact wrap is kept below.
% \begin{wrapfigure}{r}{2.36in}
% \vspace{-1.2em}
% \centering
% \includegraphics[width=2.3in]{v237/M7_choice.pdf}
% \vspace{-0.6em}
% \caption{Agreement with 699{,}534 human choices, mean of four datasets. Grey: fitted to human judgements.}
% \label{fig:choice}
% \vspace{-1.2em}
% \end{wrapfigure}
\begin{figure}[t]% arXiv layout: was [b]
\centering
\includegraphics[width=\linewidth]{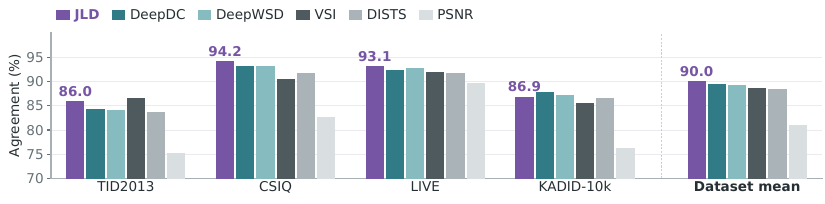}% arXiv layout: was 0.85
\vspace{-0.4em}
\caption{Agreement with 699{,}534 human same-reference choices, per dataset and as the mean over the four datasets.}
\label{fig:choice}
% \vspace{-0.8em}% arXiv layout: dropped, the float now sits above a subsection heading
\end{figure}

\name{} sets a new state of the art: it achieves the highest mean \srcc{} over TID2013, CSIQ, LIVE and KADID-10k test among all distances in Table~\ref{tab:main}, 0.926, versus 0.915 for VSI, 0.913 for DeepWSD, 0.905 for DeepDC and 0.900 for DISTS. Every gap has a 95\% interval over resampled references that excludes zero (Appendix~\ref{app:moved-results}).
\name{} exceeds every learned deep distance on all four datasets, although those distances are fitted to human judgements and \name{} is not, and it is first or second on each dataset. On PIPAL's restoration and GAN outputs, \name{} reaches 0.624 without any human label, above LPIPS-VGG and every classical distance.
On BAPPS CNN distortions, it agrees with 0.821 of human choices, within 0.005 of LPIPS-Alex (0.824) and DreamSim (0.826), both of which are trained on BAPPS (Appendix~\ref{app:results}); it also beats DISTS on all distortion types (Appendix~\ref{app:types}).

Correlations summarise entire databases, so we also evaluate individual pairwise decisions.
Across all 699{,}534 same-reference pairs with different human scores, \name{} places the preferred image closer to the reference in 90.0\% of cases, the highest among the 17 distances, ahead of DeepDC (89.4\%) and DISTS (88.4\%) (Figure~\ref{fig:choice}); it ranks first on CSIQ, second on TID2013 and LIVE, and third on KADID-10k.
Among the 24{,}623 pairs whose PSNR differs by less than 0.25\,dB, where PSNR is at chance (48.7\%), \name{} agrees with people on 76.0\%, compared with 74.4\% for DISTS and 65.3\% for LPIPS-VGG (mean over the four datasets).
Figure~\ref{fig:B-pairs} (Appendix~\ref{app:main-moved}) shows four such pairs where \name{} agrees with people while both DISTS and LPIPS-VGG do not.
In a ten-person study on trials where the methods disagree, participants side with \name{} on 24 of 28 decided trials and with DISTS on 14 of 26 (Appendix~\ref{app:validation}).

\subsection{Seeing the distance}\label{sec:see}
% v23.5 worker M: new reference for (a) (I45, seven images per row, full width) and new ruler for (b) (I32).
% Selection and maps: scripts/v235/M_see_select.py; drawing: scripts/v235/M_see_figure.py; numbers: figures/v235/M_see_numbers.json.
% KADID-10k test images (held out) with their human MOS (1 to 5). The v23.4 version is kept below as comments.
\begin{figure}[t]
\centering
\includegraphics[width=\linewidth]{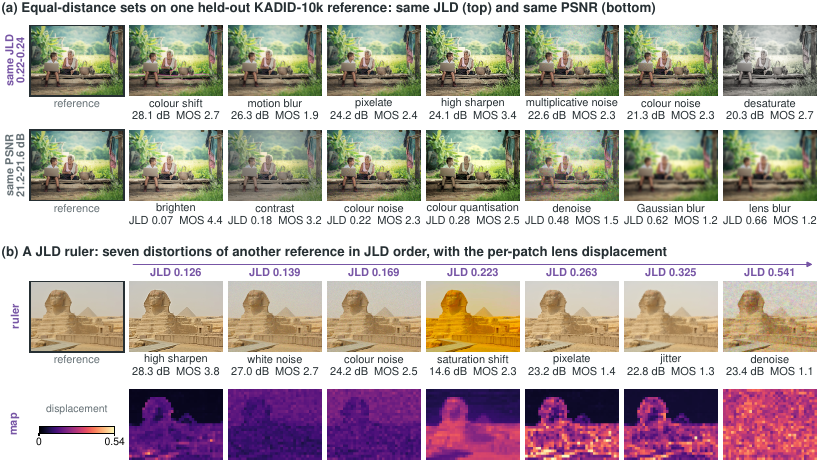}
\caption{Equal \name{} means equal damage. (a) Held-out KADID-10k distortions of one reference at equal \name{} (top) and at equal PSNR (bottom), with human scores (MOS, 1 to 5). (b) Distortions of another reference in \name{} order, with the per-patch \name{} map of each. Selection rules in Section~\ref{sec:see}.}
\label{fig:B2-see}
\end{figure}
% ---- v23.4 version ----
% % v23.4 worker B2. Selection, maps and numbers: scripts/v234/B2_see_select.py; drawing: scripts/v234/B2_see_figure.py;
% % numbers: figures/v234/B2_numbers.json ('see'). KADID-10k test images (held out) with their human MOS (1 to 5).
% \begin{figure}[t]
% \centering
% \includegraphics[width=\linewidth]{v234/B2_see.pdf}
% \caption{Seeing the distance. (a) Top: distortions of one reference at the same \name{} (0.16 to 0.18); their PSNR spans 10\,dB. Bottom: distortions at the same PSNR (25.2 to 25.9\,dB); \name{} varies sevenfold and follows the human score (MOS, 1 to 5), from a barely visible mean shift to a smeared motion blur. (b) Seven distortions of another reference in \name{} order, with \name{}'s map of each: the MOS falls from 4.5 to 2.0 along the ruler while PSNR jumps up and down. Each type is chosen by a fixed rule (Section~\ref{sec:see}).}
% \label{fig:B2-see}
% \end{figure}

A useful perceptual distance should assign similar values to distortions of similar perceptual severity (Figure~\ref{fig:B2-see}, held-out KADID-10k test images). For one reference, we choose from each of the 25 distortion types the level closest to a target \name{} value of 0.233 and retain the 9 types within 7\%; their PSNR ranges from 20.3 to 28.1\,dB, yet their human scores remain close, with a standard deviation of 0.46 MOS points.
Applying the same rule around a target PSNR of 21.5\,dB (within 0.5\,dB) retains 10 types whose \name{} values vary ninefold, from 0.071 to 0.663, and whose human scores spread almost three times as much, with a standard deviation of 1.26.
A brightening and a lens blur with equal PSNR receive human scores of 4.4 and 1.2, and \name{} values of 0.07 and 0.66.
The ruler in panel (b) starts at the 0.35 \name{} quantile of another reference, making every step visible: ordered by \name{}, the human score decreases at every step from 3.8 to 1.1, whereas PSNR is not even monotone, with a 14.6\,dB saturation shift in the middle; across all 121 distortions of that reference, \name{} reaches 0.921 \srcc{} and PSNR 0.566.
Each map gives an exact spatial account of its score because its root mean square equals the lens term: the saturation shift is distributed across the whole sphinx, whereas pixelation and jitter concentrate on its edges.
Across complete datasets, neighbours in a distance's ordering of a reference's distortions should have similar human scores; on KADID-10k test, neighbouring distortions differ by 0.511 MOS points under \name{}, 0.514 under DISTS, 0.797 under PSNR and 0.806 under LPIPS-VGG, and on TID2013 by 0.547 under \name{}, 0.642 under DISTS and 0.972 under PSNR (MOS range 0 to 9).
\IfFileExists{sections/F_field_main.tex}{% v23.6 (worker M6): the main-text glyph figure and its paragraph are commented out. The figure moved to the appendix as
}{}

\needspace{13\baselineskip}% arXiv layout: keep the wrapped figure and its text on one page
\subsection{Ablation}\label{sec:ablation}
% v23.4 worker B2: wrapfigure with the climb only; the four-panel version is fig:B2-ablation-full in
% sections/app_main_moved.tex. Drawing: scripts/v234/B2_plots.py -> figures/v234/B2_ablation_{wrap,full}.
\begin{wrapfigure}{r}{2.36in}
\vspace{-1.4em}
\centering
\includegraphics[width=2.3in]{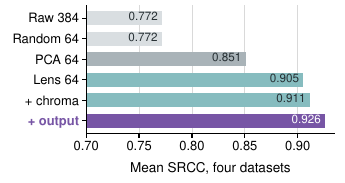}
\vspace{-0.5em}
\caption{Building \name{} step by step on DINOv2-S block 1.}
\label{fig:B-ablation}
\vspace{-1.2em}
\end{wrapfigure}

We ablate the components of \name{} in Figure~\ref{fig:B-ablation}; the gain comes from which directions the lens selects, not from how many are kept (Table~\ref{tab:climb}).
Using all 384 raw block-1 features gives a four-set mean of 0.772, and 64 random directions give the same value; 64 PCA directions reach 0.851, while the 64 lens directions reach 0.905 and perform better on every dataset.
The chroma pre-filter and output term raise the mean further, to 0.911 and 0.926.
PCA peaks at 16 directions and then declines, while the lens rises to 0.913 at $k=64$ (Figure~\ref{fig:B2-ablation-full}b); treating the selected directions equally also outperforms weighting them by the metric tensor, 0.916 versus 0.897 (Table~\ref{tab:flat-ablation}).
The construction is not specific to DINOv2: fitting a lens to each of 14 frozen encoders, including ViTs and CNNs from DINOv3 \citep{simeoni2025dinov3} to CLIP \citep{radford2021clip} and VGG16 \citep{simonyan2015vgg}, improves over raw features from the same block for all 14 (Figure~\ref{fig:B2-ablation-full}d; Appendix~\ref{app:encoders}).
% arXiv: the quality-transfer appendix is not in this version.
% It also converts the frozen no-reference TOPIQ and DOVER encoders into reference distances \citep{chen2024topiq,wu2023dover} (Appendix~\ref{app:results}).

\subsection{Resolution, cost and video}\label{sec:resolution}\label{sec:encoders}\label{sec:video}\label{sec:video-form}
% v23.4 worker B2. Data: results/scorecard_resolution.csv, figures/v23/fig_motivation_numbers.json, figures/v23/app_cost_numbers.json,
% figures/v23/video_heldout_numbers.json; drawing: scripts/v234/B2_plots.py -> figures/v234/B2_numbers.json (plots, pareto).
\begin{figure}[b]% arXiv layout: was [t]; keeps Figure 6 before Figure 7 on the page
\centering
\includegraphics[width=\linewidth]{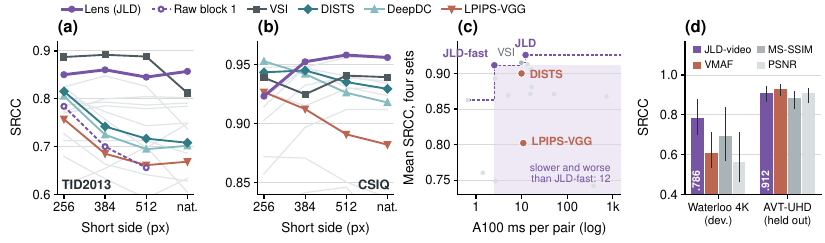}
\caption{Resolution, cost and video. (a, b) \srcc{} at each short side and at native size; grey lines are the other ten methods, dashed is unweighted block-1 features. (c) Mean \srcc{} over the first four datasets of Table~\ref{tab:main} against A100 time per pair, with the accuracy-cost frontier (dashed). (d) Video \srcc{} with 95\% bootstrap intervals.}
\label{fig:B-robust}
\end{figure}

To test stability across display resolutions, we rescore the labelled TID2013 and CSIQ pairs with 15 methods at short-side resolutions of 256, 384 and 512 pixels and at native size, using the lens term (Figure~\ref{fig:B-robust}a,b; Table~\ref{tab:resolution}).
From 256 to 512 pixels, the lens remains stable on TID2013 at 0.850, 0.860 and 0.845, while the unweighted block-1 features from which it is built fall from 0.784 to 0.656, DISTS from 0.815 to 0.717 and LPIPS-VGG from 0.757 to 0.661.
On CSIQ, the lens rises from 0.923 to 0.958, the highest of the 15 methods at 512 pixels.
Its lowest TID2013 value, 0.845, exceeds the lowest value of every other method. \name{} is also efficient to evaluate (Figure~\ref{fig:B-robust}c). \namefast{}, which uses the lens term with the pre-filter, retains a mean of 0.911 at 2.5\,ms per pair on an A100, 4.3 times faster than LPIPS-VGG (10.9\,ms) and 4.0 times faster than DISTS (9.9\,ms). No other distance in Table~\ref{tab:main} is both faster and more accurate: only PSNR is faster, and only VSI and DeepWSD score higher, at four to five times the cost.

The same construction extends to video by changing only the encoder. \namevideo{} uses a 16-direction lens on block 2 of a frozen VideoMAE-Base \citep{tong2022videomae}, fitted on 50 unlabelled clips; its space-time tokens capture temporal effects such as flicker that no single frame reveals (Appendix~\ref{app:video}). On the 240 Waterloo IVC 4K pairs \citep{li2019waterloo4k}, which are used to choose the block and rank, it reaches 0.786 \srcc{}, compared with 0.694 for MS-SSIM, 0.611 for VMAF \citep{li2016vmaf} and 0.562 for PSNR, giving a 0.175 lead over VMAF (Figure~\ref{fig:B-robust}d).
On 120 held-out AVT-VQDB-UHD-1 pairs \citep{rao2019avt}, it reaches 0.912, within 0.020 of VMAF (0.932), which is trained on human video scores, while \namevideo{} uses none. Figure~\ref{fig:qual-landscape} revisits a held-out image pair with nearly equal PSNR (25.18 and 25.16\,dB): viewers prefer A (MOS 3.67) to B (1.47); \name{} agrees, whereas DISTS and LPIPS-VGG reverse the ordering (Appendix~\ref{app:gallery}).
\begin{figure}[H]
\centering
\includegraphics[width=\linewidth]{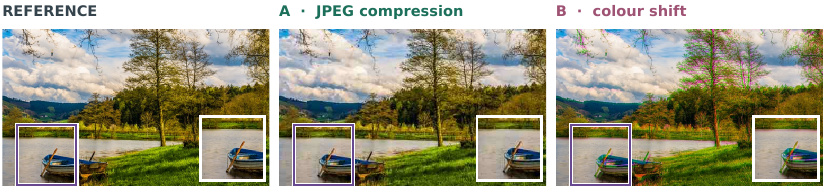}
\caption{Held-out KADID-10k pair. White boxes locate the enlarged boat detail.}
\label{fig:qual-landscape}
\end{figure}

\section{Discussion and conclusion}\label{sec:discussion}\label{sec:conclusion}\label{sec:bapps}
A frozen encoder's output Jacobian tells us how its early features should be weighted. \name{} turns this sensitivity into one fixed metric tensor, fitted in 35 seconds from 100 unlabelled images. Among the distances evaluated here, it achieves the highest mean agreement with human judgements across all image databases, while remaining stable as image resolution doubles. The central principle is simple, measure with early features and weight them using later representations. This combination gives both accuracy and resolution stability (Appendix Figure~\ref{fig:A2-depth}). The lens improves over raw features for all 14 tested encoders, while its fixed geometry makes individual predictions directly inspectable through maps and contours. \namefast{} requires only 2.5\,ms per pair. The seconds-long fit can also transfer the construction to a new encoder or domain without human ratings. More broadly, any differentiable encoder, including those for audio, medical imaging or 3D, can in principle define such a lens; closing its null space with the full metric tensor could further strengthen its use as a training loss (Appendix~\ref{app:limitations}). Perceptual weights need not come from human judgements, they can emerge from the geometry of a network that has simply learned to see.

% \clearpage % arXiv layout: no page limit, statements follow the conclusion
% arXiv: LLM-use statement removed at the author's request.
% \input{sections/ai_use}
\section*{Reproducibility Statement}
We release the full code repository for all variants of \name{}, together with the pre-fitted DINOv2-S lens, so the distance can be used without refitting. Section~\ref{sec:method} and Algorithm~\ref{alg:A2-jld} define the fit, the configuration and the distance, and Appendix~\ref{app:usage} gives a runnable example. Appendices~\ref{app:sweep}, \ref{app:fit}, \ref{app:video} and~\ref{app:validation} give the baseline, fitting, video and human-study protocols. All image and video benchmarks are publicly available, and the full code base, including the lens fitting, evaluation and figure scripts, will be released publicly upon publication.

\bibliography{refs}
\bibliographystyle{jld_preprint}

% \newpage % arXiv layout: the appendix follows the references without a page break
\appendix
% Start appendix float pages at the top instead of vertically centring isolated figures.
\makeatletter\setlength{\@fptop}{0pt}\makeatother
\setcounter{figure}{0}\renewcommand{\thefigure}{A\arabic{figure}}
\setcounter{table}{0}\renewcommand{\thetable}{A\arabic{table}}
\addtocontents{toc}{\protect\setcounter{tocdepth}{1}}
% Compact one-page appendix contents in the LumaFlux style: section entries only, small type,
% tight dotted lines, whole entry linked.
\section*{Appendix}
\begingroup\makeatletter
\setlength{\parskip}{0pt}\hypersetup{linktoc=all}
\renewcommand*\l@section{\@dottedtocline{1}{0em}{1.6em}}
\small\@starttoc{toc}
\makeatother\endgroup
\vspace{2pt}
% Appendix contents use the v5 text throughout.
% The appendix is float heavy: let several floats share a text page and pack float-only
% pages from the top (the same settings as the LumaFlux appendix).
\setcounter{topnumber}{4}\setcounter{bottomnumber}{3}\setcounter{totalnumber}{8}
\renewcommand{\topfraction}{0.92}\renewcommand{\bottomfraction}{0.85}
\renewcommand{\textfraction}{0.06}\renewcommand{\floatpagefraction}{0.80}
\setlength{\columnsep}{12pt}
% v23.4 (C2): method details first, then the proofs, then usage and results; Limitations always last.
\Needspace{4\baselineskip}
\section{Method details}\label{app:method-details}
This section gives the implementation details of \name{}. The encoder and features, lens calibration, rank selection, image-space corrections, variants, fit stability, and computational cost.

% v23.6 worker M6: full-width one-row version of fig:A-lens (replaces the v23.5 wrapfigure in fig_A_lens.tex,
% which is no longer input). Re-plot only: scripts/v236/M6_lens_row.py -> figures/v236/M6_lens_row.pdf
% (calibration drawing of scripts/v22/calibration.py; projection and scatter of scripts/v21/intuition.py,
% 153 perturbations at 40 dB, Spearman 0.643 lens, 0.147 raw, checked against figures/v5/fig2_intuition_numbers.json).
\begin{figure}[!h]
\centering
% \includegraphics[width=\linewidth]{v236/M6_lens_row.pdf}
% v23.7 worker M7: vertically compressed re-plot (scripts/v237/M7_lens_row.py, 5.5 x 1.42 in instead of 1.80 in).
\includegraphics[width=\linewidth]{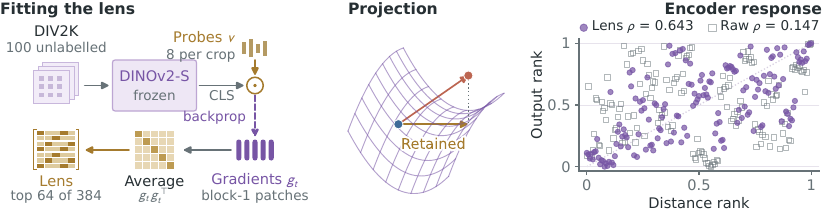}
\vspace{-14pt}
% \caption{The Jacobian lens. Left: output probes are backpropagated to the block-1 patch tokens of 100 unlabelled images, and the top 64 eigenvectors of the averaged gradient outer products form the lens. Middle: the lens retains the component of a feature change that moves the output. Right: over 153 perturbations at equal pixel error (40\,dB), the lens displacement ranks the change in the encoder output far better than the raw block-1 displacement.}
\caption{The Jacobian lens. Left: output probes are backpropagated to the block-1 patch tokens of 100 unlabelled images; the top 64 eigenvectors of the averaged gradient outer products form the lens. Middle: the lens keeps the part of a feature change that moves the output. Right: over 153 perturbations at 40\,dB PSNR, the lens displacement ranks the encoder output change far better than the raw block-1 displacement.}
\label{fig:A-lens}
\vspace{-6pt}
\end{figure}

\subsection{Encoder and patch features}\label{app:md-encoder}
% Depth sweep (worker A2, v23.4): moved from the main-text motivation figure (v23.3 fig:motivation) to the appendix.
% Sources: figures/v233/A_depth_numbers.json (scripts/v233/A_depth.py) and figures/v23/fig_motivation_numbers.json (panel b).
% Plot: scripts/v233/A_motivation_fig.py. The manager inputs this file in the appendix (method details).
% v23.8 worker L8: only panel (a) is kept, as a small wrapfigure (re-plot of the same numbers,
% scripts/v238/L8_depth_panel.py -> figures/v238/L8_depth.pdf from figures/v233/A_depth_numbers.json). Panel (b), the
% resize study, is summarised in the text beside it (figures/v23/fig_motivation_numbers.json). The two-panel float is
% commented out below.
% arXiv layout: the wrapfigure ran off the page foot once the appendix followed the references; it is now a top float.
% \begin{wrapfigure}{r}{0.48\linewidth}
% \vspace{-12pt}
% \centering
% \includegraphics[width=\linewidth]{v238/L8_depth.pdf}
% \vspace{-16pt}
% \caption{TID2013 \srcc{} of patch-token distances at each depth of DINOv2-S, raw or through the lens fitted there.}
% \label{fig:A2-depth}
% \vspace{-10pt}
% \end{wrapfigure}
\begin{figure}[t]
\centering
\includegraphics[width=0.44\linewidth]{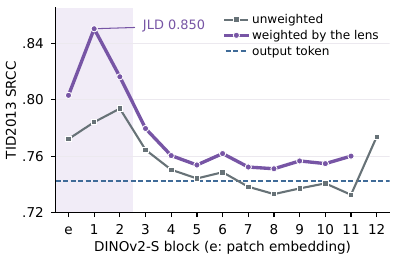}
\vspace{-4pt}
\caption{TID2013 \srcc{} of patch-token distances at each depth of DINOv2-S, raw or through the lens fitted there.}
\label{fig:A2-depth}
\end{figure}
% ---- v23.7 two-panel float (restore by uncommenting and removing the wrapfigure above) ----
% \begin{figure}[t]
% \centering
% \includegraphics[width=\linewidth]{v233/A_motivation.pdf}
% \vspace{-14pt}
% % \caption{Measure early, weigh by late. Agreement with human scores on all 3{,}000 TID2013 pairs for distances between patch tokens of one frozen DINOv2-S.
% % (a)~At each depth, tokens compared in all directions alike (grey) or through the lens fitted at that depth (purple); block 12 tokens no longer reach the output, so it has no lens.
% % (b)~The same distances as both images are resized: only the lens-weighted early tokens stay accurate.}
% \caption{TID2013 \srcc{} of patch-token distances in one frozen DINOv2-S. (a)~By depth, isotropic (grey) or through the lens fitted at that depth (purple). (b)~As both images are resized.}
% \label{fig:A2-depth}
% \vspace{-6pt}
% \end{figure}

\name{} compares two aligned images using one frozen encoder and one fixed linear projection.
We use DINOv2-S (ViT-S/14), pretrained without quality judgements \citep{oquab2024dinov2}.
For an RGB image whose sides are multiples of the patch size $p=14$, the encoder produces $N=HW/196$ patch positions.
At each position $t$, we read the patch vector $\tok{t}\in\mathbb{R}^{384}$ from the residual stream after block 1; block 0 denotes the linear patch embedding. The final layer-normalised class token $\out\in\mathbb{R}^{384}$ defines the sensitivity used to fit the lens and supplies the whole-image output term. The distance has no fixed input size.
Calibration uses $224\times224$ crops, corresponding to a $16\times16$ patch grid.
At evaluation, images are centre-cropped to the largest multiple of 14 pixels in each dimension. For example, a $512\times384$ image becomes $504\times378$, or $36\times27$ patches.
DINOv2 interpolates its positional embeddings to each grid, allowing the same fitted lens to operate across image sizes (Section~\ref{sec:resolution}). %Changing the encoder or output target requires only a refit (Appendices~\ref{app:encoders} and~\ref{app:quality-transfer}).

Figure~\ref{fig:A2-depth} repeats the fit at every encoder depth. The lens-weighted distance peaks at block~1 with correlation 0.850, compared with 0.803 at the patch embedding, 0.816 after block~2, and at most 0.780 from block~3 onward. The same choice also gives the strongest resolution stability, from a 256 to 512 pixel short side, the block-1 lens changes only from 0.850 to 0.845, whereas unweighted block-1 features fall from 0.784 to 0.656 and final patch features from 0.762 to 0.722.

\subsection{Calibrating the Lens}\label{app:fit}

\begin{figure}[t]
\centering
\includegraphics[width=\linewidth]{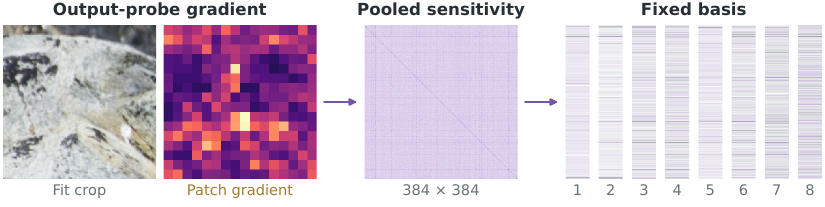}
\caption{One calibration step. Probe gradients at every patch (left) are pooled into the $384\times384$ sensitivity matrix (middle), whose leading eigenvectors form the lens (right, first eight of 64).}
\label{fig:fit}
\end{figure}

The lens should preserve feature directions that affect the encoder output rather than directions that merely vary across images.
For patch $t$, the Jacobian
$J_t(x)=\partial\out(x)/\partial\tok{t}(x)$
maps a small token displacement $\delta$ to the first-order output change $J_t(x)\delta$.
Averaging its squared response over images and positions gives the sensitivity matrix of Equation~\ref{eq:sensitivity},
\begin{equation*}
\M=\mathbb{E}_{x,t}\bigl[J_t(x)^\top J_t(x)\bigr],
\qquad
\delta^\top\M\delta
=
\mathbb{E}_{x,t}\|J_t(x)\delta\|^2.
\end{equation*}
Squaring before averaging prevents responses of opposite sign from cancelling. We omit cross-patch products, so $\M$ describes the sensitivity of a single patch intervention. Explicitly forming $J_t$ would require one backward pass per output coordinate. Instead, we draw a Gaussian probe $v\sim\mathcal N(0,I)$ at the output. One backward pass of $v^\top\out$ gives
$g_t=J_t^\top v$
at every patch, with
$\mathbb E_v[g_tg_t^\top]=J_t^\top J_t$.
For $n$ images, $C$ crops, and $P$ probes, we estimate
\begin{equation}
\widehat{\M}
=
\frac{1}{nCP}\sum_{i,c,p}\frac{1}{N}\sum_t
g_{icpt}g_{icpt}^{\top},
\qquad
g_{icpt}=J_t(x_{ic})^{\top}v_{icp},
\quad
v_{icp}\sim\mathcal{N}(0,I),
\label{eq:app_fit_estimator}
\end{equation}
which is unbiased, with error decreasing as $(nCP)^{-1/2}$ (Proposition~\ref{prop:estimation}).
We make $\widehat{\M}$ symmetrical and retain its 64 leading eigenvectors as the orthonormal lens $\U_k$, without eigenvalue weighting or whitening (Algorithm~\ref{alg:A2-jld}). Calibration uses 100 unlabelled DIV2K validation images \citep{agustsson2017div2k}, disjoint from all quality datasets used for evaluation. 
%We sort the filenames, draw four random $224\times224$ crops per image with seed 0, and use eight probes per crop, for 3{,}200 probe gradients in total.
The calibration crops are not chroma-filtered, and only activation gradients are computed; the encoder remains frozen.
The complete fit takes 34.53\,s on one A100 GPU with 327.8\,MB peak GPU memory. It uses no image pairs, distortions, or human labels. %Human-rated development data are used only to choose the block, rank, and output weight (Appendix~\ref{app:protocols}).
Once fitted, the lens is reused for every comparison with no gradients. Figure~\ref{fig:fit} visualizes one calibration crop, a single output produces gradients at all patches, whose outer products are accumulated into the $384\times384$ sensitivity matrix. Its leading eigenvectors mix many feature coordinates rather than selecting individual channels.

\Needspace{14\baselineskip}
\subsection{Feature Rank}\label{app:md-rank}
% v23.3: the calibration diagram moved to the main text (fig:A-lens); this float keeps only the spectrum.
% v23.5 (P1): very small wrapfigure (was a 0.55\linewidth float).
% \begin{figure}[t]
% \centering
% \includegraphics[width=0.55\linewidth]{v22/app_spectrum.pdf}
\begin{wrapfigure}[11]{l}{0.33\linewidth}
\centering\vspace{-12pt}
\includegraphics[width=\linewidth]{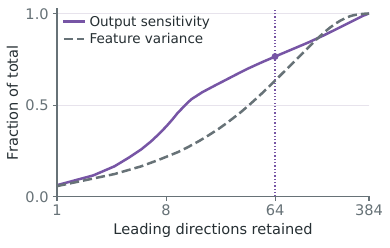}
% Source: scripts/v22/app_diag.py; figures/v18/appB_spectrum_numbers.json, retained_fraction.
\captionsetup{font=footnotesize,skip=2pt}
\caption{Cumulative share of $\operatorname{tr}\M$ (violet) and of feature variance (grey).}
% \caption{Cumulative eigenvalue share of output sensitivity and feature variance, each normalised by its own trace.
% In the final CLS-only block-1 fit, the leading 64 of 384 directions retain 76.5\% of squared sensitivity.}
\label{fig:appB-spectrum}
\vspace{-10pt}
\end{wrapfigure}
% \end{figure}

Figure~\ref{fig:appB-spectrum} compares the cumulative spectrum of $\M$ with that of the feature covariance.
Sensitivity is substantially more concentrated, the leading 64 of 384 directions contain 76.5\% of $\operatorname{tr}\M$, so the lens discards rest of feature space while retaining over three quarters of the encoder's squared sensitivity. The sensitivity and covariance spectra also select different subspaces.
For a diagnostic using the class token and mean patch token as output, the block-1 lens retains 78.7\% of sensitivity but only 33.7\% of feature variance. Its overlap with the 64 leading PCA directions is 0.358 at block~1 and 0.060 at block~12, compared with 0.167 in expectation for a random 64-dimensional subspace.

% The rank follows a simple development rule, choose the smallest $k$ within the noise of the best development correlation (Appendix~\ref{app:sweep}).
\begin{wrapfigure}[17]{r}{2.15in}
\centering\vspace{-12pt}
\includegraphics[width=2.1in]{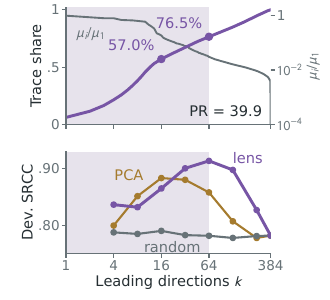}
% Source: figures/v22/fig_geo_spectrum_numbers.json (eigenvalues from lens/jld_dinov2_s14_block1_k64.npz; rank sweep from
% figures/v22/app_sweep_numbers.json display_data.rank_rows); script scripts/v22/geo_spectrum.py.
\captionsetup{font=footnotesize,skip=3pt}
\caption{Spectrum of $\M$ (top; 64 kept directions carry 76.5\% of the trace; PR: participation ratio) and development SRCC against rank $k$.}
\label{fig:geo-spectrum}
\vspace{-10pt}
\end{wrapfigure}

Development correlation peaks at 0.913 for $k=64$, falls to 0.782 when all 384 directions are retained, and reaches at most 0.883 for PCA subspaces at any rank (Figure~\ref{fig:geo-spectrum}).
Thus, the gain comes primarily from which directions are retained rather than dimensionality reduction alone.
The released lens is flat, all retained directions receive equal weight rather than being weighted by $\mu_i$.
Table~\ref{tab:flat-ablation} compares this choice with eigenvalue-weighted and full Mahalanobis alternatives.
Although the flat and eigenvalue-weighted lens terms rank pairs similarly, the flat projection agrees better with human scores on all four datasets. The full Mahalanobis metric $\M$ also performs worse, while raw block-1 features are substantially weaker.

\begin{table}[t]
\centering\small
\caption{Lens form. \srcc{} with MOS for the lens term under each choice of $\mathbf Q$ in $\|\mathbf Q^{1/2}\delta\|$, together with the development mean (CSIQ, KADID dev.) after adding the output term.}
\label{tab:flat-ablation}
\begin{tabular}{@{}lccccc@{}}
\toprule
$\mathbf Q$ in the lens term & CSIQ & KADID dev. & TID2013 & KADID test & Dev. mean, with output \\
\midrule
$\U_k\U_k^\top$ (flat, released) & \textbf{0.956} & \textbf{0.876} & \textbf{0.874} & \textbf{0.870} & \textbf{0.932} \\
$\M$ (Mahalanobis) & 0.946 & 0.848 & 0.859 & 0.845 & 0.926 \\
$\Pi_k\M\Pi_k$ ($\sqrt{\mu}$-weighted lens) & 0.934 & 0.836 & 0.859 & 0.830 & 0.926 \\
$I$ (raw block-1 features) & 0.836 & 0.710 & 0.670 & 0.718 & 0.859 \\
\bottomrule
\end{tabular}
\end{table}

\subsection{Lens Sensitivity}\label{app:visualisation}

Output sensitivity is a substantially better ruler than raw feature energy. Across 153 perturbations of three reference images, RMS lens displacement has Spearman correlation 0.643 with the change of the final class token, compared with 0.147 for raw block-1 displacement. The lens therefore retains the component of an early feature change that propagates to the encoder output. The lens also has a direct pixel interpretation at block~0.
There the patch features are a linear map $W$ of a $3\times14\times14$ RGB patch, so each lens direction $u$ acts as a matched filter $W^\top u$ on pixel changes; \namelin{} uses this construction directly. Biases and positional embeddings cancel when taking differences. Eigenvector signs are arbitrary, and nearly tied directions may rotate between fits, but sign changes and rotations within the retained subspace leave every distance unchanged.

\subsection{Chroma pre-filter}\label{app:md-prefilter}

Human sensitivity falls faster with spatial frequency for colour than for luminance \citep{campbell1968csf,mullen1985}.
The unfiltered lens does not fully reproduce this behaviour, at the pixel Nyquist frequency, its red--green and blue--yellow responses retain 0.208 and 0.274 of their peaks, respectively, where human chromatic sensitivity is close to zero (Appendix~\ref{app:csf}). Following S-CIELAB \citep{zhang1996scielab}, we therefore apply a fixed pre-filter that blurs chroma while preserving luminance,
\begin{equation}
\pre x
=
T^{-1}\bigl(Y,\ G_\sigma*C_b,\ G_\sigma*C_r\bigr),
\qquad
(Y,C_b,C_r)=T(x),
\qquad
\sigma=2\ \text{pixels},
\label{eq:prefilter}
\end{equation}
where $T$ maps RGB to YCbCr and $G_\sigma$ is a Gaussian with reflect padding. The result is clipped to $[0,1]$, and both filtered images are then evaluated on the same patch grid. %The width $\sigma$ was selected on CSIQ and KADID-10k development data after a diagnostic using TID2013 gratings, so TID2013 is not a held-out test for this correction (Appendix~\ref{app:csf}). The lens itself is fitted on unfiltered crops.

\subsection{Pooling and Output term}\label{app:md-pooling}

Per-patch lens lengths are pooled quadratically.
Under independent equal-variance Gaussian noise, an ideal observer combines per-patch $d'$ in this form, corresponding to exponent-two probability summation \citep{quick1974}. Quadratic pooling also makes the lens term a Euclidean distance after a fixed feature map and therefore a pseudometric (Proposition~\ref{prop:metric}). The output term is one half of the cosine distance between the final layer-normalised class tokens.
It complements the local lens term with a whole-image comparison, raising CSIQ correlation from 0.956 to 0.971.
Its weight of $1/2$ was selected on development data and is not sensitive, weights from 0.25 to 1 keep CSIQ between 0.968 and 0.971 and TID2013 between 0.872 and 0.877 (Appendix~\ref{app:output-weight}).

\subsection{Variants: \namefast{}, \namelin{} and \namevideo{}}\label{app:md-variants}

\namefast{} retains the pre-filter and lens term, removes the output term, and stops the encoder after block~1.
This reduces the cost per pair from 118.5 GFLOPs and 12.2\,ms to 10.7 GFLOPs and 2.5\,ms on an A100, while making the complete distance a pseudometric.

\namelin{} moves one step earlier and fits the lens on the block-0 patch embedding.
The resulting distance is the RMS response of 64 fixed $14\times14$ RGB filters to the filtered difference image, and its behaviour under grid refinement can be analysed directly (Theorem~\ref{thm:grid-lin}).

\namevideo{} applies the same construction to a frozen VideoMAE-Base fine-tuned on Kinetics-400 \citep{tong2022videomae}.
It uses a 16-dimensional lens at block~2, fitted to the class logits using 50 unlabelled clips and eight probes per clip.
Squared projected differences between aligned space-time tokens are averaged over tokens and eight evenly spaced 16-frame clips before taking the square root.
The video variant uses neither the chroma pre-filter nor an output term (Appendix~\ref{app:video}).

\subsection{Fit stability}\label{app:md-stability}

\begin{table}[t]
\centering\small\setlength{\tabcolsep}{5pt}
\caption{Fit stability. Subspace overlap with the 100-image lens and development mean \srcc{} (CSIQ, KADID dev.); $\Delta$ is relative to the 100-image fit.}
\label{tab:appB_stability}
\begin{tabular}{@{}lccr@{\hspace{18pt}}lccr@{}}
\toprule
Fit corpus & Overlap & \srcc{} & $\Delta\times10^{4}$ & Disjoint half & Overlap & \srcc{} & $\Delta\times10^{4}$ \\
\midrule
10 images & 0.877 & 0.913 & $-2.5$ & Half A, seed 0 & 0.963 & 0.913 & $-2.7$ \\
25 images & 0.916 & 0.914 & $+3.7$ & Half A, seed 1 & 0.935 & 0.914 & $+1.6$ \\
50 images & 0.963 & 0.913 & $-2.7$ & Half B, seed 0 & 0.943 & 0.914 & $+4.0$ \\
100 images & 1.000 & 0.913 & $0.0$ & Half B, seed 1 & 0.951 & 0.913 & $-0.7$ \\
\bottomrule
\end{tabular}
\vspace{-4pt}
\end{table}

We test whether calibration depends strongly on the choice of unlabelled corpus.
For two lenses $\U$ and $\widetilde{\U}$, we measure subspace overlap as
$\|\U^\top\widetilde{\U}\|_F^2/k$, which equals 1 for identical subspaces and has expectation $k/384$ for random ones.
Ten calibration images already give overlap 0.877 with the 100-image lens and change either development correlation by less than 0.002.
Two disjoint 50-image halves, each fitted with two seeds, overlap the full fit by 0.935--0.963 and remain within $4\times10^{-4}$ of its development correlation.
Similarly, two probes per crop give overlap 0.967 and four give 0.990; using 16 probes instead of eight, or crop sizes of 112 or 448 instead of 224, changes either correlation by less than 0.002.
The output target matters more than the calibration sample: targeting the next block rather than the final output gives overlap 0.510, CSIQ correlation 0.919, and KADID development correlation 0.842.
These stability diagnostics used the preliminary target combining the class token and mean patch token; Appendix~\ref{app:sweep} reports the block, rank, and output-weight sweeps for the final configuration.
\phantomsection\label{app:refit}
To refit the lens for a new domain, collect roughly 100 representative unlabelled images that are disjoint from evaluation data, while keeping the encoder, block, output target, and normalisation fixed.
Save the fitted basis together with the corpus manifest and random seed, and compare independent fits through their subspace overlap.
Changing the encoder or output target requires a refit; selecting the rank or output weight from human ratings requires separate development and test data (Appendix~\ref{app:usage}).

\Needspace{10\baselineskip}
\subsection{Implementation}\label{app:md-cost}

% The appendix latency figure is omitted because Figure~\ref{fig:B-robust}c
% already shows the speed--accuracy trade-off; Table~\ref{tab:app_cost}
% provides the complete timing protocol and measurements.

Scoring one pair requires two forward passes, one $384\times64$ projection per patch, and one cosine computation, with no gradients. The implementation center crops each image to the patch grid, applies the chroma pre-filter, and returns the distance. %A differentiable interface is also provided, allowing \name{} to be used in optimisation-based stress tests (Appendix~\ref{app:validation}).
The released lens is a $384\times64$ array stored together with its sensitivity matrix, feature covariance, and calibration metadata; inference requires only the lens and encoder. As shown in Figure~\ref{fig:B-robust}c, \name{} and \namefast{} lie on the measured speed--accuracy Pareto frontier.
\namefast{} is 4.3 times faster than LPIPS-VGG on an A100 and 14 times faster on twelve CPU threads because it stops after the first of twelve encoder blocks.
Full \name{} evaluates the remaining blocks only to obtain the final class token.
\name{} timings exclude the pre-filter and output cosine.% arXiv: dangling "gives the complete timing protocol" clause removed (its table is not in this version)

\FloatBarrier
\section{Proofs}\label{app:proofs}

We assume throughout that the encoder is frozen and that its patch features $\tok{t}$ and output $\out$ are differentiable in the input pixels, twice where required below. Images lie in the compact set $[0,1]^{3HW}$ for a fixed patch grid, and patch positions are sampled uniformly within each image.
These assumptions keep the relevant Jacobians bounded and the Taylor remainders uniform; DINOv2, built from GELU activations, layer normalisation and softmax attention, satisfies them.
The pre-filter $\pre$ is affine away from saturated pixels (Appendix~\ref{app:md-prefilter}), and statements involving $\pre x$ are understood on this region. As in Section~\ref{sec:method},
\[
J_t(x)=\frac{\partial\out(x)}{\partial\tok{t}(x)},
\qquad
\M=\mathbb{E}_{x,t}[J_t^\top J_t]
=\sum_{j=1}^{d}\mu_j u_j u_j^\top,
\]
with $\mu_1\ge\cdots\ge\mu_d\ge0$.
The lens is $\U_k=[u_1\cdots u_k]$ and $\Pi_k=\U_k\U_k^\top$.

\subsection{The sensitivity matrix and the lens}

The quadratic form
\[
\delta^\top\M\delta
=
\mathbb{E}_{x,t}\|J_t\delta\|^2
\]
is the Euclidean metric of output space pulled back through the map $\tok{t}\mapsto\out$, with the other intermediate tokens fixed, and then averaged over images and positions.
If the output is observed through isotropic Gaussian noise with variance $\sigma^2$, then $J_t^\top J_t/\sigma^2$ is the Fisher information for $\tok{t}$.
Thus, to first order, $\sqrt{\delta^\top\M\delta}/\sigma$ is the RMS detectability $d'$ of the displacement $\delta$.

\begin{proposition}[Optimal subspace]\label{prop:optimality}
Among all rank-$k$ orthonormal bases $V$, the lens $\U_k$ maximises
$\operatorname{tr}(V^\top\M V)$.
For any displacement $\delta$ chosen independently of the image and position,
\begin{equation}
\mu_k\|\U_k^\top\delta\|^2
\le
\mathbb{E}_{x,t}\|J_t(x)\delta\|^2
=
\delta^\top\M\delta
\le
\mu_1\|\U_k^\top\delta\|^2
+
\mu_{k+1}\|(I-\Pi_k)\delta\|^2.
\label{eq:app_complement}
\end{equation}
\end{proposition}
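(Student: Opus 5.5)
The plan has two independent parts: the trace maximisation is Ky Fan's principle applied to the fixed matrix $\M$, and the two-sided bound \eqref{eq:app_complement} follows from expanding $\delta^\top\M\delta$ in the eigenbasis of $\M$. I would establish the middle equality first, since both inequalities rest on it.

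\textbf{The middle equality.} For a fixed $\delta$ we have $\|J_t(x)\delta\|^2=\delta^\top J_t(x)^\top J_t(x)\delta$, which is linear in the matrix $J_t^\top J_t$. Because $\delta$ is chosen independently of $(x,t)$, it can be pulled out of the expectation, giving $\mathbb E_{x,t}\|J_t\delta\|^2=\delta^\top\mathbb E_{x,t}[J_t^\top J_t]\delta=\delta^\top\M\delta$. The expectation is finite because the standing assumptions (compact image domain, differentiable encoder) keep $J_t$ bounded.

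\textbf{Trace maximisation.} $\M$ is symmetric positive semidefinite with spectral decomposition $\sum_j\mu_ju_ju_j^\top$. For any $V\in\mathbb R^{d\times k}$ with $V^\top V=I_k$, write $\operatorname{tr}(V^\top\M V)=\sum_j\mu_j w_j$ with $w_j=\|V^\top u_j\|^2$. These weights satisfy $0\le w_j\le1$, since $VV^\top$ is an orthogonal projection, and $\sum_j w_j=\operatorname{tr}(V^\top V)=k$. A linear function with decreasing coefficients $\mu_j$ under these constraints is maximised by putting $w_j=1$ for $j\le k$ and $w_j=0$ otherwise. This gives the bound $\sum_{j\le k}\mu_j$, which $V=\U_k$ attains. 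One could simply cite Ky Fan's maximum principle, but the short argument above is self-contained.

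\textbf{The two-sided bound.} Let $c_j=u_j^\top\delta$. Then $\delta^\top\M\delta=\sum_{j\le k}\mu_jc_j^2+\sum_{j>k}\mu_jc_j^2$, with $\sum_{j\le k}c_j^2=\|\U_k^\top\delta\|^2$ and $\sum_{j>k}c_j^2=\|(I-\Pi_k)\delta\|^2$ by orthonormality.
\begin{itemize}
\item For the lower bound, drop the nonnegative tail and use $\mu_j\ge\mu_k$ for $j\le k$.
\item For the upper bound, use $\mu_j\le\mu_1$ on the head and $\mu_j\le\mu_{k+1}$ on the tail.
\end{itemize}

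The only real subtlety is the independence hypothesis on $\delta$ in the middle equality. If $\delta$ were allowed to depend on $x$ or $t$, the expectation would not factor through $\M$ and the statement could fail. I would flag this explicitly, and note that this is exactly the sense in which the main text's RMS bound $\sqrt{\mu_{k+1}}\|\delta\|$ holds for displacements orthogonal to the lens.
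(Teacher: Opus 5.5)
Your proof is correct and follows the paper's argument. The trace statement comes from Ky Fan's maximum principle, and the two-sided bound comes from the eigenbasis expansion $\delta^\top\M\delta=\sum_j\mu_j\langle u_j,\delta\rangle^2$, using $\mu_j\in[\mu_k,\mu_1]$ for $j\le k$ and $\mu_j\in[0,\mu_{k+1}]$ for $j>k$. Your additions are a self-contained proof of Ky Fan through the weights $w_j=\|V^\top u_j\|^2$ and an explicit derivation of the middle equality from the independence of $\delta$; the paper leaves both implicit.
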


\begin{proof}
The trace statement follows from Ky Fan's maximum principle.
Expanding in the eigenbasis gives
\[
\delta^\top\M\delta
=
\sum_j\mu_j\langle u_j,\delta\rangle^2.
\]
For $j\le k$, $\mu_j\in[\mu_k,\mu_1]$, while for $j>k$, $\mu_j\in[0,\mu_{k+1}]$, which gives Equation~\ref{eq:app_complement}.
\end{proof}

For the released lens, $\mu_{65}/\mu_1=0.030$.
Hence an equal-norm displacement lying entirely outside the lens has RMS sensitivity at most
$\sqrt{\mu_{65}/\mu_1}=0.173$
of the leading eigendirection.
The worst-case variation within the retained subspace is much larger, since $\mu_1/\mu_{64}=32.8$, but real distortions occupy a much narrower range.
Across all reference--distortion pairs in CSIQ, KADID-10k and TID2013, the pooled ratio
\[
\gamma(\delta)
=
\frac{(\delta^\top\M\delta)^{1/2}}
{\|\U_k^\top\delta\|}
\]
has a 90th-to-10th percentile ratio of only 1.26--1.50, compared with the worst-case factor $\sqrt{32.8}=5.7$.
This helps explain why the flat lens and the $\M$-weighted form rank image pairs similarly (Table~\ref{tab:flat-ablation}).

\begin{lemma}[Captured sensitivity is stable]\label{lem:capture}
Let $A$ and $B$ be symmetric matrices with top-$k$ eigenbases $\U_A$ and $\U_B$, and let $\|\cdot\|_{(k)}$ denote the sum of the $k$ largest singular values.
Then
\[
0
\le
\operatorname{tr}(\U_A^\top A\U_A)
-
\operatorname{tr}(\U_B^\top A\U_B)
\le
2\|A-B\|_{(k)}.
\]
\end{lemma}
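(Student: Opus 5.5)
The lower bound is immediate from Ky Fan's maximum principle, exactly as in Proposition~\ref{prop:optimality}. Since $\U_A$ maximises $\operatorname{tr}(V^\top A V)$ over all $d\times k$ orthonormal $V$, and $\U_B$ is one such $V$, the difference is non-negative.

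For the upper bound, I will add and subtract $B$ and split the difference into three terms:
\[
\operatorname{tr}(\U_A^\top A\U_A)-\operatorname{tr}(\U_B^\top A\U_B)
=\operatorname{tr}\big(\U_A^\top (A-B)\U_A\big)
+\big[\operatorname{tr}(\U_A^\top B\U_A)-\operatorname{tr}(\U_B^\top B\U_B)\big]
+\operatorname{tr}\big(\U_B^\top (B-A)\U_B\big).
\]
The middle bracket is $\le 0$, again by Ky Fan, now applied to $B$: $\U_B$ maximises $\operatorname{tr}(V^\top BV)$, so replacing $\U_B$ by $\U_A$ cannot increase it.

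It then remains to show $|\operatorname{tr}(V^\top E V)|\le\|E\|_{(k)}$ for any symmetric $E=\pm(A-B)$ and any orthonormal $V\in\mathbb{R}^{d\times k}$.
\begin{itemize}
\item Write $\operatorname{tr}(V^\top EV)=\langle E, VV^\top\rangle$.
\item The projector $VV^\top$ has operator norm $1$ and rank $k$.
\item By von Neumann's trace inequality, $|\langle E,P\rangle|\le\sum_i\sigma_i(E)\sigma_i(P)$, and $P$ has exactly $k$ singular values equal to $1$, so this gives $\le\sum_{i\le k}\sigma_i(E)=\|E\|_{(k)}$.
\item Alternatively, via Ky Fan directly: $\operatorname{tr}(V^\top EV)\le\sum_{i\le k}\lambda_i(E)\le\sum_{i\le k}|\lambda|_{(i)}(E)=\|E\|_{(k)}$, since the singular values of a symmetric matrix are the absolute values of its eigenvalues.
\end{itemize}
Applying this bound to the first term with $E=A-B$ and to the third with $E=B-A$ gives the total $\le 2\|A-B\|_{(k)}$.

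The only delicate point is the sign bookkeeping in the trace bound. $\operatorname{tr}(V^\top EV)$ can be negative, and the bound $\sum_{i\le k}\lambda_i(E)$ could itself be negative. So the comparison with $\|E\|_{(k)}$ must go through absolute values, ordering the eigenvalues of $E$ by magnitude, rather than through the signed eigenvalues. Everything else follows from results already invoked in the paper.
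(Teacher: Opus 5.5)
Your proof is correct and matches the paper's argument: Ky Fan's principle for the lower bound, the same three-term decomposition with the middle term non-positive by Ky Fan applied to $B$, and each outer term bounded by $\|A-B\|_{(k)}$. You also justify that last bound, through von Neumann's trace inequality or through Ky Fan with eigenvalues ordered by magnitude, which the paper states without detail.
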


\begin{proof}
The lower bound follows from Ky Fan's principle for $A$.
For the upper bound, write the gap as
\[
\operatorname{tr}(\U_A^\top(A-B)\U_A)
+
\Big[
\operatorname{tr}(\U_A^\top B\U_A)
-
\operatorname{tr}(\U_B^\top B\U_B)
\Big]
+
\operatorname{tr}(\U_B^\top(B-A)\U_B).
\]
The middle term is non-positive by Ky Fan's principle for $B$, and each outer term is at most $\|A-B\|_{(k)}$.
\end{proof}

Importantly, this bound requires no eigengap.
The sensitivity captured by the estimated subspace can therefore remain stable even when individual eigenvectors are not.

\begin{proposition}[Random probes]\label{prop:estimation}
Let $x_1,\ldots,x_n$ denote independent calibration inputs, and draw $P$ probes
$v\sim\mathcal N(0,I_m)$ independently for each input.
Let $\widehat{\M}$ average
\[
\frac{1}{N}\sum_t(J_t^\top v)(J_t^\top v)^\top
\]
over the $nP$ input--probe pairs, and let $\M_n$ be the corresponding average of the exact per-input sensitivity tensors.

Then:
\begin{enumerate}
\item $\widehat{\M}$ is unbiased.
\item For fixed inputs, its relative RMS probe error is at most
\[
\beta\sqrt{\frac{2r}{nP}},
\qquad
r=\frac{(\operatorname{tr}\M_n)^2}{\|\M_n\|_F^2}\le d,
\]
where $\beta\ge1$ measures variation in the per-input traces.
\item Over random inputs,
\[
\mathbb{E}\|\widehat{\M}-\M\|_F^2
=
\frac{V_{\rm content}}{n}
+
\frac{V_{\rm probe}}{nP},
\]
with both terms finite.
\item If $\widehat{\U}$ is the top-$k$ eigenspace of $\widehat{\M}$, then
\[
\operatorname{tr}(\widehat{\U}^\top\M\widehat{\U})
\ge
\sum_{j\le k}\mu_j
-
2\|\widehat{\M}-\M\|_{(k)}.
\]
\end{enumerate}
\end{proposition}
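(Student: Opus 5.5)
Each of the four items follows from a separate short argument, and they all rest on one structural fact. For a fixed input $x$, write $S(x)=\frac1N\sum_t J_t(x)^\top J_t(x)$. Let $\widehat{\M}$ be built from the terms $Y(x,v)=\frac1N\sum_t J_t^\top vv^\top J_t$, where $v$ is independent of $x$.

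Item~1 is a direct computation. Since $\mathbb E_v[vv^\top]=I_m$, we get $\mathbb E_v[Y(x,v)\mid x]=S(x)$. Taking the outer expectation over $x$ gives $\mathbb E S(x)=\M$. Linearity of the average over the $nP$ pairs then gives $\mathbb E\widehat{\M}=\M$, and conditionally on the inputs $\mathbb E[\widehat{\M}\mid x_{1:n}]=\M_n$.

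For item~2 I would fix the inputs and compute the Frobenius variance of one probe term. Write $Y=\frac1N\sum_t J_t^\top vv^\top J_t=B^\top(vv^\top\otimes\cdot)B$. It is cleaner to stack the Jacobians as $K=[J_1;\dots;J_N]/\sqrt N$ acting blockwise. Then $\mathbb E\|Y-S\|_F^2=\mathbb E\|Y\|_F^2-\|S\|_F^2$. The Gaussian fourth-moment identity $\mathbb E[v^\top Av\,v^\top Bv]=\operatorname{tr}A\operatorname{tr}B+2\operatorname{tr}(AB)$, applied with $A_{ts}=J_tJ_s^\top$, gives
\[
\mathbb E\|Y\|_F^2=\|S\|_F^2+\tfrac{1}{N^2}\sum_{t,s}\|J_tJ_s^\top\|_F^2+\ldots .
\]
Bounding the cross terms by Cauchy--Schwarz leaves $\operatorname{Var}\le\|S\|_F^2+(\operatorname{tr}S)^2\le 2(\operatorname{tr}S)^2$. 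Averaging $nP$ independent terms divides the variance by $nP$. The per-input traces $\operatorname{tr}S(x_i)$ must then be compared with $\operatorname{tr}\M_n$; I define $\beta^2=\frac1n\sum_i(\operatorname{tr}S(x_i))^2/(\operatorname{tr}\M_n)^2\ge1$ (by Jensen), which yields the squared error bound $2\beta^2(\operatorname{tr}\M_n)^2/(nP)$. Dividing by $\|\M_n\|_F^2$ produces the stated factor $r$. The inequality $r\le d$ is Cauchy--Schwarz on the eigenvalues.

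I expect this Gaussian fourth-moment bookkeeping with the sum over patches to be the main obstacle. The cross-patch terms $J_tJ_s^\top$ do not collapse to $\operatorname{tr}S$ directly. I would try first to bound $\sum_{t,s}\|J_tJ_s^\top\|_F^2\le(\sum_t\|J_t\|_F^2)^2=N^2(\operatorname{tr}S)^2$, which suffices if the constant $2$ is allowed to absorb it. Failing that, I would enlarge $\beta$.

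Item~3 is the law of total variance. Decompose
\[
\widehat{\M}-\M=(\widehat{\M}-\M_n)+(\M_n-\M).
\]
The two pieces are uncorrelated in Frobenius inner product because $\mathbb E[\widehat{\M}-\M_n\mid x_{1:n}]=0$. The second piece has mean square $V_{\rm content}/n$ with $V_{\rm content}=\mathbb E\|S(x)-\M\|_F^2$. The first has mean square $V_{\rm probe}/(nP)$ with $V_{\rm probe}=\mathbb E_x\mathbb E_v\|Y-S\|_F^2$. Both are finite because the Jacobians are bounded on the compact image set by the standing assumptions, and Gaussian moments of order four are finite.

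Item~4 is Lemma~\ref{lem:capture} with $A=\M$ and $B=\widehat{\M}$, using $\operatorname{tr}(\U_k^\top\M\U_k)=\sum_{j\le k}\mu_j$. Here one should symmetrize $\widehat{\M}$ if needed, though it is already symmetric by construction.
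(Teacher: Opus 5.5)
Your proposal is correct and follows the paper's proof essentially step for step: $\mathbb E_v[vv^\top]=I_m$ for unbiasedness, Isserlis plus Cauchy--Schwarz to bound the single-probe variance by $2(\operatorname{tr}S)^2$ before averaging over the $nP$ independent terms and normalising by $\|\M_n\|_F^2$, the law of total variance for item~3, and Lemma~\ref{lem:capture} with $A=\M$, $B=\widehat{\M}$ for item~4. The one slip is in item~2: because $J_tJ_s^\top$ is not symmetric, the symmetric-matrix form of Isserlis does not apply as written, and the exact variance is $\frac{1}{N^2}\sum_{t,s}\bigl[\langle J_t,J_s\rangle_F^2+\operatorname{tr}\bigl((J_tJ_s^\top)^2\bigr)\bigr]$; then $\langle J_t,J_s\rangle_F^2\le\|J_t\|_F^2\|J_s\|_F^2$ and $\operatorname{tr}(A^2)\le\|A\|_F^2$ give $(\operatorname{tr}S)^2+\|S\|_F^2\le 2(\operatorname{tr}S)^2$ directly, so the constant $2$ suffices and $\beta$ never needs enlarging.
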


\begin{proof}
For (i),
\[
\mathbb{E}_v[(J_t^\top v)(J_t^\top v)^\top]
=
J_t^\top\mathbb{E}[vv^\top]J_t
=
J_t^\top J_t.
\]
Moreover, one backward pass of $v^\top\out$ returns $J_t^\top v$ simultaneously at every patch.

For (ii), independent probes make the centred errors uncorrelated.
Isserlis's theorem expresses their second moment through products $J_t^\top J_s$, and Cauchy--Schwarz bounds the resulting variance by $2(\operatorname{tr}\M_i)^2$.
Averaging over $nP$ samples and normalising by $\|\M_n\|_F^2$ gives the stated bound.

Part (iii) follows from the law of total variance.
Part (iv) follows directly from Lemma~\ref{lem:capture} with $A=\M$ and $B=\widehat{\M}$.
\end{proof}

For the released calibration, the $100$ source images provide $n=400$ random crops, with $P=8$ probes per crop.
Here $r=39.9$ and $\beta^2=1.58$, giving a probe-error bound of approximately $0.20\|\M\|_F$; image sampling contributes approximately $0.14\|\M\|_F$.
The measured variation is smaller: two independent refits with fresh crops and probes differ from the released tensor by 8.4\% of $\|\M\|_F$, their lenses have subspace overlap 0.962, and each loses at most 0.35\% of the sensitivity captured by the other's fitted lens.

\subsection{The lens term as a distance}

\begin{proposition}[Early-feature distance]\label{prop:metric}
The lens term $\dist_{\rm lens}$ of Equation~\ref{eq:lens-distance} is a pseudometric on images evaluated on a common patch grid.
\end{proposition}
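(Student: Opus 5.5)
The plan is to exhibit $\dist_{\rm lens}$ as the Euclidean distance between images of a single fixed feature map, and then read off the pseudometric axioms from those of the Euclidean norm. For a fixed patch grid with $N$ positions, define
\[
\Phi(x)=\big(N^{-1/2}\U_k^\top\tok{1}(\pre x),\ \dots,\ N^{-1/2}\U_k^\top\tok{N}(\pre x)\big)\in\mathbb{R}^{kN}.
\]
This map depends only on $x$. The encoder is frozen, the pre-filter $\pre$ is a fixed deterministic operation, and the lens $\U_k$ is one matrix shared by every image. The key point is that nothing in $\Phi(x)$ depends on the second image $y$.

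The first step is to verify the identity $\dist_{\rm lens}(x,y)=\|\Phi(x)-\Phi(y)\|_2$. Squaring the stacked norm gives a sum of per-patch squared norms:
\[
\|\Phi(x)-\Phi(y)\|_2^2=\frac1N\sum_{t}\big\|\U_k^\top(\tok{t}(\pre x)-\tok{t}(\pre y))\big\|^2 .
\]
This uses only the linearity of $\U_k^\top$ and the block structure of the Euclidean norm. Taking square roots reproduces Equation~\ref{eq:lens-distance}.

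Given this identity, the axioms follow from those of the norm. Non-negativity and symmetry are immediate. The value at identical images is zero, since $\Phi(x)-\Phi(x)=0$. The triangle inequality comes from writing
\[
\Phi(x)-\Phi(z)=(\Phi(x)-\Phi(y))+(\Phi(y)-\Phi(z))
\]
and applying Minkowski's inequality in $\mathbb{R}^{kN}$.

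The statement claims only a pseudometric, so it remains to note why this is the right notion. Distinct images may give $\dist_{\rm lens}=0$, because the composite map $x\mapsto\Phi(x)$ need not be injective; the null space of $\U_k^\top$ already makes it non-injective. So identity of indiscernibles is not claimed.

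The only real subtlety, and the step I would treat carefully, is the common-grid hypothesis. The triangle inequality needs all three images to be mapped into the same space $\mathbb{R}^{kN}$ by the same $\Phi$. This requires that they share the centre crop and patch grid, so that $N$ and the positional embeddings (hence $\tok{t}$) are identical across images. I would make this explicit by restricting to images of a fixed size after cropping. I would also remark that clipping in $\pre$ is harmless here: it only needs to be a deterministic function of $x$, not affine, for the argument to go through.
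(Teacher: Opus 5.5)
Your proposal is correct and matches the paper's proof. Both write $\dist_{\rm lens}(x,y)=\|\Phi(x)-\Phi(y)\|_2$ for the single fixed map $\Phi(x)=N^{-1/2}\operatorname{vec}\bigl(\U_k^\top\tok{1}(\pre x),\ldots,\U_k^\top\tok{N}(\pre x)\bigr)$ and inherit the pseudometric axioms from the Euclidean norm; your insistence on a common crop and grid also mirrors the paper's remark that independently chosen crops would not define one $\Phi$ across a triangle. (A minor aside: the null space of $\U_k^\top$ alone does not force $x\mapsto\Phi(x)$ to be non-injective, which instead follows from continuity and $kN<3HW$, but this is immaterial since identity of indiscernibles is not claimed.)
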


\begin{proof}
With the pre-filter, lens, and patch grid fixed,
\[
\dist_{\rm lens}(x,y)
=
\|\Phi(x)-\Phi(y)\|_2,
\]
where
\[
\Phi(x)
=
N^{-1/2}
\operatorname{vec}
\bigl(
\U_k^\top\tok{1}(\pre x),
\ldots,
\U_k^\top\tok{N}(\pre x)
\bigr).
\]
Euclidean distance after a fixed map is non-negative, symmetric, satisfies the triangle inequality, and vanishes whenever the mapped representations agree.
Hence $\dist_{\rm lens}$ is a pseudometric.
\end{proof}

Zero distance means that every projected patch feature agrees, giving a pair of lens metamers.
The result also applies to \namefast{} and \namelin{}.
A crop chosen independently for each pair would not define a single map $\Phi$ across a triangle and would therefore not inherit this guarantee.

\begin{proposition}[Local quadratic form]\label{prop:local}
Let
\[
A_t(x)
=
\frac{\partial\tok{t}(\pre x)}{\partial x}.
\]
For a small pixel perturbation $\varepsilon$,
\[
\dist_{\rm lens}(x,x+\varepsilon)^2
=
\varepsilon^\top G(x)\varepsilon
+
o(\|\varepsilon\|^2),
\]
where
\[
G(x)
=
\frac1N\sum_t
A_t(x)^\top
\U_k\U_k^\top
A_t(x).
\]
The matrix $G(x)$ is positive semidefinite and has rank at most $kN$.
If $\out(\pre x)\neq0$, the output term is $O(\|\varepsilon\|^2)$.
\end{proposition}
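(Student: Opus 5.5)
The plan is a first-order Taylor expansion of the feature map $\Phi$ from the proof of Proposition~\ref{prop:metric}. By that proposition, $\dist_{\rm lens}(x,x+\varepsilon)^2=\|\Phi(x+\varepsilon)-\Phi(x)\|^2=\frac1N\sum_t\|\U_k^\top(\tok{t}(\pre(x+\varepsilon))-\tok{t}(\pre x))\|^2$. On the unsaturated region the pre-filter $\pre$ is affine, and the standing assumptions make each map $x\mapsto\tok{t}(\pre x)$ twice differentiable with bounded derivatives on the compact image set. We can therefore write $\tok{t}(\pre(x+\varepsilon))-\tok{t}(\pre x)=A_t(x)\varepsilon+r_t(\varepsilon)$, with $\|r_t(\varepsilon)\|=O(\|\varepsilon\|^2)$ uniformly in $t$ and $x$. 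For a fixed grid, $N$ is finite, so uniformity in $t$ is automatic.

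Next, expand the square term by term. Write $a_t=\U_k^\top A_t\varepsilon$ and $b_t=\U_k^\top r_t$. Then $\|a_t+b_t\|^2=\|a_t\|^2+2\langle a_t,b_t\rangle+\|b_t\|^2$. Since $\|\U_k^\top\|\le1$, the cross term is bounded by $2\|A_t\|\|\varepsilon\|\cdot O(\|\varepsilon\|^2)=O(\|\varepsilon\|^3)$, and the last term is $O(\|\varepsilon\|^4)$. Averaging over $t$ gives $\varepsilon^\top G(x)\varepsilon+O(\|\varepsilon\|^3)$, which is $o(\|\varepsilon\|^2)$. The one point that needs care is uniformity. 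If only first derivatives are available, I would use $r_t=o(\|\varepsilon\|)$ instead, and the argument still gives $o(\|\varepsilon\|^2)$.

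The structural claims about $G(x)$ follow directly. It is positive semidefinite because $\varepsilon^\top G\varepsilon=\frac1N\sum_t\|\U_k^\top A_t\varepsilon\|^2\ge0$. Each summand $A_t^\top\U_k\U_k^\top A_t$ factors through $\mathbb{R}^k$, so it has rank at most $k$. Subadditivity of rank over the $N$ summands then gives $\operatorname{rank}G(x)\le kN$.

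For the output term, let $f(\varepsilon)=1-\cos(\out(\pre x),\out(\pre(x+\varepsilon)))$. The assumption $\out(\pre x)\ne0$ makes the cosine smooth near $\varepsilon=0$, since the norm in the denominator is bounded away from zero in a neighbourhood. Moreover $f\ge0$ and $f(0)=0$, so $\varepsilon=0$ is a minimum and $\nabla f(0)=0$. This is the main subtlety: the first-order term vanishes because we are at a minimum, not because of any explicit cancellation. A second-order Taylor bound with bounded Hessian on the compact set then gives $f=O(\|\varepsilon\|^2)$, and the factor $\tfrac12$ in the output term does not change this. Alternatively, one can check directly that the cosine distance between $z$ and $z+w$ equals $\|(I-\hat z\hat z^\top)w\|^2/(2\|z\|^2)+O(\|w\|^3)$, where $\hat z=z/\|z\|$, and then use $\|w\|=O(\|\varepsilon\|)$.
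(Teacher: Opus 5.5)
Your proof is correct. For the quadratic form, positive semidefiniteness and the rank bound it follows the paper's argument exactly: expand each token to first order, then project, square and average, and note that each summand factors through $\mathbb{R}^k$. Your explicit control of the cross term is a more careful version of the paper's one-line step.

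The only genuine difference is in the output term. The paper uses the identity $1-\cos(a,b)=\tfrac12\|\hat a-\hat b\|^2$ for unit vectors to rewrite that term as $\tfrac14\|\hat z(\pre(x+\varepsilon))-\hat z(\pre x)\|^2$, the square of a first-order change. This needs only that $\hat z\circ\out\circ\pre$ be differentiable, hence locally Lipschitz, at $x$; that is exactly where $\out(\pre x)\neq0$ enters.

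Your primary route is valid but heavier. You argue that $f\ge0$ and $f(0)=0$ force $\nabla f(0)=0$, then apply a second-order Taylor bound. This requires $\out$ to be twice differentiable, which the standing assumptions allow; with only first derivatives, the minimum argument gives just $f=o(\|\varepsilon\|)$. The Hessian bound should also be taken on a neighbourhood of $x$ where $\|\out\|$ stays away from zero, not on the whole compact set.

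Your alternative expansion $\|(I-\hat z\hat z^\top)w\|^2/(2\|z\|^2)+O(\|w\|^3)$ is essentially the paper's route with the leading term made explicit. It gives slightly more than the proposition states: the local quadratic form of the output term itself, $\tfrac14\|z\|^{-2}\,w^\top(I-\hat z\hat z^\top)w$, where $w$ is the linearised output change.
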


\begin{proof}
Each patch feature has the expansion
\[
\tok{t}(\pre(x+\varepsilon))
-
\tok{t}(\pre x)
=
A_t(x)\varepsilon
+
o(\|\varepsilon\|).
\]
Projecting, squaring, and averaging over patches gives the stated quadratic form.
Each summand has rank at most $k$, hence $\operatorname{rank}G(x)\le kN$.

Let $\hat z=\out/\|\out\|$.
The output term can be written as
\[
\frac14
\|
\hat z(\pre(x+\varepsilon))
-
\hat z(\pre x)
\|^2,
\]
which is the square of a first-order change and is therefore $O(\|\varepsilon\|^2)$.
\end{proof}

Thus, $G(x)$ is the pullback of the fixed feature-space form $\U_k\U_k^\top$ into pixel space.
The lens is fixed globally, but its local pixel geometry depends on the image.
The lens term is the chordal distance induced by this feature map, not a geodesic length.

Moreover,
\[
\varepsilon^\top G(x)\varepsilon
=
\frac1N\sum_t
\|\U_k^\top A_t(x)\varepsilon\|^2,
\]
so the first-order metamers are
\[
\ker G(x)
=
\{
\varepsilon:
\U_k^\top A_t(x)\varepsilon=0
\ \forall t
\}.
\]
Their dimension is at least $3HW-kN$.
Likewise, in token space, any $\delta\perp\operatorname{span}\U_k$ is invisible to the lens and satisfies
\[
\delta^\top\M\delta
\le
\mu_{k+1}\|\delta\|^2
\]
by Equation~\ref{eq:app_complement}.

\subsection{Refining the pixel grid}

For \namelin{}, define the per-patch lens energy
\[
\rho_N(t)
=
\|
\U_k^\top
(
\tok{t}(\pre x)-\tok{t}(\pre y)
)
\|^2,
\]
so that $\dist_{\rm lens}^2$ is its average over the patch grid.
At block~0, the patch token is the affine map
\[
E\operatorname{vec}(P_t x)+e_t,
\]
and the offset cancels between two images.
Hence
\[
\rho_N(t)
=
\|
W\operatorname{vec}(P_t(\pre x-\pre y))
\|^2,
\qquad
W=\U_k^\top E,
\]
which is the squared response of $k$ fixed $14\times14$ RGB filters.

\begin{theorem}[Grid consistency of \namelin{}]\label{thm:grid-lin}
Let two images on the unit square have an $L$-Lipschitz difference $\delta=x-y$, and assume that the pre-filter does not saturate.
Sample them at $n=14g$ pixels per unit length, giving $N=g^2$ patches.
Let $W_{\rm dc}\in\mathbb{R}^{k\times3}$ contain the DC gains of the filters and define
\[
\rho(u)=\|W_{\rm dc}\delta(u)\|^2.
\]
Then
\begin{equation}
\left|
\dist_{{\rm lens},N}^2
-
\int\rho(u)\,du
\right|
\le
\frac{c_1L}{n}
+
\frac{c_2L^2}{n^2}
=
O(N^{-1/2}),
\label{eq:app_grid_lin}
\end{equation}
where $c_1$ and $c_2$ depend only on
$\|\delta\|_\infty$,
$\|W\|_2$,
$\|W_{\rm dc}\|_2$,
and the colour transform.
For band-limited images with angular frequency at most $B$, $L\le\sqrt6B$.
\end{theorem}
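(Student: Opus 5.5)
The plan is to compare each patch's response with its DC approximation, and then compare the resulting Riemann sum with the integral.

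\textbf{Reduce to the filtered difference.} Because the pre-filter does not saturate, $\pre$ is affine. Hence $\pre x-\pre y=\pre_0\delta$, where $\pre_0$ is the linear part: the colour transform $T$, a Gaussian blur of the chroma channels, and $T^{-1}$. Write $\tilde\delta=\pre_0\delta$. Blurring with a normalised kernel preserves the Lipschitz constant and the sup norm up to factors set by the colour matrices, so $\tilde\delta$ is $c_TL$-Lipschitz with $\|\tilde\delta\|_\infty\le c_T\|\delta\|_\infty$. The DC gains in the statement are most naturally read as acting after this linear colour map. I will take $\rho(u)=\|W_{\rm dc}\tilde\delta(u)\|^2$, and absorb the difference from $\delta$ into the definition; the constants are allowed to depend on the colour transform.

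Now fix a patch $t$ with centre $u_t$. It covers a square of side $14/n=1/g$. Write the sampled patch as $\operatorname{vec}(P_t\tilde\delta)=\operatorname{vec}(\mathbf 1\otimes\tilde\delta(u_t))+r_t$, where each pixel of $r_t$ is bounded by $c_TL\cdot\sqrt2\cdot 7/n$. This gives $\|r_t\|_2\le c\,L\cdot 14\sqrt3\cdot 7\sqrt2/n$. Since $W(\mathbf 1\otimes a)=W_{\rm dc}a$ by the definition of the DC gains, we get $W\operatorname{vec}(P_t\tilde\delta)=W_{\rm dc}\tilde\delta(u_t)+Wr_t$.

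\textbf{Per-patch expansion.} Squaring gives
\[
\rho_N(t)=\rho(u_t)+2\langle W_{\rm dc}\tilde\delta(u_t),Wr_t\rangle+\|Wr_t\|^2 .
\]
The cross term is at most $2\|W_{\rm dc}\|_2\|\tilde\delta\|_\infty\|W\|_2\|r_t\|$, which is of order $L/n$. The last term is at most $\|W\|_2^2\|r_t\|^2$, of order $L^2/n^2$. Averaging over the $N$ patches bounds $|\dist_{{\rm lens},N}^2-\frac1N\sum_t\rho(u_t)|$ by $c_1'L/n+c_2L^2/n^2$.

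\textbf{Midpoint rule.} Next compare $\frac1N\sum_t\rho(u_t)$ with $\int\rho$. On each cell, $|\rho(u)-\rho(u_t)|\le\|W_{\rm dc}\|_2^2\,\bigl|\|\tilde\delta(u)\|+\|\tilde\delta(u_t)\|\bigr|\cdot\|\tilde\delta(u)-\tilde\delta(u_t)\|$. This is at most $2\|W_{\rm dc}\|_2^2\|\tilde\delta\|_\infty c_TL\cdot\sqrt2\cdot 7/n$, again of order $L/n$, and it is added to $c_1$. Since $n=14\sqrt N$, the total is $O(N^{-1/2})$.

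\textbf{Band-limited bound.} For $\delta$ with angular frequency at most $B$, Bernstein's inequality gives $\|\partial_i\delta_c\|_\infty\le B\|\delta_c\|_\infty$ in each of the two coordinates and three channels. The gradient norm over three channels and two directions is then at most $\sqrt6\,B$, measured against the sup norm. Normalising by $\|\delta\|_\infty$, or absorbing it into the constants, gives $L\le\sqrt6B$.

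\textbf{Main obstacle.} The hard step is the boundary treatment. Reflect padding in the Gaussian, and cells at the edge of the unit square, must preserve the Lipschitz bound. Reflection preserves Lipschitz constants, so I would verify that first. The other delicate point is the precise meaning of $W_{\rm dc}$ relative to the colour transform, which I would fix as described above.
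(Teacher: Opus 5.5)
Your argument has the same skeleton as the paper's proof. You replace each patch by a constant patch via Lipschitz continuity and use $W(\mathbf 1\otimes a)=W_{\rm dc}a$. You then bound the cross term by $O(L/n)$ and the square by $O(L^2/n^2)$, pass from patch centres to $\int\rho$ using the Lipschitz constant of $\rho$, and finish with channel-wise Bernstein.

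The gap is in how you handle the pre-filter. The theorem's limit is $\int\|W_{\rm dc}\delta(u)\|^2\,du$ with the \emph{raw} difference $\delta=x-y$. You instead prove the bound against $\int\|W_{\rm dc}\tilde\delta(u)\|^2\,du$ with $\tilde\delta=\pre_0\delta$, and justify the switch by ``absorbing the difference into the definition.'' That does not work, for three reasons:
\begin{itemize}
\item On constant colour fields $\pre_0$ is the identity: the linear parts of $T$ and $T^{-1}$ cancel, and a normalised Gaussian reproduces constants. So there is no colour matrix to absorb into $W_{\rm dc}$. What separates $\tilde\delta$ from $\delta$ is the spatial chroma blur, which is not a pointwise linear function of $\delta(u)$.
\item $\sigma=2$ pixels is $2/n$ in unit-square coordinates, so your $\tilde\delta$, and with it your limit, changes with $n$.
\item $\tilde\delta(u_t)$ at a patch centre lies between pixels, so the discrete filter does not define it at all.
\end{itemize}
As written, you compare $\dist_{{\rm lens},N}^2$ with an $n$-dependent surrogate, not with the stated integral.

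The fix is to use that constant-preservation property directly, as the paper does. Every filtered pixel of patch $t$ is built, through $T$, a normalised average and $T^{-1}$, from raw values $\delta(q)$. These pixels $q$ lie in the patch enlarged by the filter's support. Hence each filtered pixel differs from $\delta(u_t)$ itself by at most $\kappa Lr_p/n$, where $\kappa=\sqrt3\|T\|_2\|T^{-1}\|_2$ (bounding the three YCbCr channels separately) and $r_p$ is the enlarged radius; the paper takes $r_p=13\sqrt2$ pixels. Reflect padding only substitutes pixels that are no farther from $u_t$, so the same radius covers the image boundary and it is not a separate obstacle.

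This gives $W\operatorname{vec}(P_t(\pre x-\pre y))=W_{\rm dc}\delta(u_t)+b_t$ with $\|b_t\|\le14\kappa Lr_p\|W\|_2/n$. From there your expansion and midpoint steps go through unchanged, with the stated $\rho$ and with $\|\delta\|_\infty$ in place of $\|\tilde\delta\|_\infty$. Keeping $\tilde\delta$ would instead need two extra $O(L/n)$ estimates, one for $\|\tilde\delta-\delta\|_\infty$ and one for the discrete-versus-continuous blur; the direct comparison avoids both.

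Finally, the band-limited claim needs no normalisation. Pixel values lie in $[0,1]$, so $\|\delta_c\|_\infty\le1$, Bernstein gives $|\partial_i\delta_c|\le B$ directly, and hence $L\le\sqrt6B$.
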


\begin{proof}
Every pixel contributing to patch $t$, including those reached through the chroma filter, lies within $r_p=13\sqrt2$ pixels of the patch centre $c_t$.
Lipschitz continuity therefore gives
\[
\rho_N(t)
=
\|W_{\rm dc}\delta(c_t)+b_t\|^2,
\qquad
\|b_t\|
\le
\frac{14\kappa Lr_p\|W\|_2}{n},
\]
where
$\kappa=\sqrt3\|T\|_2\|T^{-1}\|_2$.
Consequently,
\[
|\rho_N(t)-\rho(c_t)|
\le
2\|W_{\rm dc}\|_2\|\delta\|_\infty\|b_t\|
+
\|b_t\|^2.
\]
The function $\rho$ is
$2\|W_{\rm dc}\|_2^2\|\delta\|_\infty L$-Lipschitz, so replacing its value at each patch centre by the corresponding cell average contributes another $O(n^{-1})$ term.
Summing the two errors yields Equation~\ref{eq:app_grid_lin}.
The band-limited statement follows from Bernstein's inequality applied channel-wise.
\end{proof}

The limiting form depends only on the filters' DC gains, which contain just 0.28\% of their total energy.
Thus, the learned filters are predominantly band-pass edge and texture detectors.
The experimentally relevant resolutions lie well before this asymptotic DC limit; the resolution stability measured in Section~\ref{sec:resolution} is therefore a property of this finite-resolution band-pass regime.
For block-1 features, one attention step intervenes between pixels and tokens, so resolution stability is measured empirically rather than covered by Theorem~\ref{thm:grid-lin}.

\subsection{One lens for every image}

For one image, define
\[
\M_x
=
\frac1N\sum_t J_t(x)^\top J_t(x),
\qquad
\M=\mathbb{E}_x\M_x,
\]
and let $\U_x$ denote the top-$k$ eigenspace of $\M_x$.

\begin{proposition}[One lens for every image]\label{prop:global}
\begin{enumerate}
\item The global lens $\U_k$ maximises
\[
\mathbb{E}_x
\operatorname{tr}(V^\top\M_xV)
\]
over all rank-$k$ orthonormal bases $V$.
\item For every image,
\[
\operatorname{tr}(\U_x^\top\M_x\U_x)
-
\operatorname{tr}(\U_k^\top\M_x\U_k)
\le
2\|\M_x-\M\|_{(k)}.
\]
\end{enumerate}
\end{proposition}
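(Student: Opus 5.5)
For part (i), the plan is to use linearity of trace and expectation to move the average inside. For any fixed rank-$k$ orthonormal $V$, the map $A\mapsto\operatorname{tr}(V^\top AV)$ is linear. Since $\M_x$ is bounded on the compact image set under the standing assumptions, the expectation exists and
\[
\mathbb{E}_x\operatorname{tr}(V^\top\M_xV)=\operatorname{tr}\big(V^\top\mathbb{E}_x[\M_x]V\big)=\operatorname{tr}(V^\top\M V).
\]
The problem then reduces to maximising $\operatorname{tr}(V^\top\M V)$ over orthonormal $V$. By Ky Fan's maximum principle, exactly as in Proposition~\ref{prop:optimality}, the maximum is $\sum_{j\le k}\mu_j$ and it is attained at $V=\U_k$. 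One detail needs checking: $V$ must be chosen independently of $x$, so that it can be pulled out of the expectation. This holds here, because the optimisation is over a single global basis.

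For part (ii), I would apply Lemma~\ref{lem:capture} directly with $A=\M_x$ and $B=\M$. Both matrices are symmetric positive semidefinite. The top-$k$ eigenbasis of $\M_x$ is $\U_x$ and that of $\M$ is $\U_k$. The lemma then gives
\[
0\le\operatorname{tr}(\U_x^\top\M_x\U_x)-\operatorname{tr}(\U_k^\top\M_x\U_k)\le2\|\M_x-\M\|_{(k)},
\]
which is the claim, together with a free non-negativity statement. If eigenvalues of either matrix are tied, $\U_x$ or $\U_k$ may be any choice of top-$k$ eigenbasis. The lemma does not need an eigengap, and Ky Fan's principle holds for every such choice, so no further argument is needed.

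I expect no real obstacle. The only point to make explicit is the interchange of expectation and trace in (i), which relies on integrability of $\M_x$; this follows from the bounded-Jacobian assumption stated at the start of Appendix~\ref{app:proofs}. As an optional interpretive remark, taking expectations in (ii) shows that the average per-image loss from using the global lens is at most $2\,\mathbb{E}_x\|\M_x-\M\|_{(k)}$.
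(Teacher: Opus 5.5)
Your proposal is correct and follows the paper's proof essentially verbatim. In (i), linearity of trace and expectation reduces the problem to maximising $\operatorname{tr}(V^\top\M V)$, which Proposition~\ref{prop:optimality} (Ky Fan) settles; (ii) is Lemma~\ref{lem:capture} applied with $A=\M_x$ and $B=\M$, and your remarks on integrability and eigenvalue ties only make explicit what the paper leaves implicit.
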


\begin{proof}
For (i), linearity gives
\[
\mathbb{E}_x
\operatorname{tr}(V^\top\M_xV)
=
\operatorname{tr}(V^\top\M V),
\]
which is maximised by $\U_k$ by Proposition~\ref{prop:optimality}.
Part (ii) follows from Lemma~\ref{lem:capture} with $A=\M_x$ and $B=\M$.
\end{proof}

Thus, the global lens is the rank-$k$ subspace that maximises sensitivity on average over images, while its loss relative to an image-specific optimum is controlled directly by the difference $\M_x-\M$.
Empirically, the global lens gives a median image-wise sensitivity loss of only 5.0\%, consistent with the strong fit stability reported in Appendix~\ref{app:md-stability}.

\subsection{The output term and the triangle inequality}

For unit vectors $a$ and $b$,
\[
1-\cos(a,b)
=
\frac12\|a-b\|^2.
\]
This is a squared Euclidean distance and does not generally satisfy the triangle inequality.
For example, three unit vectors separated by angles $0$, $\theta$, and $2\theta$, with $0<\theta<\pi/2$, satisfy
\[
1-\cos(2\theta)
>
2(1-\cos\theta).
\]
Thus, the lens term is a pseudometric, but the full \name{} distance is not guaranteed to be one.
Section~\ref{sec:geometry} measures how rarely the full distance violates the triangle inequality in practice.
A single Euclidean norm over concatenated lens features and a suitably scaled unit output would restore the triangle inequality and gives similar rankings (Appendix~\ref{app:output-weight}).

\FloatBarrier
\section{Usage and reproducibility}\label{app:usage}
The released package scores a pair of image paths, arrays, or tensors in three lines.
It centre-crops each image to a multiple of 14 pixels, applies the chroma pre-filter, and evaluates Equation~\ref{eq:lens-distance}.
A differentiable interface is also provided.

\begin{lstlisting}[style=jldcode]
from jld import JLD

jld = JLD.pretrained("full")    # use "fast" for JLD-fast
d = jld(ref, dist)
\end{lstlisting}

The returned scalar is a distance, so smaller values indicate greater similarity.
The supplied lens can be used directly and requires no refitting.
% arXiv: listings below follow the public release API; the earlier research-code listings are kept commented.
% For a new domain, it can instead be recalibrated from unlabelled images without human ratings (Appendix~\ref{app:refit}). The following is the example code.
% Here we evaluate $224\times224$ centre crops; replacing \texttt{crop224} with \texttt{native} evaluates the full image after cropping it to the patch grid.
% 
% \begin{lstlisting}[style=jldcode]
% from jld.images import load_pair
% from jld.metric import JLD, JLDConfig, WorkspaceStats
% from jld.observers import make_observer
% 
% r = "/path/to/jlens-artifacts"
% s = (
%     "cache/workspace/dinov2_s14/"
%     "iso_report_cls_seed0_n100_p8/L1.npz"
% )
% 
% cfg = JLDConfig(
%     layers=(1,),
%     k=64,
%     kind="jlens",
%     form="euclid",
%     pooling="mean_sq",
%     chroma_sigma=2.0,
%     report_weight=0.5,
% )
% 
% enc = make_observer(
%     "dinov2_s14",
%     layers=[1],
%     report="cls",
%     device="cpu",
% )
% 
% metric = JLD(
%     enc,
%     {1: WorkspaceStats.load(f"{r}/{s}")},
%     cfg,
% )
% 
% data = f"{r}/data/tid2013/tid2013"
% x, y = load_pair(
%     f"{data}/reference_images/I04.BMP",
%     f"{data}/distorted_images/i04_10_4.bmp",
%     14,
%     "crop224",
% )
% 
% print(f"{metric(x, y):.9f}")
% \end{lstlisting}
For a new domain, it can instead be recalibrated from unlabelled images without human ratings (Appendix~\ref{app:refit}). Inputs may be file paths, arrays, or tensors in $[0,1]$, and are evaluated at native resolution. The same object returns the per-patch lens map and a differentiable distance for use as a loss.

\begin{lstlisting}[style=jldcode]
from jld import JLD

jld = JLD.pretrained("full")
ref, dist = "reference.png", "distorted.png"

d = jld(ref, dist)                    # distance, lower is more similar
local = jld.map(ref, dist)            # per-patch lens response
loss = jld.distance_tensor(ref, dist) # differentiable scalar
\end{lstlisting}

\Needspace{4\baselineskip}
\subsection*{Refitting the lens}

To fit a replacement lens, use representative unlabelled images that are disjoint from evaluation data.
The released configuration uses 100 images, four $224\times224$ crops per image, eight Gaussian probes per crop, and seed $0$.

% \begin{lstlisting}[style=jldcode]
% from pathlib import Path
% import numpy as np
% 
% from jld.lens import fit_broadcast, basis
% 
% paths = sorted(Path("calibration_images").glob("*.png"))[:100]
% assert len(paths) == 100
% 
% enc = make_observer(
%     "dinov2_s14",
%     layers=[1],
%     report="cls",
%     device="cpu",
% )
% 
% fit = fit_broadcast(
%     enc,
%     paths,
%     layers=[1],
%     n_crops=4,
%     crop=224,
%     n_probes=8,
%     probe="iso",
%     seed=0,
%     batch=8,
% )
% 
% fit[1].metadata["image_paths"] = [str(p) for p in paths]
% fit[1].save("calibration_L1.npz")
% 
% U = basis(fit[1], k=64, kind="jlens")
% np.save("lens64.npy", U)
% 
% metric = JLD(enc, {1: fit[1]}, cfg)
% \end{lstlisting}
\begin{lstlisting}[style=jldcode]
# fit a 64-direction lens on block 1 from a folder of unlabelled images
python scripts/fit_lens.py --images calib_images --out mine.npz

# score with the fitted lens
jld reference.png distorted.png --lens mine.npz
\end{lstlisting}

Fitting keeps all encoder parameters frozen but requires gradients with respect to intermediate activations, so it must not be run in inference mode. An accelerator reduces fitting time.
For a new domain, only the unlabelled calibration corpus should normally change. The block, rank, and output weight should be changed only with separate development evidence. The corpus manifest, random seed, encoder checkpoint, configuration, and fitted statistics should be stored with the lens.

\Needspace{4\baselineskip}
\section{Geometry of the distance}\label{app:geo-fingerprint}

This section reports additional measurements supporting Section~\ref{sec:geometry}, the feature energy retained by the lens, an additional brightness--noise plane, the stability of a single global lens across images, and the local geometry within an image.

\Needspace{4\baselineskip}
\subsection{Iso-distance contours: brightness against noise}\label{app:geo-contours2}

The third plane of Figure~\ref{fig:geo-contours} compares a uniform brightness offset with Gaussian noise on TID2013 image I23, with both directions normalised to equal RMS pixel energy. MSE is nearly isotropic on this plane (anisotropy 1.02), whereas \name{} stretches its contours by a factor of 2.78 along the brightness direction.
LPIPS-VGG and DISTS give similar anisotropies of 2.74 and 2.84, while the local pullback of \name{} predicts 3.14.
At fixed \name{} distance $0.08$, PSNR ranges from 19.6\,dB for a brightness change to 32.6\,dB for noise.
Conversely, on a ring of equal MSE at 29.7\,dB, \name{} varies 4.4-fold, from 0.026 for darkening to 0.114 for noise.
This provides another example in which equal pixel error corresponds to substantially different perceptual severity.

% \Needspace{14\baselineskip}% arXiv layout: replaced by the 18-line guard below
\Needspace{18\baselineskip}% arXiv layout: keep the wrapped figure on the page of its heading
\subsection{One lens serves every image}\label{app:geo-local-lens}

\begin{wrapfigure}[16]{r}{0.29\linewidth}
\centering\vspace{-12pt}
\includegraphics[width=\linewidth,trim=0 0 197 0,clip]{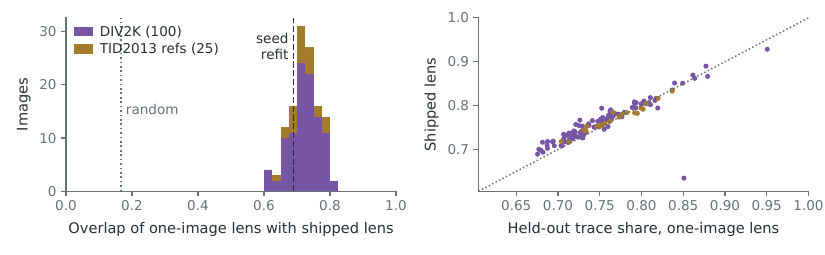}\\[1pt]
\includegraphics[width=\linewidth,trim=199 0 0 0,clip]{v22/app_geo_local_lens.pdf}
\captionsetup{font=footnotesize,skip=2pt}
\caption{Top: subspace overlap with the released lens. Bottom: sensitivity captured on an independent refit.}
\label{fig:geo-local-lens}
\vspace{-5pt}
\end{wrapfigure}

The released lens is one fixed matrix shared by every image.
To test whether this global choice loses important image-specific structure, we fit a separate lens to each of the 100 DIV2K calibration images and 25 TID2013 references, using the calibration procedure of Appendix~\ref{app:fit} restricted to that image.
Each image is fitted twice with independent randomness.
We compare two lenses $\U$ and $\widetilde{\U}$ through their subspace overlap
$\|\U^\top\widetilde{\U}\|_F^2/k$
(Appendix~\ref{app:md-stability}).
A one-image lens has median overlap 0.72 with the released lens, including 0.710 on the held-out TID2013 references.
For comparison, independently refitting the same image gives median overlap 0.69, while two random 64-dimensional subspaces have expected overlap 0.167 (Figure~\ref{fig:geo-local-lens}, top).

We also evaluate sensitivity on the independent refit for each image.
On this held-out estimate, the released lens captures more sensitivity than the first one-image fit for 103 of the 125 images (Figure~\ref{fig:geo-local-lens}, bottom).
Pooling the 100 DIV2K one-image estimates also recovers the released lens with subspace overlap 0.954.
These results indicate that much of the discrepancy between a one-image lens and the global lens comes from estimation noise rather than stable image-specific structure, supporting the use of one lens across images.

% Worker A (v23.3): moved from sections/geometry.tex (W12, v23.1) for the appendix geometry section.
% Integration: \input{sections/A_geo_field_app} at the end of sections/app_geometry.tex.
\subsection{The metric tensor field}\label{sec:geo-field}\label{app:geo-field}
% W12 (v23.1, new way to see the distance). Source: figures/v23/field_numbers.json (masking.*, orientation.*, colour.*, linearity_max_rel_dev_half_eps); scripts/v23/field_measure.py and field_figure.py.
% Owner: W12 (v23.1). Numbers: figures/v23/field_numbers.json; scripts scripts/v23/field_measure.py (metrics) and scripts/v23/field_figure.py (figure).
\begin{figure}[t]
\centering
\includegraphics[width=\linewidth]{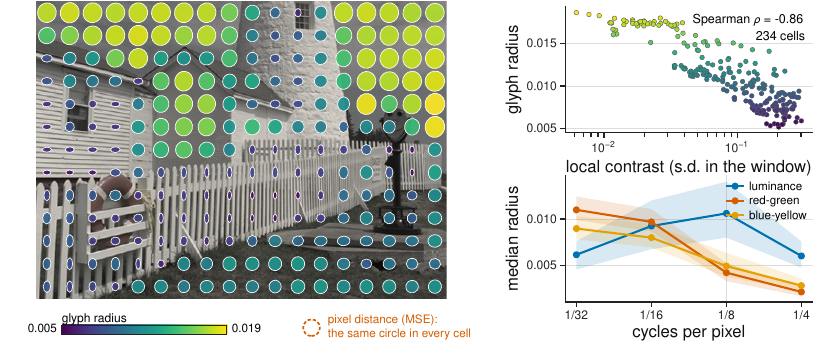}
% \caption{The metric tensor field of \name{} over TID2013 I19: one glyph per $28\times28$ pixel cell, the local form $G$ pulled back onto windowed gratings of 1/8 cycle per pixel.
% The glyph radius in a direction is the lens displacement per unit pixel energy of a grating whose wave vector points that way; a pixel distance draws the dashed circle in every cell.
% Right: glyph radius against local contrast, and the median radius per colour axis against spatial frequency (bands: interquartile range over the 234 cells).}

\caption{Local form $G$ of \name{} over TID2013 I19, one glyph per $28\times28$ cell (gratings of 1/8 cycle per pixel); the dashed circle is a pixel distance.
Right: glyph radius against local contrast, and median radius per colour axis against spatial frequency (bands: interquartile range).}
\label{fig:geo-field}
\end{figure}

% Figure~\ref{fig:geo-field} draws the local form $G$ itself over a photograph: in each of 234 cells of I19 we measure the lens displacement of windowed gratings along $x$, along $y$ and their sum, in luminance and in the two opponent colour directions, by finite differences of the released lensed tokens (halving the step changes a radius by at most 0.65\%).
% v23.6 P6: plain definition of the glyph (no reading tutorial).
In this section, we draw the local form $G$ over an image (Figure~\ref{fig:geo-field}).
Each glyph is $G$ in one $28\times28$ pixel cell: its radius along a direction is the lens displacement per unit pixel energy of a faint grating whose wave vector points in that direction.
A large glyph marks a cell where a small change already moves the distance, and an elongated glyph points along the grating wave direction to which it responds most.
We measure $G$ in each of the 234 cells of TID2013 I19 with windowed gratings along $x$, along $y$ and their sum, in luminance and in the two opponent colour directions, by finite differences of the released lensed tokens (halving the step changes a radius by at most 0.65\%).
Where a pixel distance draws one circle everywhere, \name{} draws a masking map: the glyph radius falls with the local contrast of the cell (Spearman $-0.86$; $-0.95$ against gradient energy), and the smoothest quarter of the image is 2.3 times as sensitive as the most textured quarter, the behaviour the grating study of Appendix~\ref{app:csf} finds on flat backgrounds.
The glyphs stretch along the local structure: in 91\% of the 86 cells with a clear gradient direction the stiff axis of $G$ lies more than $60^\circ$ from the gradient, so a grating that cuts across the fence is seen and one aligned with it is masked.
Colour depends on scale: colour sensitivity is 1.65 times luminance at 1/32 cycle per pixel and 0.44 times at 1/8, and red-green sensitivity falls 5.1-fold from 1/32 to 1/4 while luminance peaks at 1/8.

\section{Reading the distance on real images}\label{app:showcase}

This section shows \name{} on images that can be judged directly by the reader.
Following prior perceptual-distance work, we use two-alternative forced-choice (2AFC) examples as in LPIPS \citep{zhang2018lpips} and DreamSim \citep{fu2023dreamsim}, and group maximum differentiation (gMAD) \citep{ma2016gmad} as in DISTS \citep{ding2022dists}.
All examples come from the held-out KADID-10k test split \citep{lin2019kadid}, containing 65 references, 25 distortion families, and five levels per family.
The scores are produced by the released configuration used in Table~\ref{tab:main}, and each figure states its selection rule.

\subsection{2-AFC}

\begin{figure}[t]
\centering
\includegraphics[width=\linewidth]{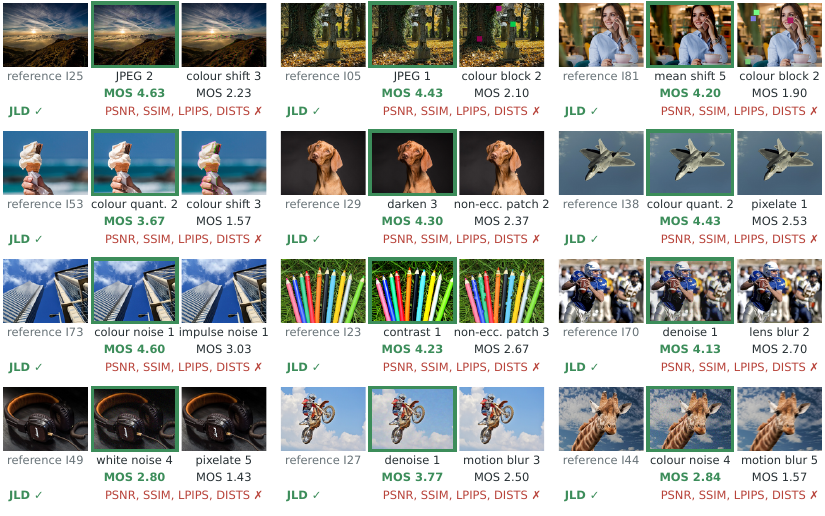}
\caption{Selected 2AFC cases in which only \name{} agrees with human scores. The green-framed image has higher MOS and receives the smaller \name{} distance, while PSNR, SSIM, LPIPS-VGG, and DISTS prefer the other image.}
\label{fig:showcase-2afc}
\end{figure}

For every same-reference pair, we define a 2AFC trial when the MOS difference is statistically significant,
\[
|\Delta\mathrm{MOS}|
>
1.96
\sqrt{\frac{v_A+v_B}{30}},
\]
where $v_A$ and $v_B$ are the reported rating variances.
This yields 387{,}603 trials out of 503{,}750 possible pairs.

\name{} agrees with the human preference on 93.3\% of these trials, compared with 93.0\% for DISTS, 82.9\% for LPIPS-VGG, 81.2\% for PSNR, and 78.4\% for SSIM.
There are 1{,}954 pairs on which \name{} alone agrees with humans while all four alternatives disagree, compared with 1{,}418 pairs showing the reverse pattern.
Figure~\ref{fig:showcase-2afc} shows examples from the former set, selected by decreasing MOS gap with at most one pair per reference.
Many failures of the other distances involve a local defect or colour shift competing with a milder global change such as brightness, contrast, compression, or denoising.
Figure~\ref{fig:choice} reports the corresponding aggregate pairwise analysis across all four image databases without this significance filter.

\subsection{Consistency across distortion families}

\begin{figure}[t]
\centering
\includegraphics[width=0.85\linewidth]{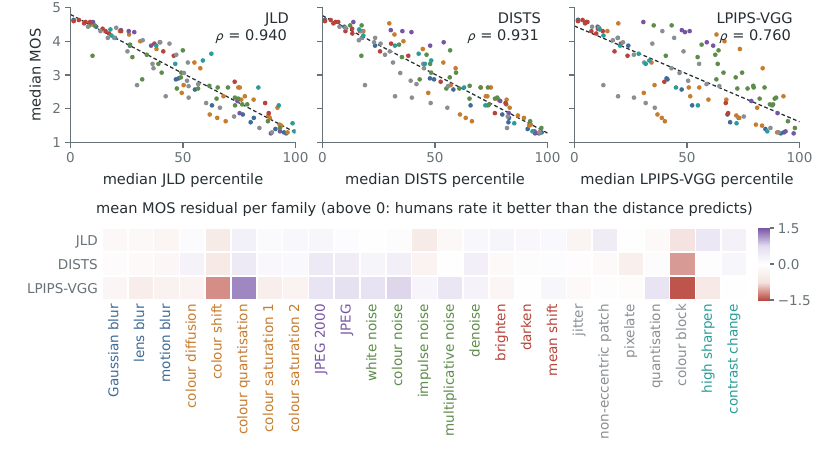}
\caption{Distance against MOS across distortion families and levels on the KADID-10k test split. Each point is the median over 65 references; dashed lines are linear fits. Bottom: mean MOS residual for each family. Red denotes under-penalisation by the distance and violet over-penalisation.}
\label{fig:showcase-families}
\end{figure}

A perceptual distance used as a loss or quality threshold should assign similar values to distortions of similar perceptual severity.
If one distance value represented one amount of perceptual damage, the 125 family--level cells in Figure~\ref{fig:showcase-families} would lie near a common decreasing curve.
This behaviour is strongest for \name{} and DISTS.
Across the 125 cells, \name{} reaches \srcc{} $0.940$ with a linear-fit error of $0.38$ MOS, while DISTS reaches $0.931$ and $0.40$.
LPIPS-VGG reaches only $0.760$ with an error of $0.70$ MOS, and its residual exceeds one MOS point for three colour distortion families.
The remaining systematic errors of \name{} are also visible: it under-penalises colour block distortion by $-0.72$ MOS and colour shift by $-0.53$, while over-penalising high sharpening by $+0.51$.

\subsection{gMAD stress tests}

\begin{figure}[t]
\centering
\includegraphics[width=0.85\linewidth]{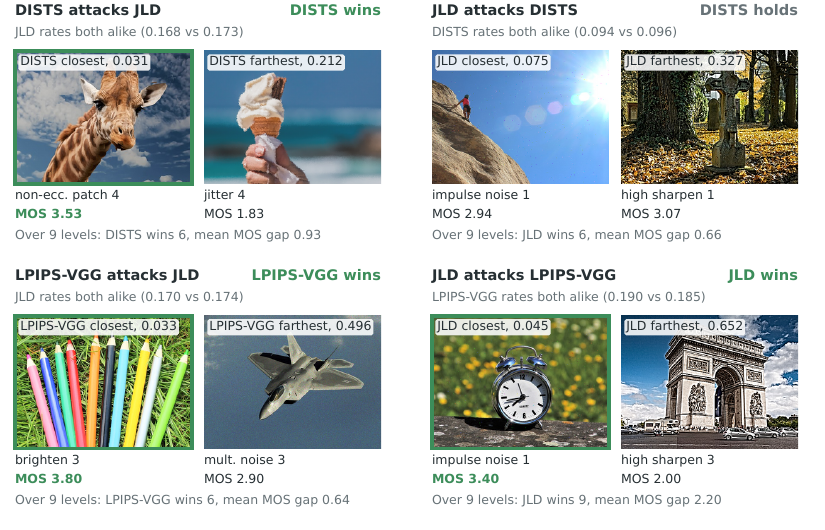}
\caption{gMAD \citep{ma2016gmad} comparisons on the KADID-10k test split. The defender is restricted to images with nearly equal distance, while the attacker selects the pair it considers most different. The attacker wins when human scores significantly prefer its closer image (green frame).}
\label{fig:showcase-gmad}
\end{figure}

gMAD \citep{ma2016gmad} searches specifically for cases where two distances disagree.
For each defender level, we restrict its score to within one percentile point, giving about 162 candidate images, and let the attacker choose its closest and farthest images within that band.
The attacker wins when humans significantly prefer its closer image ($z>1.96$).
We repeat this test at nine operating points, from the 10th to the 90th percentile.
\name{} and DISTS expose different weaknesses in one another: each wins as attacker at 6 of the 9 operating points (Figure~\ref{fig:showcase-gmad}).
DISTS identifies cases where \name{} assigns too much cost to a spatially displaced patch relative to jitter.
Conversely, \name{} identifies cases where DISTS treats high sharpening as similarly mild to light noise or colour diffusion.
When attacking LPIPS-VGG, \name{} wins at all 9 operating points with a mean human-score gap of 2.20 MOS.
LPIPS-VGG wins in the reverse direction at 6 operating points, but with a substantially smaller mean gap of 0.64 MOS.

% The previous appendix distance-ruler figure is omitted because
% Figure~\ref{fig:B2-see} in the main paper already provides the corresponding
% ordered visualisation of distance against human perceptual severity.

\FloatBarrier

\Needspace{4\baselineskip}
\section{Full image results}\label{app:results}

Tables~\ref{tab:app_fr_wide_p} and~\ref{tab:app_fr_wide_k} report correlations on all six image sets under each method's native preprocessing (P) and identical cropping (C).
Because \plcc{} uses a separate nonlinear mapping for each dataset, it measures within-dataset prediction rather than calibration transfer.
% Generated by scripts/v23/app_fr_merge.py from tables/app_fr_<dataset>.tex; do not edit by hand.
% Cell sources: the per-cell comments in tables/app_fr_<dataset>.tex (scripts/70_tables_v5.py).
\begin{table}[t]
\centering\footnotesize% v23.5 P2: was \small
\setlength{\tabcolsep}{0pt}
\caption{\srcc{} and \plcc{} per dataset. Violet: \name{} family; grey: fitted on human judgements. Higher is better; \textbf{bold} and \underline{underlining} mark the best and second distinct displayed values in each column.}
\label{tab:app_fr_wide_p}
\colorlet{humanJudgements}{gray!7}
\definecolor{ourMetrics}{HTML}{F3EFF8}
\begin{tabular*}{\linewidth}{@{\extracolsep{\fill}}lrrrrrrrrrrrr@{}}
\toprule
 & \multicolumn{2}{c}{TID2013} & \multicolumn{2}{c}{CSIQ} & \multicolumn{2}{c}{LIVE} & \multicolumn{2}{c}{KADID test} & \multicolumn{2}{c}{PIPAL pub.} & \multicolumn{2}{c}{PIPAL val.} \\
\cmidrule(lr){2-3}\cmidrule(lr){4-5}\cmidrule(lr){6-7}\cmidrule(lr){8-9}\cmidrule(lr){10-11}\cmidrule(lr){12-13}
Metric & \srcc & \plcc & \srcc & \plcc & \srcc & \plcc & \srcc & \plcc & \srcc & \plcc & \srcc & \plcc \\
\midrule
PSNR & 0.687 & 0.679 & 0.809 & 0.828 & 0.873 & 0.868 & 0.673 & 0.678 & 0.407 & 0.415 & 0.255 & 0.292 \\
SSIM & 0.627 & 0.686 & 0.837 & 0.815 & 0.910 & 0.915 & 0.621 & 0.630 & 0.504 & 0.509 & 0.363 & 0.426 \\
MS-SSIM & 0.786 & 0.834 & 0.913 & 0.899 & 0.951 & 0.948 & 0.826 & 0.823 & 0.562 & 0.593 & 0.491 & 0.566 \\
FSIM & 0.851 & 0.877 & 0.931 & 0.919 & \textbf{0.965} & \textbf{0.961} & 0.853 & 0.850 & 0.589 & 0.615 & 0.468 & 0.570 \\
VSI & \textbf{0.895} & \textbf{0.898} & 0.940 & 0.925 & 0.949 & 0.945 & 0.876 & 0.876 & 0.539 & 0.560 & 0.450 & 0.524 \\
GMSD & 0.804 & 0.860 & 0.957 & 0.954 & 0.960 & \underline{0.960} & 0.849 & 0.848 & 0.583 & 0.629 & 0.585 & 0.658 \\
MAD & 0.743 & 0.759 & 0.930 & 0.929 & 0.951 & 0.950 & 0.731 & 0.733 & -- & -- & 0.665 & 0.687 \\
NLPD & 0.799 & 0.844 & 0.937 & 0.927 & 0.937 & 0.934 & 0.811 & 0.809 & 0.482 & 0.509 & 0.370 & 0.452 \\
VIF & 0.677 & 0.774 & 0.919 & 0.928 & \underline{0.964} & \underline{0.960} & 0.685 & 0.695 & -- & -- & 0.503 & 0.581 \\
\midrule
DeepWSD & 0.855 & 0.877 & \underline{0.959} & 0.951 & 0.955 & 0.951 & 0.882 & 0.881 & 0.561 & 0.567 & 0.460 & 0.491 \\
DeepDC & 0.816 & 0.841 & 0.954 & \underline{0.959} & 0.950 & 0.947 & \textbf{0.900} & \textbf{0.898} & \textbf{0.654} & \textbf{0.669} & \underline{0.751} & \underline{0.754} \\
LASI & 0.522 & 0.582 & 0.840 & 0.854 & 0.883 & 0.881 & 0.723 & 0.724 & 0.514 & 0.504 & 0.581 & 0.571 \\
IEM & 0.810 & 0.818 & 0.917 & 0.906 & 0.922 & 0.920 & 0.821 & 0.819 & -- & -- & 0.182 & 0.243 \\
\midrule
\rowcolor{humanJudgements}LPIPS-VGG & 0.670 & 0.759 & 0.883 & 0.906 & 0.932 & 0.934 & 0.724 & 0.732 & 0.573 & 0.611 & 0.612 & 0.670 \\
\rowcolor{humanJudgements}LPIPS-Alex & 0.744 & 0.771 & 0.923 & 0.920 & 0.923 & 0.916 & 0.824 & 0.824 & -- & -- & 0.620 & 0.633 \\
\rowcolor{humanJudgements}DISTS & 0.818 & 0.847 & 0.943 & 0.946 & 0.954 & 0.953 & 0.885 & 0.885 & \underline{0.615} & 0.632 & 0.704 & 0.722 \\
\rowcolor{humanJudgements}PieAPP & 0.844 & 0.833 & 0.897 & 0.881 & 0.918 & 0.910 & 0.864 & 0.866 & -- & -- & 0.706 & 0.698 \\
\rowcolor{humanJudgements}DreamSim & 0.812 & 0.840 & 0.911 & 0.928 & 0.910 & 0.918 & 0.851 & 0.848 & -- & -- & \textbf{0.759} & \textbf{0.781} \\
\midrule
\rowcolor{ourMetrics}Lens, no pre-filter & 0.857 & 0.871 & 0.956 & 0.944 & 0.942 & 0.936 & 0.866 & 0.865 & 0.587 & 0.602 & 0.575 & 0.629 \\
\rowcolor{ourMetrics}\namelin{} & 0.826 & 0.828 & 0.905 & 0.885 & 0.910 & 0.907 & 0.809 & 0.807 & 0.468 & 0.473 & 0.344 & 0.402 \\
\rowcolor{ourMetrics}\namefast{} & 0.875 & 0.886 & 0.956 & 0.944 & 0.945 & 0.939 & 0.870 & 0.869 & 0.592 & 0.605 & 0.578 & 0.634 \\
\rowcolor{ourMetrics}\name{} & \underline{0.877} & \underline{0.897} & \textbf{0.971} & \textbf{0.974} & \underline{0.964} & \textbf{0.961} & \underline{0.892} & \underline{0.889} & 0.603 & \underline{0.640} & 0.624 & 0.683 \\
\bottomrule
\end{tabular*}
\end{table}

\name{} reaches \srcc{} 0.971 on CSIQ development and 0.964 on LIVE.
On TID2013, which informed the chroma pre-filter diagnosis, it reaches 0.877.
On the held-out sets, \name{} reaches 0.892 on KADID-10k test, 0.603 on PIPAL public, and 0.624 on PIPAL validation.
DeepDC remains stronger on KADID-10k test and PIPAL, while VSI remains stronger on TID2013.
% Generated by scripts/v23/app_fr_merge.py from tables/app_fr_<dataset>.tex; do not edit by hand.
% Cell sources: the per-cell comments in tables/app_fr_<dataset>.tex (scripts/70_tables_v5.py).
\begin{table}[t]
\centering\footnotesize% v23.5 P2: was \small
\setlength{\tabcolsep}{0pt}
\caption{Comparing KRCC and SRCC}
\label{tab:app_fr_wide_k}
\colorlet{humanJudgements}{gray!7}
\definecolor{ourMetrics}{HTML}{F3EFF8}
\begin{tabular*}{\linewidth}{@{\extracolsep{\fill}}lrrrrrrrrrrrr@{}}
\toprule
 & \multicolumn{2}{c}{TID2013} & \multicolumn{2}{c}{CSIQ} & \multicolumn{2}{c}{LIVE} & \multicolumn{2}{c}{KADID test} & \multicolumn{2}{c}{PIPAL pub.} & \multicolumn{2}{c}{PIPAL val.} \\
\cmidrule(lr){2-3}\cmidrule(lr){4-5}\cmidrule(lr){6-7}\cmidrule(lr){8-9}\cmidrule(lr){10-11}\cmidrule(lr){12-13}
Metric & \krcc & \srcc & \krcc & \srcc & \krcc & \srcc & \krcc & \srcc & \krcc & \srcc & \krcc & \srcc \\
\midrule
PSNR & 0.496 & 0.686 & 0.599 & 0.806 & 0.680 & 0.870 & 0.485 & 0.675 & 0.276 & -- & 0.174 & -- \\
SSIM & 0.455 & 0.628 & 0.632 & 0.837 & 0.731 & 0.910 & 0.449 & 0.623 & 0.349 & -- & 0.249 & -- \\
MS-SSIM & 0.605 & 0.784 & 0.738 & 0.912 & 0.803 & 0.949 & 0.635 & 0.826 & 0.397 & -- & 0.341 & -- \\
FSIM & 0.666 & 0.737 & 0.768 & 0.931 & \textbf{0.837} & \textbf{0.964} & 0.664 & 0.768 & 0.416 & -- & 0.323 & -- \\
VSI & \textbf{0.716} & 0.812 & 0.780 & 0.940 & 0.799 & 0.949 & 0.688 & 0.837 & 0.375 & -- & 0.309 & -- \\
GMSD & 0.634 & 0.802 & 0.812 & 0.957 & 0.827 & \underline{0.960} & 0.665 & 0.848 & 0.414 & -- & 0.415 & -- \\
MAD & 0.545 & 0.743 & 0.759 & 0.930 & 0.800 & 0.954 & 0.557 & 0.732 & -- & -- & 0.471 & -- \\
NLPD & 0.624 & 0.791 & 0.772 & 0.936 & 0.777 & 0.930 & 0.621 & 0.812 & 0.335 & -- & 0.254 & -- \\
VIF & 0.515 & 0.678 & 0.753 & 0.920 & 0.828 & \textbf{0.964} & 0.512 & 0.686 & -- & -- & 0.350 & -- \\
\midrule
DeepWSD & 0.667 & 0.874 & \underline{0.815} & \underline{0.963} & 0.810 & 0.893 & 0.700 & \underline{0.887} & 0.391 & -- & 0.319 & -- \\
DeepDC & 0.622 & 0.702 & 0.810 & 0.918 & 0.801 & 0.947 & \textbf{0.727} & 0.827 & \textbf{0.469} & -- & \underline{0.550} & -- \\
LASI & 0.368 & -- & 0.641 & -- & 0.696 & -- & 0.534 & -- & 0.356 & -- & 0.405 & -- \\
IEM & 0.624 & -- & 0.735 & -- & 0.747 & -- & 0.629 & -- & -- & -- & 0.123 & -- \\
\midrule
\rowcolor{humanJudgements}LPIPS-VGG & 0.497 & 0.668 & 0.697 & 0.882 & 0.765 & 0.932 & 0.535 & 0.726 & 0.404 & -- & 0.438 & -- \\
\rowcolor{humanJudgements}LPIPS-Alex & 0.548 & 0.748 & 0.750 & 0.924 & 0.751 & 0.924 & 0.632 & 0.827 & -- & -- & 0.435 & -- \\
\rowcolor{humanJudgements}DISTS & 0.626 & 0.708 & 0.788 & 0.930 & 0.809 & 0.947 & 0.708 & 0.818 & \underline{0.437} & -- & 0.510 & -- \\
\rowcolor{humanJudgements}PieAPP & 0.657 & 0.846 & 0.711 & 0.896 & 0.749 & 0.918 & 0.674 & 0.864 & -- & -- & 0.512 & -- \\
\rowcolor{humanJudgements}DreamSim & 0.614 & 0.810 & 0.738 & 0.915 & 0.745 & 0.904 & 0.657 & 0.853 & -- & -- & \textbf{0.563} & -- \\
\midrule
\rowcolor{ourMetrics}Lens, no pre-filter & 0.672 & 0.857 & 0.811 & 0.956 & 0.785 & 0.942 & 0.677 & 0.866 & 0.413 & 0.587 & 0.406 & 0.575 \\
\rowcolor{ourMetrics}\namelin{} & 0.648 & 0.826 & 0.723 & 0.905 & 0.734 & 0.910 & 0.617 & 0.809 & 0.322 & 0.468 & 0.237 & 0.344 \\
\rowcolor{ourMetrics}\namefast{} & 0.691 & \underline{0.875} & 0.811 & 0.956 & 0.790 & 0.945 & 0.683 & 0.870 & 0.417 & \underline{0.592} & 0.408 & \underline{0.578} \\
\rowcolor{ourMetrics}\name{} & \underline{0.693} & \textbf{0.877} & \textbf{0.849} & \textbf{0.971} & \underline{0.830} & \textbf{0.964} & \underline{0.711} & \textbf{0.892} & 0.426 & \textbf{0.603} & 0.443 & \textbf{0.624} \\
\bottomrule
\end{tabular*}
\end{table}

On BAPPS, \name{} performs strongly on CNN-generated distortions but is weaker on traditional distortions, with agreement scores of 0.821 and 0.681, respectively.
For comparison, LPIPS-VGG reaches 0.814 and 0.714, while DISTS reaches 0.813 and 0.749.
Across the remaining BAPPS subsets, \name{} reaches 0.625 on colour changes, 0.595 on deblurring, 0.608 on frame interpolation, and 0.694 on super-resolution.
LPIPS was trained using BAPPS human judgements, while DISTS used KADID-10k during training \citep{zhang2018lpips,ding2022dists}.
The no-reference extension reaches 0.486 on held-out KonIQ images, while CLIP-IQA reaches 0.685 on the full dataset.
Because these evaluations use different subsets, the two numbers should not be directly compared.

\Needspace{7\baselineskip}% arXiv layout: was 16; v23.5 P2: moved before the heading so it does not strand at a page foot
\section{Additional results}\label{app:moved-results}\label{app:main-moved}\label{app:types}

\subsection{Ablations}

% Source: paper/distance_metric/tables/ablation.tex; results/reproduction.csv; results/table_main_v2.csv.
% Generated by scripts/70_tables_v5.py; do not edit by hand.
% Cell order: TID2013, CSIQ, LIVE, KADID-10k test, arithmetic mean of the four unrounded SRCCs.
% Row dimensions: results/reproduction.csv, config=full384_euclid/random64_euclid/pca64_rho10; scripts/03_reproduce.py:configurations.
% Caption norm: results/reproduction.csv, config=pca64_rho10; jld/metric.py:metric_matrix implements regularised covariance inversion.
% Caption subset: results/structural_vs_instantaneous.csv:2, variant=instantaneous, column=n, value=1000.
% Matched expected lens: results/structural_vs_instantaneous.csv:6, variant=structural_lens64, column=SRCC, value=0.8607083785147341.
% Output term alone includes the chroma pre-filter; source rows are identified individually below.
% KADID-10k development evaluations are never substituted into the test column.
% v23.3 (D2): this table also carries tab:climb and fig:climb; the old climb table (sections/figs/fig_climb.tex) repeated its first six rows and is no longer input.
\begin{table}[tbp]
\centering\footnotesize% v23.5 P2: was \small
\setlength{\tabcolsep}{3pt}
\renewcommand{\arraystretch}{1.03}
\caption{Building \name{} step by step on one frozen encoder (\srcc{}), and other distances on the same tokens. Random directions gain nothing; the directions of $\M$ give most of the gain. \textbf{Bold} marks the best and \underline{underlining} the second distinct displayed value. $\dagger$ marks a score on 1{,}000 TID2013 pairs.}
\label{tab:ablation}\label{tab:climb}\label{fig:climb}
\definecolor{ourMetrics}{HTML}{F3EFF8}
\begin{tabular*}{\linewidth}{@{\extracolsep{\fill}}lrrrrr@{}}
\toprule
Variant & \shortstack{TID2013\\(diag.)} & \shortstack{CSIQ\\(dev.)} & LIVE & \shortstack{KADID-10k\\test} & Mean \\
\midrule
% Cell tid2013=0.660; exact=0.6598224438719781; source: results/reproduction.csv:6, dataset=tid2013, config=full384_euclid, column=SRCC.
% Cell csiq=0.841; exact=0.8410747571611759; source: results/reproduction.csv:12, dataset=csiq, config=full384_euclid, column=SRCC.
% Cell live=0.859; exact=0.8587854648162377; source: results/reproduction.csv:18, dataset=live, config=full384_euclid, column=SRCC.
% Cell kadid10k_test=0.727; exact=0.7268390814402951; source: results/reproduction.csv:36, dataset=kadid10k_test, config=full384_euclid, column=SRCC.
% Cell mean=0.772; exact=0.7716304368224217; source: arithmetic mean of unrounded cells: results/reproduction.csv:6, dataset=tid2013, config=full384_euclid, column=SRCC; results/reproduction.csv:12, dataset=csiq, config=full384_euclid, column=SRCC; results/reproduction.csv:18, dataset=live, config=full384_euclid, column=SRCC; results/reproduction.csv:36, dataset=kadid10k_test, config=full384_euclid, column=SRCC.
Full block 1 & 0.660 & 0.841 & 0.859 & 0.727 & 0.772 \\
% Cell tid2013=0.660; exact=0.6603864645906662; source: results/reproduction.csv:7, dataset=tid2013, config=random64_euclid, column=SRCC.
% Cell csiq=0.841; exact=0.8406684417816745; source: results/reproduction.csv:13, dataset=csiq, config=random64_euclid, column=SRCC.
% Cell live=0.859; exact=0.8594470879854668; source: results/reproduction.csv:19, dataset=live, config=random64_euclid, column=SRCC.
% Cell kadid10k_test=0.726; exact=0.7263314179233384; source: results/reproduction.csv:37, dataset=kadid10k_test, config=random64_euclid, column=SRCC.
% Cell mean=0.772; exact=0.771708353070286475; source: arithmetic mean of unrounded cells: results/reproduction.csv:7, dataset=tid2013, config=random64_euclid, column=SRCC; results/reproduction.csv:13, dataset=csiq, config=random64_euclid, column=SRCC; results/reproduction.csv:19, dataset=live, config=random64_euclid, column=SRCC; results/reproduction.csv:37, dataset=kadid10k_test, config=random64_euclid, column=SRCC.
Random 64 & 0.660 & 0.841 & 0.859 & 0.726 & 0.772 \\
% Cell tid2013=0.766; exact=0.7660917928383716; source: results/reproduction.csv:5, dataset=tid2013, config=pca64_rho10, column=SRCC.
% Cell csiq=0.912; exact=0.9121502562606459; source: results/reproduction.csv:11, dataset=csiq, config=pca64_rho10, column=SRCC.
% Cell live=0.925; exact=0.9246891384583238; source: results/reproduction.csv:17, dataset=live, config=pca64_rho10, column=SRCC.
% Cell kadid10k_test=0.800; exact=0.7997062488042886; source: results/reproduction.csv:35, dataset=kadid10k_test, config=pca64_rho10, column=SRCC.
% Cell mean=0.851; exact=0.850659359090407475; source: arithmetic mean of unrounded cells: results/reproduction.csv:5, dataset=tid2013, config=pca64_rho10, column=SRCC; results/reproduction.csv:11, dataset=csiq, config=pca64_rho10, column=SRCC; results/reproduction.csv:17, dataset=live, config=pca64_rho10, column=SRCC; results/reproduction.csv:35, dataset=kadid10k_test, config=pca64_rho10, column=SRCC.
PCA 64 & 0.766 & 0.912 & 0.925 & 0.800 & 0.851 \\
\midrule
% Cell tid2013=0.857; exact=0.8571842109104165; source: results/table_main_v2.csv:20, row=jld, column=tid2013.
% Cell csiq=0.956; exact=0.9562379646933636; source: results/table_main_v2.csv:20, row=jld, column=csiq.
% Cell live=0.942; exact=0.9421814863779342; source: results/table_main_v2.csv:20, row=jld, column=live.
% Cell kadid10k_test=0.866; exact=0.8661029743022443; source: results/table_main_v2.csv:20, row=jld, column=kadid10k_test.
% Cell mean=0.905; exact=0.90542665907098965; source: arithmetic mean of unrounded cells: results/table_main_v2.csv:20, row=jld, column=tid2013; results/table_main_v2.csv:20, row=jld, column=csiq; results/table_main_v2.csv:20, row=jld, column=live; results/table_main_v2.csv:20, row=jld, column=kadid10k_test.
\rowcolor{ourMetrics}Lens 64 & 0.857 & \underline{0.956} & 0.942 & 0.866 & 0.905 \\
% Cell tid2013=0.875; exact=0.8745150738826878; source: results/table_main_v2.csv:21, row=jld_v2, column=tid2013.
% Cell csiq=0.956; exact=0.9558676422362274; source: results/table_main_v2.csv:21, row=jld_v2, column=csiq.
% Cell live=0.945; exact=0.9448069352546935; source: results/table_main_v2.csv:21, row=jld_v2, column=live.
% Cell kadid10k_test=0.870; exact=0.8704005316941635; source: results/table_main_v2.csv:21, row=jld_v2, column=kadid10k_test.
% Cell mean=0.911; exact=0.91139754576694305; source: arithmetic mean of unrounded cells: results/table_main_v2.csv:21, row=jld_v2, column=tid2013; results/table_main_v2.csv:21, row=jld_v2, column=csiq; results/table_main_v2.csv:21, row=jld_v2, column=live; results/table_main_v2.csv:21, row=jld_v2, column=kadid10k_test.
\rowcolor{ourMetrics}\quad + pre-filter & \underline{0.875} & \underline{0.956} & \underline{0.945} & \underline{0.870} & \underline{0.911} \\
% Cell tid2013=0.877; exact=0.8767106009025009; source: results/table_main_v2.csv:22, row=jld_v2p, column=tid2013.
% Cell csiq=0.971; exact=0.9708250295632062; source: results/table_main_v2.csv:22, row=jld_v2p, column=csiq.
% Cell live=0.964; exact=0.9638385267329496; source: results/table_main_v2.csv:22, row=jld_v2p, column=live.
% Cell kadid10k_test=0.892; exact=0.8916927527531087; source: results/table_main_v2.csv:22, row=jld_v2p, column=kadid10k_test.
% Cell mean=0.926; exact=0.92576672748794135; source: arithmetic mean of unrounded cells: results/table_main_v2.csv:22, row=jld_v2p, column=tid2013; results/table_main_v2.csv:22, row=jld_v2p, column=csiq; results/table_main_v2.csv:22, row=jld_v2p, column=live; results/table_main_v2.csv:22, row=jld_v2p, column=kadid10k_test.
\rowcolor{ourMetrics}\quad + output term = \name{} & \textbf{0.877} & \textbf{0.971} & \textbf{0.964} & \textbf{0.892} & \textbf{0.926} \\
\midrule
% Cell tid2013=0.748; exact=0.7477516405239852; source: results/explore/e06_frozen_summary.csv:50, dataset=tid2013, column=SRCC.
% Cell csiq=0.840; exact=0.8397815272078942; source: results/explore/e06_frozen_summary.csv:8, dataset=csiq, column=SRCC.
% Cell live=0.910; exact=0.9095445830992883; source: results/explore/e06_frozen_summary.csv:29, dataset=live, column=SRCC.
% Cell kadid10k_test=0.808; exact=0.8079140035219272; source: results/explore/e06_frozen_summary.csv:22, dataset=kadid10k_test, column=SRCC.
% Cell mean=0.826; exact=0.826247938588273725; source: arithmetic mean of unrounded cells: results/explore/e06_frozen_summary.csv:50, dataset=tid2013, column=SRCC; results/explore/e06_frozen_summary.csv:8, dataset=csiq, column=SRCC; results/explore/e06_frozen_summary.csv:29, dataset=live, column=SRCC; results/explore/e06_frozen_summary.csv:22, dataset=kadid10k_test, column=SRCC.
Output only & 0.748 & 0.840 & 0.910 & 0.808 & 0.826 \\
% Cell tid2013=0.679$^{\dagger}$; exact=0.6785149243483536; source: results/structural_vs_instantaneous.csv:2, variant=instantaneous, column=SRCC.
% Cell csiq=--; exact=unavailable; source: results/structural_vs_instantaneous.csv, variant=instantaneous; no csiq evaluation.
% Cell live=--; exact=unavailable; source: results/structural_vs_instantaneous.csv, variant=instantaneous; no live evaluation.
% Cell kadid10k_test=--; exact=unavailable; source: results/structural_vs_instantaneous.csv, variant=instantaneous; no kadid10k_test evaluation.
% Cell mean=--; exact=unavailable; source: results/structural_vs_instantaneous.csv, variant=instantaneous; missing three dataset evaluations.
Pointwise Jacobian & 0.679$^{\dagger}$ & -- & -- & -- & -- \\
\midrule
\multicolumn{6}{@{}l}{\textit{Other distances on the same tokens}} \\
% Cell tid2013=0.661; exact=0.6608679934264635; source: results/naive_feature_baselines.csv:11, dataset=tid2013, distance=uniform_lpips, column=SRCC.
% Cell csiq=0.867; exact=0.8674622808539372; source: results/naive_feature_baselines.csv:2, dataset=csiq, distance=uniform_lpips, column=SRCC.
% Cell live=0.912; exact=0.9119819831563096; source: results/naive_feature_baselines.csv:20, dataset=live, distance=uniform_lpips, column=SRCC.
% Cell kadid10k_test=0.740; exact=0.7404339635576933; source: results/naive_feature_baselines.csv:38, dataset=kadid10k_test, distance=uniform_lpips, column=SRCC.
% Cell mean=0.795; exact=0.7951865552486009; source: arithmetic mean of unrounded cells: results/naive_feature_baselines.csv:11, dataset=tid2013, distance=uniform_lpips, column=SRCC; results/naive_feature_baselines.csv:2, dataset=csiq, distance=uniform_lpips, column=SRCC; results/naive_feature_baselines.csv:20, dataset=live, distance=uniform_lpips, column=SRCC; results/naive_feature_baselines.csv:38, dataset=kadid10k_test, distance=uniform_lpips, column=SRCC.
Uniform LPIPS & 0.661 & 0.867 & 0.912 & 0.740 & 0.795 \\
% Cell tid2013=0.611; exact=0.611069459373261; source: results/naive_feature_baselines.csv:12, dataset=tid2013, distance=standardised_euclidean, column=SRCC.
% Cell csiq=0.802; exact=0.8022385232673042; source: results/naive_feature_baselines.csv:3, dataset=csiq, distance=standardised_euclidean, column=SRCC.
% Cell live=0.832; exact=0.8315536451079419; source: results/naive_feature_baselines.csv:21, dataset=live, distance=standardised_euclidean, column=SRCC.
% Cell kadid10k_test=0.684; exact=0.6842608533466347; source: results/naive_feature_baselines.csv:39, dataset=kadid10k_test, distance=standardised_euclidean, column=SRCC.
% Cell mean=0.732; exact=0.73228062027378545; source: arithmetic mean of unrounded cells: results/naive_feature_baselines.csv:12, dataset=tid2013, distance=standardised_euclidean, column=SRCC; results/naive_feature_baselines.csv:3, dataset=csiq, distance=standardised_euclidean, column=SRCC; results/naive_feature_baselines.csv:21, dataset=live, distance=standardised_euclidean, column=SRCC; results/naive_feature_baselines.csv:39, dataset=kadid10k_test, distance=standardised_euclidean, column=SRCC.
Standardised Euclidean & 0.611 & 0.802 & 0.832 & 0.684 & 0.732 \\
% Cell tid2013=0.662; exact=0.6615099155777487; source: results/naive_feature_baselines.csv:13, dataset=tid2013, distance=deepwsd_tokens, column=SRCC.
% Cell csiq=0.843; exact=0.8428242201289439; source: results/naive_feature_baselines.csv:4, dataset=csiq, distance=deepwsd_tokens, column=SRCC.
% Cell live=0.865; exact=0.8646956757440344; source: results/naive_feature_baselines.csv:22, dataset=live, distance=deepwsd_tokens, column=SRCC.
% Cell kadid10k_test=0.728; exact=0.7276824844097848; source: results/naive_feature_baselines.csv:40, dataset=kadid10k_test, distance=deepwsd_tokens, column=SRCC.
% Cell mean=0.774; exact=0.77417807396512795; source: arithmetic mean of unrounded cells: results/naive_feature_baselines.csv:13, dataset=tid2013, distance=deepwsd_tokens, column=SRCC; results/naive_feature_baselines.csv:4, dataset=csiq, distance=deepwsd_tokens, column=SRCC; results/naive_feature_baselines.csv:22, dataset=live, distance=deepwsd_tokens, column=SRCC; results/naive_feature_baselines.csv:40, dataset=kadid10k_test, distance=deepwsd_tokens, column=SRCC.
DeepWSD-style & 0.662 & 0.843 & 0.865 & 0.728 & 0.774 \\
% Cell tid2013=0.615; exact=0.6149319342097733; source: results/naive_feature_baselines.csv:14, dataset=tid2013, distance=deepdc_tokens, column=SRCC.
% Cell csiq=0.827; exact=0.8269010473543156; source: results/naive_feature_baselines.csv:5, dataset=csiq, distance=deepdc_tokens, column=SRCC.
% Cell live=0.819; exact=0.8186553007977405; source: results/naive_feature_baselines.csv:23, dataset=live, distance=deepdc_tokens, column=SRCC.
% Cell kadid10k_test=0.763; exact=0.7628998606768999; source: results/naive_feature_baselines.csv:41, dataset=kadid10k_test, distance=deepdc_tokens, column=SRCC.
% Cell mean=0.756; exact=0.755847035759682325; source: arithmetic mean of unrounded cells: results/naive_feature_baselines.csv:14, dataset=tid2013, distance=deepdc_tokens, column=SRCC; results/naive_feature_baselines.csv:5, dataset=csiq, distance=deepdc_tokens, column=SRCC; results/naive_feature_baselines.csv:23, dataset=live, distance=deepdc_tokens, column=SRCC; results/naive_feature_baselines.csv:41, dataset=kadid10k_test, distance=deepdc_tokens, column=SRCC.
DeepDC-style & 0.615 & 0.827 & 0.819 & 0.763 & 0.756 \\
\bottomrule
\end{tabular*}
% \par\vspace{3pt}\begin{minipage}{\linewidth}\footnotesize Higher \srcc{} is better. \textbf{Bold}: best; \underline{underlined}: second distinct displayed value; ties share marks. PCA 64 uses regularised Mahalanobis distance ($\rho=10$); lens 64 uses Euclidean distance. The four-dataset mean combines diagnostic, development and held-out data. $\dagger$: $1{,}000$ TID2013 pairs; dashes: unavailable. Other distances use the same unfiltered block 1 tokens.\end{minipage}
% v23.6 P6: shorter note.
\end{table}

Table~\ref{tab:ablation} builds \name{} one component at a time on DINOv2-S block~1 and compares alternative distances on the same token features.
PCA~64 uses regularised Mahalanobis distance ($\rho=10$), while lens~64 uses Euclidean distance; all other token baselines use the same unfiltered block-1 features.
Using all 384 raw features gives a four-dataset mean \srcc{} of 0.772, and projecting onto 64 random directions leaves it unchanged.
The 64 leading PCA directions improve the mean to 0.851, while the 64 leading eigenvectors of $\M$ reach 0.905 and improve every dataset.
Thus, the main gain comes from ranking feature directions by the encoder output's sensitivity rather than by feature variance or dimensionality reduction alone.
The output term alone reaches 0.826, showing that most of the signal comes from the lens term.
The chroma pre-filter raises TID2013 from 0.857 to 0.875, while the output term raises LIVE from 0.945 to 0.964 and KADID-10k test from 0.870 to 0.892.
On the same lensed tokens, a DeepWSD-style comparison matches the Euclidean lens at 0.905, whereas a DeepDC-style comparison falls to 0.744.
The simple Euclidean distance therefore gives up no accuracy once the feature directions have been selected by the Jacobian lens.

% arXiv layout: fig:B-pairs is input here (was after the Resolution paragraph) so it fills the page foot before G.2.
% v23.3 worker B. Replaces fig:visual in the main text. Selection and maps: scripts/v233/B_pairs_select.py (released package,
% CPU); drawing: scripts/v233/B_pairs_figure.py; numbers: figures/v233/B_numbers.json (pairs).
\begin{figure}[ht]
\centering
\includegraphics[width=\linewidth]{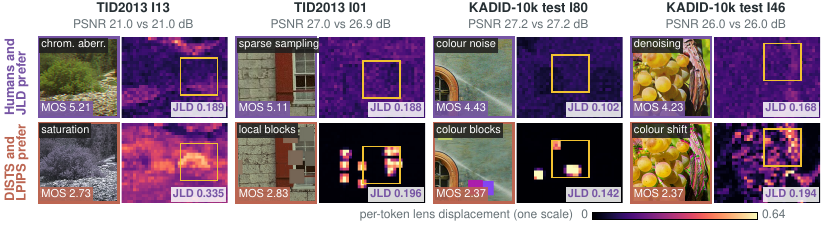}
% \caption{At equal PSNR, \name{} ranks the pair as people do and DISTS and LPIPS-VGG both reverse it. Each column is one reference with two distortions of the same PSNR: a detail crop (left) and \name{}'s per-token lens displacement over the whole image (right, one colour scale for the figure; the box marks the crop). The maps find the damage people object to: the desaturated foliage, the pasted colour blocks and the shifted colour channels light up, while the fringing, fine noise and resampling that viewers tolerate stay darker.}
\caption{At equal PSNR, \name{} ranks the pair as people do while DISTS and LPIPS-VGG reverse it. Each column shows a detail crop (left) and \name{}'s per-token lens displacement over the whole image (right; one colour scale; the box marks the crop).}
\label{fig:B-pairs}
\end{figure}

\Needspace{20\baselineskip}% arXiv layout: before the heading, sized to the wrapped table
\subsection{Resolution results}

% \Needspace{15\baselineskip}
\begin{wraptable}{r}{0.5\linewidth}% arXiv layout: was 0.42; the wrapped text is now as tall as the table
\vspace{-12pt}
\centering\small
% Moved from Section 5.4 (v23); body generated by scripts/v22/resolution.py from results/scorecard_resolution.csv.
\caption{TID2013 \srcc{} at 256 and 512\,px short side. The lens remains stable while several deep feature distances degrade.}
\label{tab:resolution}
% Generated by scripts/v22/resolution.py from results/scorecard_resolution.csv; do not edit by hand.
% TID2013 (3,000 pairs) SRCC at 256 and 512 px short side; change = exact 512 minus exact 256, then rounded.
% Lens = historical block-1 lens without chroma pre-filter or output term (scorecard metric "JLD").
% Rows sorted by absolute change within each group.
\definecolor{lensRow}{HTML}{F3EFF8}
\begin{tabular}{lccr}
\toprule
Method & 256 & 512 & Change \\
\midrule
\multicolumn{4}{l}{\textit{Deep features}} \\
\rowcolor{lensRow}\textbf{Lens} & 0.850 & 0.845 & \textbf{$-$0.005} \\
PieAPP & 0.618 & 0.700 & $+$0.081 \\
LPIPS-VGG & 0.757 & 0.661 & $-$0.096 \\
DISTS & 0.815 & 0.717 & $-$0.099 \\
DeepDC & 0.806 & 0.695 & $-$0.111 \\
LPIPS-Alex & 0.824 & 0.697 & $-$0.127 \\
\midrule
\multicolumn{4}{l}{\textit{Classical}} \\
NLPD & 0.787 & 0.787 & 0.000 \\
PSNR & 0.712 & 0.713 & 0.000 \\
VSI & 0.887 & 0.888 & $+$0.001 \\
MS-SSIM & 0.770 & 0.779 & $+$0.009 \\
SSIM & 0.727 & 0.641 & $-$0.086 \\
\bottomrule
\end{tabular}

\vspace{-8pt}
\end{wraptable}

% Sources: results/scorecard_resolution.csv; figures/v23/fig_visual_numbers.json
% (resolution, per_pair); scripts/v23/visual_compute.py.
Table~\ref{tab:resolution} gives the TID2013 results underlying Section~\ref{sec:resolution} for 11 of the 15 evaluated methods.
The full database is rescored after resizing each image to short sides of 256 and 512 pixels, while the original human scores are kept fixed.
This tests whether a distance preserves its agreement with human judgements when the same content is evaluated at a different image resolution.
The lens remains nearly unchanged, from 0.850 at 256 pixels to 0.845 at 512 pixels.
By comparison, the unweighted block-1 features fall from 0.784 to 0.656, DISTS from 0.815 to 0.717, and LPIPS-VGG from 0.757 to 0.661.
These results support the central observation of Section~\ref{sec:resolution}: weighting early features by output sensitivity is substantially more stable to resolution changes than comparing the same early features directly.

% Random directions add nothing, the lens peaks at $k=64$, and the flat lens outperforms the full metric tensor $\M$, its eigenvalue-weighted restriction and raw features. Figure~\ref{fig:B-pairs} shows four same-reference pairs of equal PSNR on which \name{} agrees with people and DISTS and LPIPS-VGG both disagree, chosen by the largest human-score gap: at 21.0\,dB a TID2013 saturation loss scores 0.335 against 0.189 for chromatic aberration, which people rate 2.5 points better.

% \input{sections/figs/fig_choice}  % v23.7 M7: moved to the main text (sections/figs/fig_M7_choice.tex)
% \input{sections/figs/fig_B_pairs} % arXiv layout: moved up, before the Resolution results heading

\begin{figure}[ht]
\centering
\includegraphics[width=\linewidth]{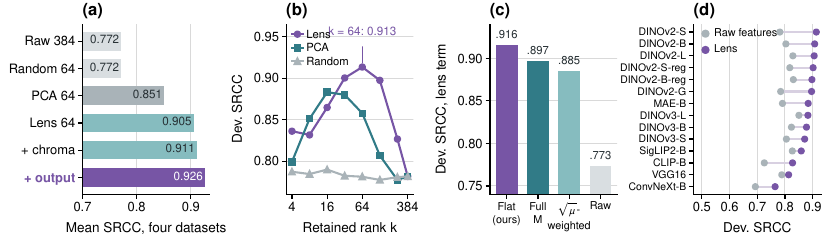}
\caption{Which parts of \name{} carry the agreement. (a)~Building the distance on DINOv2-S block 1 (mean \srcc{}, four datasets). (b)~Development \srcc{} against rank $k$. (c)~Weightings of the retained directions. (d)~Lens (violet) against raw features (grey) for 14 frozen encoders.}
\label{fig:B2-ablation-full}
\end{figure}

\IfFileExists{sections/app_main_moved.tex}{\input{sections/app_main_moved}}{}

\Needspace{4\baselineskip}
\section{Development choices}\label{app:sweep}

Labelled data are used only to select distance settings, never to fit the lens.
We evaluate 245 configurations on CSIQ and KADID-10k development and choose the simplest configuration within 0.005 of the best equally weighted mean \srcc{}.
The selected CLS-target configuration reaches 0.915, compared with 0.916 for the best tested configuration.

\begin{figure}[t]
\centering
\includegraphics[width=\linewidth]{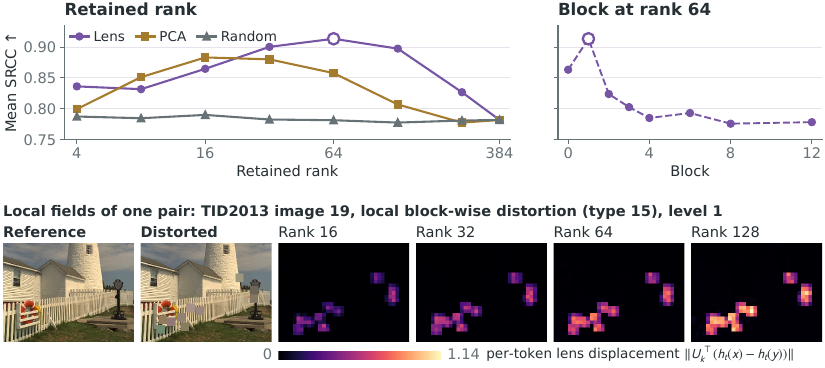}
\caption{Mean development \srcc{} over CSIQ and KADID-10k development as a function of retained rank and encoder block. Bottom: block-1 lens maps at several ranks. Diagnostic fits use the preliminary combined output.}
\label{fig:appG_sweep}
\end{figure}

The sweep considers $k\in\{4,8,16,32,64,128,256,384\}$.
Figure~\ref{fig:appG_sweep} shows that the Jacobian basis consistently outperforms PCA and random projections, with block~1 and $k=64$ selected for the final distance.
\phantomsection\label{app:same-tokens}
As an additional control, a lens pooled over natural images reaches 0.861 \srcc{} on 1{,}000 TID2013 pairs, compared with 0.679 for sensitivity estimated separately from each reference image.
\phantomsection\label{app:output-weight}
The output-term weight is fixed from CSIQ and KADID-10k development.
Matching the score spreads gives weights 0.312 and 0.696 on the two sets, whose mean 0.504 is rounded to $1/2$.
Performance is insensitive to this choice: weights from 0.25 to 1 give mean \srcc{} between 0.923 and 0.927 across the four standard image sets.
A single Euclidean norm over the concatenated lens and output features reaches 0.923, compared with 0.926 for the released sum; unlike the released sum, it satisfies the triangle inequality (Appendix~\ref{app:proofs}).

% Generated by scripts/70_tables_v5.py; do not edit by hand.
% Original generator: scripts/78_output_term_v5.py; numerical source: results/output_term.csv.
% All weights and the selection grid are fixed in scripts/78_output_term_v5.py.
% Mean: unweighted TID2013, CSIQ, LIVE and KADID test; CSIQ also participates in selection.
% Independent held-out and development means are additionally stored in results/output_term.csv.
\begin{table}[tbp]
\centering\footnotesize% v23.5 P2: was \small
\setlength{\tabcolsep}{3pt}
\renewcommand{\arraystretch}{1.03}
\caption{Moderate output weights give similar four-dataset means. Violet: selected settings; white: controls. \textbf{Bold} and \underline{underlining} mark the best and second displayed values.}
\label{tab:output-term}
\colorlet{humanJudgements}{gray!7}
\definecolor{ourMetrics}{HTML}{F3EFF8}
\begin{tabular*}{\linewidth}{@{\extracolsep{\fill}}lrrrrrr@{}}
\toprule
Form & \shortstack{TID2013\\(diag.)} & \shortstack{CSIQ\\(dev.)} & LIVE & \shortstack{KADID\\test} & \shortstack{KADID\\dev} & Mean \\
\midrule
% tid2013=0.87451394521458303; source: results/output_term.csv:8, form=lens_only, weight=0, stage=anchor, column=SRCC.
% csiq=0.95586725422217722; source: results/output_term.csv:17, form=lens_only, weight=0, stage=anchor, column=SRCC.
% live=0.94480576756493795; source: results/output_term.csv:42, form=lens_only, weight=0, stage=anchor, column=SRCC.
% kadid10k_test=0.8703966554350745; source: results/output_term.csv:51, form=lens_only, weight=0, stage=anchor, column=SRCC.
% kadid10k_dev=0.8760750331254662; source: results/output_term.csv:60, form=lens_only, weight=0, stage=anchor, column=SRCC.
% mean_four=0.91139590560919315; source: results/output_term.csv:85, form=lens_only, weight=0, stage=anchor, column=SRCC.
Lens alone ($\lambda=0$) & \underline{0.875} & 0.956 & 0.945 & 0.870 & 0.876 & 0.911 \\
% tid2013=0.87730243032329014; source: results/output_term.csv:3, form=sum, weight=0.25, stage=weight_sweep, column=SRCC.
% csiq=0.96908368717520166; source: results/output_term.csv:12, form=sum, weight=0.25, stage=weight_sweep, column=SRCC.
% live=0.9590649348592275; source: results/output_term.csv:37, form=sum, weight=0.25, stage=weight_sweep, column=SRCC.
% kadid10k_test=0.88456466742604578; source: results/output_term.csv:46, form=sum, weight=0.25, stage=weight_sweep, column=SRCC.
% kadid10k_dev=0.88705822582217131; source: results/output_term.csv:55, form=sum, weight=0.25, stage=weight_sweep, column=SRCC.
% mean_four=0.92250392994594121; source: results/output_term.csv:80, form=sum, weight=0.25, stage=weight_sweep, column=SRCC.
Sum ($\lambda=0.25$) & \textbf{0.877} & 0.969 & 0.959 & 0.885 & 0.887 & 0.923 \\
% tid2013=0.87670302622618235; source: results/output_term.csv:4, form=sum, weight=0.5, stage=weight_sweep, column=SRCC.
% csiq=0.97083565375743608; source: results/output_term.csv:13, form=sum, weight=0.5, stage=weight_sweep, column=SRCC.
% live=0.96384007519110348; source: results/output_term.csv:38, form=sum, weight=0.5, stage=weight_sweep, column=SRCC.
% kadid10k_test=0.89172659163503165; source: results/output_term.csv:47, form=sum, weight=0.5, stage=weight_sweep, column=SRCC.
% kadid10k_dev=0.89246723257839056; source: results/output_term.csv:56, form=sum, weight=0.5, stage=weight_sweep, column=SRCC.
% mean_four=0.92577633670243831; source: results/output_term.csv:81, form=sum, weight=0.5, stage=weight_sweep, column=SRCC.
\rowcolor{ourMetrics}\name{} ($\lambda=0.5$) & \textbf{0.877} & \textbf{0.971} & 0.964 & 0.892 & 0.892 & \underline{0.926} \\
% tid2013=0.87480097624699127; source: results/output_term.csv:5, form=sum, weight=0.75, stage=weight_sweep, column=SRCC.
% csiq=0.9698491742271127; source: results/output_term.csv:14, form=sum, weight=0.75, stage=weight_sweep, column=SRCC.
% live=0.96591033835829365; source: results/output_term.csv:39, form=sum, weight=0.75, stage=weight_sweep, column=SRCC.
% kadid10k_test=0.89544680645617825; source: results/output_term.csv:48, form=sum, weight=0.75, stage=weight_sweep, column=SRCC.
% kadid10k_dev=0.89521558353243846; source: results/output_term.csv:57, form=sum, weight=0.75, stage=weight_sweep, column=SRCC.
% mean_four=0.92650182382214386; source: results/output_term.csv:82, form=sum, weight=0.75, stage=weight_sweep, column=SRCC.
Sum ($\lambda=0.75$) & \underline{0.875} & \underline{0.970} & \underline{0.966} & \underline{0.895} & \underline{0.895} & \textbf{0.927} \\
% tid2013=0.87232432220229295; source: results/output_term.csv:6, form=sum, weight=1, stage=weight_sweep, column=SRCC.
% csiq=0.96810910674241579; source: results/output_term.csv:15, form=sum, weight=1, stage=weight_sweep, column=SRCC.
% live=0.96677678954135438; source: results/output_term.csv:40, form=sum, weight=1, stage=weight_sweep, column=SRCC.
% kadid10k_test=0.89723007967750412; source: results/output_term.csv:49, form=sum, weight=1, stage=weight_sweep, column=SRCC.
% kadid10k_dev=0.89617539996062379; source: results/output_term.csv:58, form=sum, weight=1, stage=weight_sweep, column=SRCC.
% mean_four=0.92611007454089178; source: results/output_term.csv:83, form=sum, weight=1, stage=weight_sweep, column=SRCC.
Sum ($\lambda=1$) & 0.872 & 0.968 & \textbf{0.967} & \textbf{0.897} & \textbf{0.896} & \underline{0.926} \\
% tid2013=0.86102889069828448; source: results/output_term.csv:7, form=sum, weight=2, stage=weight_sweep, column=SRCC.
% csiq=0.95934397097255808; source: results/output_term.csv:16, form=sum, weight=2, stage=weight_sweep, column=SRCC.
% live=0.96504911639457225; source: results/output_term.csv:41, form=sum, weight=2, stage=weight_sweep, column=SRCC.
% kadid10k_test=0.89656021001527553; source: results/output_term.csv:50, form=sum, weight=2, stage=weight_sweep, column=SRCC.
% kadid10k_dev=0.89379811738243031; source: results/output_term.csv:59, form=sum, weight=2, stage=weight_sweep, column=SRCC.
% mean_four=0.92049554702017256; source: results/output_term.csv:84, form=sum, weight=2, stage=weight_sweep, column=SRCC.
Sum ($\lambda=2$) & 0.861 & 0.959 & 0.965 & \textbf{0.897} & 0.894 & 0.920 \\
% tid2013=0.74776017592425004; source: results/output_term.csv:9, form=output_only, weight=1, stage=anchor, column=SRCC.
% csiq=0.83977329336800932; source: results/output_term.csv:18, form=output_only, weight=1, stage=anchor, column=SRCC.
% live=0.90955405154013091; source: results/output_term.csv:43, form=output_only, weight=1, stage=anchor, column=SRCC.
% kadid10k_test=0.80793411628156675; source: results/output_term.csv:52, form=output_only, weight=1, stage=anchor, column=SRCC.
% kadid10k_dev=0.81187253936099912; source: results/output_term.csv:61, form=output_only, weight=1, stage=anchor, column=SRCC.
% mean_four=0.82625540927848928; source: results/output_term.csv:86, form=output_only, weight=1, stage=anchor, column=SRCC.
Output alone & 0.748 & 0.840 & 0.910 & 0.808 & 0.812 & 0.826 \\
% tid2013=0.8713716412105873; source: results/output_term.csv:10, form=single_norm, weight=0.375, stage=selected, column=SRCC.
% csiq=0.96649800773724048; source: results/output_term.csv:19, form=single_norm, weight=0.375, stage=selected, column=SRCC.
% live=0.96268002618824133; source: results/output_term.csv:44, form=single_norm, weight=0.375, stage=selected, column=SRCC.
% kadid10k_test=0.89339452403089614; source: results/output_term.csv:53, form=single_norm, weight=0.375, stage=selected, column=SRCC.
% kadid10k_dev=0.89336624168257794; source: results/output_term.csv:62, form=single_norm, weight=0.375, stage=selected, column=SRCC.
% mean_four=0.92348604979174131; source: results/output_term.csv:87, form=single_norm, weight=0.375, stage=selected, column=SRCC.
\rowcolor{ourMetrics}Single norm ($w=0.375$) & 0.871 & 0.966 & 0.963 & 0.893 & 0.893 & 0.923 \\
\bottomrule
\end{tabular*}
% \par\vspace{3pt}\begin{minipage}{\linewidth}\footnotesize Higher \srcc{} is better. \textbf{Bold}: best; \underline{underlined}: second distinct displayed value; ties share marks. The sum is $D_{\mathrm{lens}}+\lambda\Dout$. The single norm is $\sqrt{D_{\mathrm{lens}}^2+w^2\lVert\hat{\out}(\pre x)-\hat{\out}(\pre y)\rVert_2^2}$, with unit-normalised CLS embeddings. Only CSIQ and KADID development select $w$; the mean uses four standard columns, excluding KADID development. Violet marks the frozen sum and selected single norm.\end{minipage}
% v23.6 P6: shorter note.
\end{table}

\Needspace{4\baselineskip}
\section{What the lens sees in gratings}\label{app:csf}

We probe \name{} with classical psychophysical stimuli to ask whether a lens fitted without human judgements recovers known properties of visual sensitivity.
Following contrast-sensitivity experiments \citep{campbell1968csf,mullen1985}, we add sinusoidal gratings that vary in spatial frequency, colour direction, orientation, and background texture, and compare the lens with raw block-1 features.
\begin{figure}[t]
\centering
\includegraphics[width=\linewidth]{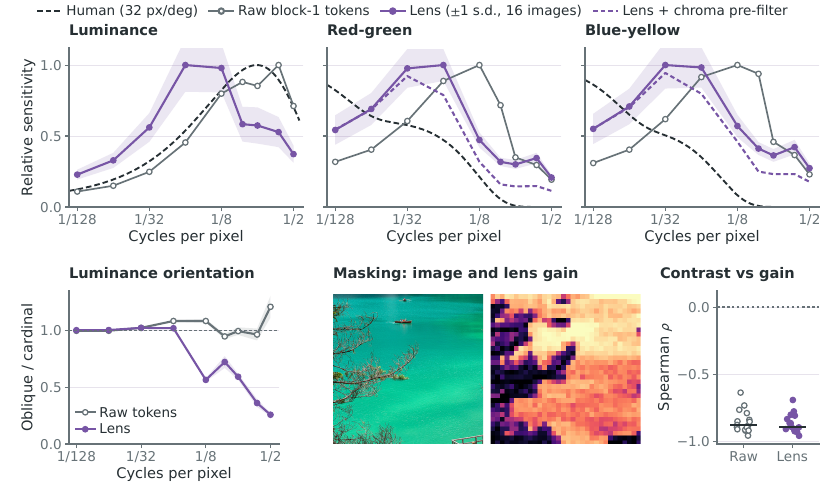}
\caption{Grating probes of the lens on 16 DIV2K crops. Top: peak-normalised sensitivity across spatial frequency for luminance, red--green, and blue--yellow gratings, compared with human references. Bottom left: sensitivity to oblique relative to cardinal gratings. Bottom right: local lens gain for a luminance grating (bright denotes higher sensitivity).}
\label{fig:csf}
\end{figure}
Figure~\ref{fig:csf} shows three effects.
First, the lens suppresses high-frequency chromatic changes substantially more than raw block-1 features; the fixed chroma pre-filter moves this response further toward human chromatic sensitivity \citep{mullen1985,zhang1996scielab}.
Second, the lens is less sensitive to fine oblique gratings than to horizontal and vertical ones, reproducing the qualitative oblique effect that is absent from the raw features.
Third, its local gain decreases in textured regions: for a $1/8$ cycle-per-pixel luminance grating, gain has median Spearman correlation $-0.89$ with local luminance contrast across the 16 images, consistent with visual masking.
The agreement is not complete.
In particular, the lens does not reproduce the human luminance contrast-sensitivity function: its response peaks at a lower spatial frequency.
The unfiltered lens also remains too sensitive to the finest chromatic patterns, motivating the fixed $\sigma=2$ chroma pre-filter described in Appendix~\ref{app:md-prefilter}.
Thus, the frozen encoder already contains several human-like sensitivity patterns, while the pre-filter corrects its clearest systematic mismatch.

\Needspace{4\baselineskip}
\section{Validation and human study}\label{app:validation}

We provide additional checks of the geometry of \name{}, a small human study on cases where perceptual metrics disagree, and targeted perturbations for directions retained or discarded by the lens.

\subsection{Human study}\label{app:human-study}

\begin{figure}[t]
\centering
\includegraphics[width=\linewidth]{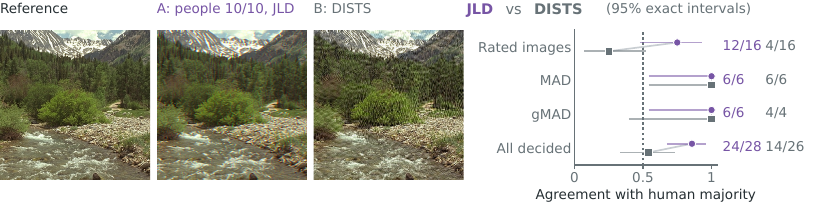}
% Source: scripts/v23/app_study.py (from scripts/v22/fig_study.py). Example: trial real_tid2013_00, pixels figures/v18/fig8_study.inputs.npz.
% Rates and exact intervals: figures/v23/app_study_numbers.json; totals match results/study_summary.csv (jld 24/28, dists 14/26).
% \caption{Human-study agreement.
% Left: one rated-image trial in which all ten participants and \name{} chose A, while DISTS chose B.
% Right: fraction of decided trials on which each metric agrees with the human majority, with exact 95\% binomial intervals; the dotted line is chance.
% On the 16 rated-image trials \name{} agrees on 12 and DISTS on 4, and their intervals overlap only between 0.48 and 0.52.
% Figure~\ref{fig:study-coverage} shows all 40 trials.}
\caption{Human-study agreement. Left: one rated-image trial in which all ten participants and \name{} chose A and DISTS chose B. Right: share of decided trials on which each metric agrees with the human majority (exact 95\% binomial intervals; dotted: chance).}
\label{fig:study}
\end{figure}

% The full 40-trial grid is omitted from the final appendix.
% \IfFileExists{sections/figs/app_fig_study_trials.tex}{\input{sections/figs/app_fig_study_trials}}{}

Ten participants completed 40 trials selected where perceptual metrics disagree, choosing which distorted image was closer to its reference, with a ``cannot tell'' option and randomised sides.
Across trials with an untied majority, the lens agrees with the majority on $24/28$ decisions, compared with $14/26$ for DISTS.
On the 16 trials drawn from rated image pairs, the majority agrees with MOS on all 16; \name{} agrees on $12/16$ and DISTS on $4/16$.
% Because these trials were intentionally selected for disagreement, these numbers are descriptive rather than estimates of overall accuracy.

\subsection{Metamers and targeted perturbations}

A displacement orthogonal to the lens has zero projected distance, providing a direct way to probe what the metric ignores.
For eight references, we construct approximate metamers, lens-maximising perturbations, and random perturbations at the same nominal 40\,dB PSNR.
\begin{figure}[t]
\centering
\includegraphics[width=\linewidth]{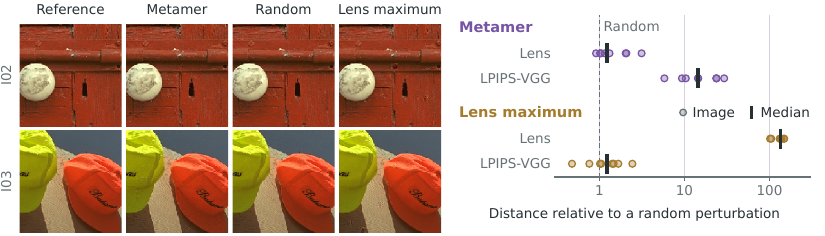}
\caption{Approximate metamers, random perturbations, and lens-maximising perturbations at nominal 40\,dB PSNR. Right: distances relative to the same-reference random perturbation; the dashed line denotes 1.}
\label{fig:appI-metamers}
\end{figure}
As shown in Figure~\ref{fig:appI-metamers}, approximate metamers remain close under the unfiltered lens (median ratio $1.23$ relative to random perturbations) while LPIPS-VGG rises to $14.5$.
Conversely, lens-maximising perturbations increase the lens distance by a factor of $134$, while LPIPS-VGG changes by only $1.23$. These results confirm that the lens is strongly directional, it suppresses feature changes outside the output-sensitive subspace while strongly responding to changes aligned with it.

\Needspace{4\baselineskip}
\section{Encoders}\label{app:encoders}

We apply the same calibration procedure to 14 frozen encoders spanning DINOv2, DINOv3, MAE, CLIP, SigLIP~2, VGG16, and ConvNeXt
\citep{oquab2024dinov2,simeoni2025dinov3,he2022mae,radford2021clip,tschannen2025siglip2,simonyan2015vgg,liu2022convnext}.
Each encoder receives its own lens, fitted on the same unlabelled DIV2K corpus, with block and rank selected on CSIQ and KADID-10k development.
As shown in Table~\ref{tab:app_encoder_gain} and Figure~\ref{fig:B2-ablation-full}d, the lens improves over raw features for all 14 encoders and over PCA~64 for 13 of 14.
DINOv2-S gives the strongest development result and is therefore used as the default encoder.
% Generated by scripts/70_tables_v5.py; do not edit by hand.
% Selection: scripts/09_observers.py:500-503; validated against results/observers_choice.csv.
% Mean: arithmetic mean of csiq_srcc and kadid10k_dev_srcc in results/observers_dev.csv.
\begin{table}[tbp]
\centering\footnotesize% v23.5 P2: was \small
\setlength{\tabcolsep}{3pt}
\renewcommand{\arraystretch}{1.03}
\caption{Development-selected projections across encoder families. Violet: lens; white: controls. \textbf{Bold} and \underline{underlining} mark the best and second displayed values in each comparison.}
\label{tab:app_encoder_gain}
\colorlet{humanJudgements}{gray!7}
\definecolor{ourMetrics}{HTML}{F3EFF8}
\begin{tabular*}{\linewidth}{@{\extracolsep{\fill}}lrr>{\columncolor{ourMetrics}}rrrrrrr@{}}
\toprule
& & & \multicolumn{5}{c}{Selected block} & \multicolumn{2}{c}{Best block} \\
\cmidrule(lr){4-8}\cmidrule(l){9-10}
Encoder & Block & Rank & Lens & Raw & PCA 64 & $\Delta$ raw & $\Delta$ PCA & Raw & PCA 64 \\
\midrule
% Cell Block=1; exact=1; source: results/observers_dev.csv:11, encoder=dinov2_s14, column=layer.
% Cell Rank=64; exact=64; source: results/observers_dev.csv:11, column=subspace, value=lens-64.
% Cell Lens=0.913; exact=0.9134610462195684; source: results/observers_dev.csv:11, encoder=dinov2_s14, block=1, subspace=lens-64, column=mean_srcc.
% Cell Raw=0.782; exact=0.7816913383510065; source: results/observers_dev.csv:15, encoder=dinov2_s14, block=1, subspace=full, column=mean_srcc.
% Cell PCA 64=0.858; exact=0.8577655268640465; source: results/observers_dev.csv:13, encoder=dinov2_s14, block=1, subspace=pca-64, column=mean_srcc.
% Cell $\Delta$ raw=+0.132; exact=0.1317697078685619; source: difference of results/observers_dev.csv:11, encoder=dinov2_s14, block=1, subspace=lens-64, column=mean_srcc minus results/observers_dev.csv:15, encoder=dinov2_s14, block=1, subspace=full, column=mean_srcc.
% Cell $\Delta$ PCA=+0.056; exact=0.0556955193555219; source: difference of results/observers_dev.csv:11, encoder=dinov2_s14, block=1, subspace=lens-64, column=mean_srcc minus results/observers_dev.csv:13, encoder=dinov2_s14, block=1, subspace=pca-64, column=mean_srcc.
% Cell Raw=0.807; exact=0.8069192918338622; source: results/observers_dev.csv:22, encoder=dinov2_s14, block=2, subspace=full, column=mean_srcc.
% Cell PCA 64=0.858; exact=0.8577655268640465; source: results/observers_dev.csv:13, encoder=dinov2_s14, block=1, subspace=pca-64, column=mean_srcc.
DINOv2-S & 1 & 64 & \textbf{0.913} & 0.782 & 0.858 & \textbf{+0.132} & \underline{+0.056} & 0.807 & 0.858 \\
% Cell Block=1; exact=1; source: results/observers_dev.csv:81, encoder=dinov2_b14, column=layer.
% Cell Rank=64; exact=64; source: results/observers_dev.csv:81, column=subspace, value=lens-64.
% Cell Lens=0.908; exact=0.90767212553855; source: results/observers_dev.csv:81, encoder=dinov2_b14, block=1, subspace=lens-64, column=mean_srcc.
% Cell Raw=0.804; exact=0.8035245905274508; source: results/observers_dev.csv:85, encoder=dinov2_b14, block=1, subspace=full, column=mean_srcc.
% Cell PCA 64=0.859; exact=0.8591266505350661; source: results/observers_dev.csv:83, encoder=dinov2_b14, block=1, subspace=pca-64, column=mean_srcc.
% Cell $\Delta$ raw=+0.104; exact=0.1041475350110992; source: difference of results/observers_dev.csv:81, encoder=dinov2_b14, block=1, subspace=lens-64, column=mean_srcc minus results/observers_dev.csv:85, encoder=dinov2_b14, block=1, subspace=full, column=mean_srcc.
% Cell $\Delta$ PCA=+0.049; exact=0.0485454750034839; source: difference of results/observers_dev.csv:81, encoder=dinov2_b14, block=1, subspace=lens-64, column=mean_srcc minus results/observers_dev.csv:83, encoder=dinov2_b14, block=1, subspace=pca-64, column=mean_srcc.
% Cell Raw=0.804; exact=0.8035245905274508; source: results/observers_dev.csv:85, encoder=dinov2_b14, block=1, subspace=full, column=mean_srcc.
% Cell PCA 64=0.859; exact=0.8591266505350661; source: results/observers_dev.csv:83, encoder=dinov2_b14, block=1, subspace=pca-64, column=mean_srcc.
DINOv2-B & 1 & 64 & \underline{0.908} & 0.804 & 0.859 & +0.104 & +0.049 & 0.804 & 0.859 \\
% Cell Block=4; exact=4; source: results/observers_dev.csv:408, encoder=dinov2_l14, column=layer.
% Cell Rank=32; exact=32; source: results/observers_dev.csv:408, column=subspace, value=lens-32.
% Cell Lens=0.906; exact=0.9058260909628923; source: results/observers_dev.csv:408, encoder=dinov2_l14, block=4, subspace=lens-32, column=mean_srcc.
% Cell Raw=0.829; exact=0.8288388514392658; source: results/observers_dev.csv:413, encoder=dinov2_l14, block=4, subspace=full, column=mean_srcc.
% Cell PCA 64=0.891; exact=0.8911082968572999; source: results/observers_dev.csv:411, encoder=dinov2_l14, block=4, subspace=pca-64, column=mean_srcc.
% Cell $\Delta$ raw=+0.077; exact=0.0769872395236265; source: difference of results/observers_dev.csv:408, encoder=dinov2_l14, block=4, subspace=lens-32, column=mean_srcc minus results/observers_dev.csv:413, encoder=dinov2_l14, block=4, subspace=full, column=mean_srcc.
% Cell $\Delta$ PCA=+0.015; exact=0.0147177941055924; source: difference of results/observers_dev.csv:408, encoder=dinov2_l14, block=4, subspace=lens-32, column=mean_srcc minus results/observers_dev.csv:411, encoder=dinov2_l14, block=4, subspace=pca-64, column=mean_srcc.
% Cell Raw=0.829; exact=0.8288388514392658; source: results/observers_dev.csv:413, encoder=dinov2_l14, block=4, subspace=full, column=mean_srcc.
% Cell PCA 64=0.895; exact=0.8945244594152086; source: results/observers_dev.csv:404, encoder=dinov2_l14, block=3, subspace=pca-64, column=mean_srcc.
DINOv2-L & 4 & 32 & 0.906 & 0.829 & \textbf{0.891} & +0.077 & +0.015 & 0.829 & \textbf{0.895} \\
% Cell Block=4; exact=4; source: results/observers_dev.csv:477, encoder=dinov2_g14, column=layer.
% Cell Rank=16; exact=16; source: results/observers_dev.csv:477, column=subspace, value=lens-16.
% Cell Lens=0.896; exact=0.8964000836515483; source: results/observers_dev.csv:477, encoder=dinov2_g14, block=4, subspace=lens-16, column=mean_srcc.
% Cell Raw=0.785; exact=0.784804574797078; source: results/observers_dev.csv:483, encoder=dinov2_g14, block=4, subspace=full, column=mean_srcc.
% Cell PCA 64=0.875; exact=0.8748609713221498; source: results/observers_dev.csv:481, encoder=dinov2_g14, block=4, subspace=pca-64, column=mean_srcc.
% Cell $\Delta$ raw=+0.112; exact=0.1115955088544703; source: difference of results/observers_dev.csv:477, encoder=dinov2_g14, block=4, subspace=lens-16, column=mean_srcc minus results/observers_dev.csv:483, encoder=dinov2_g14, block=4, subspace=full, column=mean_srcc.
% Cell $\Delta$ PCA=+0.022; exact=0.0215391123293985; source: difference of results/observers_dev.csv:477, encoder=dinov2_g14, block=4, subspace=lens-16, column=mean_srcc minus results/observers_dev.csv:481, encoder=dinov2_g14, block=4, subspace=pca-64, column=mean_srcc.
% Cell Raw=0.795; exact=0.794885043959134; source: results/observers_dev.csv:455, encoder=dinov2_g14, block=0, subspace=full, column=mean_srcc.
% Cell PCA 64=0.875; exact=0.8748609713221498; source: results/observers_dev.csv:481, encoder=dinov2_g14, block=4, subspace=pca-64, column=mean_srcc.
DINOv2-G & 4 & 16 & 0.896 & 0.785 & \underline{0.875} & \underline{+0.112} & +0.022 & 0.795 & 0.875 \\
% Cell Block=1; exact=1; source: results/observers_dev.csv:116, encoder=dinov2_s14_reg, column=layer.
% Cell Rank=64; exact=64; source: results/observers_dev.csv:116, column=subspace, value=lens-64.
% Cell Lens=0.902; exact=0.9019430022853868; source: results/observers_dev.csv:116, encoder=dinov2_s14_reg, block=1, subspace=lens-64, column=mean_srcc.
% Cell Raw=0.818; exact=0.8176381038880336; source: results/observers_dev.csv:120, encoder=dinov2_s14_reg, block=1, subspace=full, column=mean_srcc.
% Cell PCA 64=0.871; exact=0.870532352709299; source: results/observers_dev.csv:118, encoder=dinov2_s14_reg, block=1, subspace=pca-64, column=mean_srcc.
% Cell $\Delta$ raw=+0.084; exact=0.0843048983973532; source: difference of results/observers_dev.csv:116, encoder=dinov2_s14_reg, block=1, subspace=lens-64, column=mean_srcc minus results/observers_dev.csv:120, encoder=dinov2_s14_reg, block=1, subspace=full, column=mean_srcc.
% Cell $\Delta$ PCA=+0.031; exact=0.0314106495760878; source: difference of results/observers_dev.csv:116, encoder=dinov2_s14_reg, block=1, subspace=lens-64, column=mean_srcc minus results/observers_dev.csv:118, encoder=dinov2_s14_reg, block=1, subspace=pca-64, column=mean_srcc.
% Cell Raw=0.820; exact=0.8197038648401114; source: results/observers_dev.csv:134, encoder=dinov2_s14_reg, block=3, subspace=full, column=mean_srcc.
% Cell PCA 64=0.871; exact=0.870532352709299; source: results/observers_dev.csv:118, encoder=dinov2_s14_reg, block=1, subspace=pca-64, column=mean_srcc.
DINOv2-S-reg & 1 & 64 & 0.902 & 0.818 & 0.871 & +0.084 & +0.031 & 0.820 & 0.871 \\
% Cell Block=1; exact=1; source: results/observers_dev.csv:290, encoder=dinov2_b14_reg, column=layer.
% Cell Rank=32; exact=32; source: results/observers_dev.csv:290, column=subspace, value=lens-32.
% Cell Lens=0.898; exact=0.8976951684373196; source: results/observers_dev.csv:290, encoder=dinov2_b14_reg, block=1, subspace=lens-32, column=mean_srcc.
% Cell Raw=0.831; exact=0.8305035156326993; source: results/observers_dev.csv:295, encoder=dinov2_b14_reg, block=1, subspace=full, column=mean_srcc.
% Cell PCA 64=0.873; exact=0.8734005283234533; source: results/observers_dev.csv:293, encoder=dinov2_b14_reg, block=1, subspace=pca-64, column=mean_srcc.
% Cell $\Delta$ raw=+0.067; exact=0.0671916528046203; source: difference of results/observers_dev.csv:290, encoder=dinov2_b14_reg, block=1, subspace=lens-32, column=mean_srcc minus results/observers_dev.csv:295, encoder=dinov2_b14_reg, block=1, subspace=full, column=mean_srcc.
% Cell $\Delta$ PCA=+0.024; exact=0.0242946401138663; source: difference of results/observers_dev.csv:290, encoder=dinov2_b14_reg, block=1, subspace=lens-32, column=mean_srcc minus results/observers_dev.csv:293, encoder=dinov2_b14_reg, block=1, subspace=pca-64, column=mean_srcc.
% Cell Raw=0.831; exact=0.8305035156326993; source: results/observers_dev.csv:295, encoder=dinov2_b14_reg, block=1, subspace=full, column=mean_srcc.
% Cell PCA 64=0.873; exact=0.8734005283234533; source: results/observers_dev.csv:293, encoder=dinov2_b14_reg, block=1, subspace=pca-64, column=mean_srcc.
DINOv2-B-reg & 1 & 32 & 0.898 & \underline{0.831} & 0.873 & +0.067 & +0.024 & 0.831 & 0.873 \\
% Cell Block=1; exact=1; source: results/observers_dev.csv:46, encoder=dinov3_s16, column=layer.
% Cell Rank=64; exact=64; source: results/observers_dev.csv:46, column=subspace, value=lens-64.
% Cell Lens=0.871; exact=0.8714460690434627; source: results/observers_dev.csv:46, encoder=dinov3_s16, block=1, subspace=lens-64, column=mean_srcc.
% Cell Raw=0.805; exact=0.8053623350800673; source: results/observers_dev.csv:50, encoder=dinov3_s16, block=1, subspace=full, column=mean_srcc.
% Cell PCA 64=0.825; exact=0.8249233689531617; source: results/observers_dev.csv:48, encoder=dinov3_s16, block=1, subspace=pca-64, column=mean_srcc.
% Cell $\Delta$ raw=+0.066; exact=0.0660837339633954; source: difference of results/observers_dev.csv:46, encoder=dinov3_s16, block=1, subspace=lens-64, column=mean_srcc minus results/observers_dev.csv:50, encoder=dinov3_s16, block=1, subspace=full, column=mean_srcc.
% Cell $\Delta$ PCA=+0.047; exact=0.0465227000903010; source: difference of results/observers_dev.csv:46, encoder=dinov3_s16, block=1, subspace=lens-64, column=mean_srcc minus results/observers_dev.csv:48, encoder=dinov3_s16, block=1, subspace=pca-64, column=mean_srcc.
% Cell Raw=0.816; exact=0.8155137365107519; source: results/observers_dev.csv:57, encoder=dinov3_s16, block=2, subspace=full, column=mean_srcc.
% Cell PCA 64=0.828; exact=0.8276593790554754; source: results/observers_dev.csv:41, encoder=dinov3_s16, block=0, subspace=pca-64, column=mean_srcc.
DINOv3-S & 1 & 64 & 0.871 & 0.805 & 0.825 & +0.066 & +0.047 & 0.816 & 0.828 \\
% Cell Block=2; exact=2; source: results/observers_dev.csv:156, encoder=dinov3_b16, column=layer.
% Cell Rank=16; exact=16; source: results/observers_dev.csv:156, column=subspace, value=lens-16.
% Cell Lens=0.878; exact=0.8779988532822542; source: results/observers_dev.csv:156, encoder=dinov3_b16, block=2, subspace=lens-16, column=mean_srcc.
% Cell Raw=0.824; exact=0.8237201188925876; source: results/observers_dev.csv:162, encoder=dinov3_b16, block=2, subspace=full, column=mean_srcc.
% Cell PCA 64=0.825; exact=0.8252958261325654; source: results/observers_dev.csv:160, encoder=dinov3_b16, block=2, subspace=pca-64, column=mean_srcc.
% Cell $\Delta$ raw=+0.054; exact=0.0542787343896666; source: difference of results/observers_dev.csv:156, encoder=dinov3_b16, block=2, subspace=lens-16, column=mean_srcc minus results/observers_dev.csv:162, encoder=dinov3_b16, block=2, subspace=full, column=mean_srcc.
% Cell $\Delta$ PCA=+0.053; exact=0.0527030271496888; source: difference of results/observers_dev.csv:156, encoder=dinov3_b16, block=2, subspace=lens-16, column=mean_srcc minus results/observers_dev.csv:160, encoder=dinov3_b16, block=2, subspace=pca-64, column=mean_srcc.
% Cell Raw=0.842; exact=0.8417102137703626; source: results/observers_dev.csv:155, encoder=dinov3_b16, block=1, subspace=full, column=mean_srcc.
% Cell PCA 64=0.846; exact=0.8463090113461259; source: results/observers_dev.csv:146, encoder=dinov3_b16, block=0, subspace=pca-64, column=mean_srcc.
DINOv3-B & 2 & 16 & 0.878 & 0.824 & 0.825 & +0.054 & +0.053 & \underline{0.842} & 0.846 \\
% Cell Block=3; exact=3; source: results/observers_dev.csv:436, encoder=dinov3_l16, column=layer.
% Cell Rank=32; exact=32; source: results/observers_dev.csv:436, column=subspace, value=lens-32.
% Cell Lens=0.883; exact=0.8831888800992773; source: results/observers_dev.csv:436, encoder=dinov3_l16, block=3, subspace=lens-32, column=mean_srcc.
% Cell Raw=0.851; exact=0.8506501477115216; source: results/observers_dev.csv:441, encoder=dinov3_l16, block=3, subspace=full, column=mean_srcc.
% Cell PCA 64=0.845; exact=0.8451871539336602; source: results/observers_dev.csv:439, encoder=dinov3_l16, block=3, subspace=pca-64, column=mean_srcc.
% Cell $\Delta$ raw=+0.033; exact=0.0325387323877557; source: difference of results/observers_dev.csv:436, encoder=dinov3_l16, block=3, subspace=lens-32, column=mean_srcc minus results/observers_dev.csv:441, encoder=dinov3_l16, block=3, subspace=full, column=mean_srcc.
% Cell $\Delta$ PCA=+0.038; exact=0.0380017261656171; source: difference of results/observers_dev.csv:436, encoder=dinov3_l16, block=3, subspace=lens-32, column=mean_srcc minus results/observers_dev.csv:439, encoder=dinov3_l16, block=3, subspace=pca-64, column=mean_srcc.
% Cell Raw=0.857; exact=0.8566889325072955; source: results/observers_dev.csv:448, encoder=dinov3_l16, block=4, subspace=full, column=mean_srcc.
% Cell PCA 64=0.859; exact=0.8585607604442788; source: results/observers_dev.csv:446, encoder=dinov3_l16, block=4, subspace=pca-64, column=mean_srcc.
DINOv3-L & 3 & 32 & 0.883 & \textbf{0.851} & 0.845 & +0.033 & +0.038 & \textbf{0.857} & 0.859 \\
% Cell Block=2; exact=2; source: results/observers_dev.csv:193, encoder=mae_b16, column=layer.
% Cell Rank=64; exact=64; source: results/observers_dev.csv:193, column=subspace, value=lens-64.
% Cell Lens=0.884; exact=0.8841655747091626; source: results/observers_dev.csv:193, encoder=mae_b16, block=2, subspace=lens-64, column=mean_srcc.
% Cell Raw=0.791; exact=0.7911994131210978; source: results/observers_dev.csv:197, encoder=mae_b16, block=2, subspace=full, column=mean_srcc.
% Cell PCA 64=0.850; exact=0.8500426586383777; source: results/observers_dev.csv:195, encoder=mae_b16, block=2, subspace=pca-64, column=mean_srcc.
% Cell $\Delta$ raw=+0.093; exact=0.0929661615880648; source: difference of results/observers_dev.csv:193, encoder=mae_b16, block=2, subspace=lens-64, column=mean_srcc minus results/observers_dev.csv:197, encoder=mae_b16, block=2, subspace=full, column=mean_srcc.
% Cell $\Delta$ PCA=+0.034; exact=0.0341229160707849; source: difference of results/observers_dev.csv:193, encoder=mae_b16, block=2, subspace=lens-64, column=mean_srcc minus results/observers_dev.csv:195, encoder=mae_b16, block=2, subspace=pca-64, column=mean_srcc.
% Cell Raw=0.833; exact=0.8333629262230017; source: results/observers_dev.csv:183, encoder=mae_b16, block=0, subspace=full, column=mean_srcc.
% Cell PCA 64=0.868; exact=0.8681869500794418; source: results/observers_dev.csv:188, encoder=mae_b16, block=1, subspace=pca-64, column=mean_srcc.
MAE-B & 2 & 64 & 0.884 & 0.791 & 0.850 & +0.093 & +0.034 & 0.833 & 0.868 \\
% Cell Block=1; exact=1; source: results/observers_dev.csv:219, encoder=clip_b16, column=layer.
% Cell Rank=16; exact=16; source: results/observers_dev.csv:219, column=subspace, value=lens-16.
% Cell Lens=0.828; exact=0.8276169700049023; source: results/observers_dev.csv:219, encoder=clip_b16, block=1, subspace=lens-16, column=mean_srcc.
% Cell Raw=0.726; exact=0.7255877946196408; source: results/observers_dev.csv:225, encoder=clip_b16, block=1, subspace=full, column=mean_srcc.
% Cell PCA 64=0.850; exact=0.8499599133219813; source: results/observers_dev.csv:223, encoder=clip_b16, block=1, subspace=pca-64, column=mean_srcc.
% Cell $\Delta$ raw=+0.102; exact=0.1020291753852615; source: difference of results/observers_dev.csv:219, encoder=clip_b16, block=1, subspace=lens-16, column=mean_srcc minus results/observers_dev.csv:225, encoder=clip_b16, block=1, subspace=full, column=mean_srcc.
% Cell $\Delta$ PCA=-0.022; exact=-0.0223429433170790; source: difference of results/observers_dev.csv:219, encoder=clip_b16, block=1, subspace=lens-16, column=mean_srcc minus results/observers_dev.csv:223, encoder=clip_b16, block=1, subspace=pca-64, column=mean_srcc.
% Cell Raw=0.787; exact=0.7871418536363521; source: results/observers_dev.csv:218, encoder=clip_b16, block=0, subspace=full, column=mean_srcc.
% Cell PCA 64=0.883; exact=0.8831038567055787; source: results/observers_dev.csv:216, encoder=clip_b16, block=0, subspace=pca-64, column=mean_srcc.
CLIP-B & 1 & 16 & 0.828 & 0.726 & 0.850 & +0.102 & -0.022 & 0.787 & \underline{0.883} \\
% Cell Block=4; exact=4; source: results/observers_dev.csv:277, encoder=siglip2_b16, column=layer.
% Cell Rank=64; exact=64; source: results/observers_dev.csv:277, column=subspace, value=lens-64.
% Cell Lens=0.859; exact=0.8590409886874242; source: results/observers_dev.csv:277, encoder=siglip2_b16, block=4, subspace=lens-64, column=mean_srcc.
% Cell Raw=0.827; exact=0.8267439866187568; source: results/observers_dev.csv:281, encoder=siglip2_b16, block=4, subspace=full, column=mean_srcc.
% Cell PCA 64=0.803; exact=0.802768865941238; source: results/observers_dev.csv:279, encoder=siglip2_b16, block=4, subspace=pca-64, column=mean_srcc.
% Cell $\Delta$ raw=+0.032; exact=0.0322970020686674; source: difference of results/observers_dev.csv:277, encoder=siglip2_b16, block=4, subspace=lens-64, column=mean_srcc minus results/observers_dev.csv:281, encoder=siglip2_b16, block=4, subspace=full, column=mean_srcc.
% Cell $\Delta$ PCA=+0.056; exact=0.0562721227461862; source: difference of results/observers_dev.csv:277, encoder=siglip2_b16, block=4, subspace=lens-64, column=mean_srcc minus results/observers_dev.csv:279, encoder=siglip2_b16, block=4, subspace=pca-64, column=mean_srcc.
% Cell Raw=0.827; exact=0.8267439866187568; source: results/observers_dev.csv:281, encoder=siglip2_b16, block=4, subspace=full, column=mean_srcc.
% Cell PCA 64=0.803; exact=0.802768865941238; source: results/observers_dev.csv:279, encoder=siglip2_b16, block=4, subspace=pca-64, column=mean_srcc.
SigLIP2-B & 4 & 64 & 0.859 & 0.827 & 0.803 & +0.032 & \underline{+0.056} & 0.827 & 0.803 \\
% Cell Block=3; exact=3; source: results/observers_dev.csv:330, encoder=vgg16, column=layer.
% Cell Rank=16; exact=16; source: results/observers_dev.csv:330, column=subspace, value=lens-16.
% Cell Lens=0.813; exact=0.8133746746639131; source: results/observers_dev.csv:330, encoder=vgg16, block=3, subspace=lens-16, column=mean_srcc.
% Cell Raw=0.789; exact=0.7886826663372051; source: results/observers_dev.csv:336, encoder=vgg16, block=3, subspace=full, column=mean_srcc.
% Cell PCA 64=0.806; exact=0.8062867471835784; source: results/observers_dev.csv:334, encoder=vgg16, block=3, subspace=pca-64, column=mean_srcc.
% Cell $\Delta$ raw=+0.025; exact=0.0246920083267080; source: difference of results/observers_dev.csv:330, encoder=vgg16, block=3, subspace=lens-16, column=mean_srcc minus results/observers_dev.csv:336, encoder=vgg16, block=3, subspace=full, column=mean_srcc.
% Cell $\Delta$ PCA=+0.007; exact=0.0070879274803347; source: difference of results/observers_dev.csv:330, encoder=vgg16, block=3, subspace=lens-16, column=mean_srcc minus results/observers_dev.csv:334, encoder=vgg16, block=3, subspace=pca-64, column=mean_srcc.
% Cell Raw=0.802; exact=0.8016980003741871; source: results/observers_dev.csv:322, encoder=vgg16, block=1, subspace=full, column=mean_srcc.
% Cell PCA 64=0.813; exact=0.8130648354418535; source: results/observers_dev.csv:341, encoder=vgg16, block=4, subspace=pca-64, column=mean_srcc.
VGG16 & 3 & 16 & 0.813 & 0.789 & 0.806 & +0.025 & +0.007 & 0.802 & 0.813 \\
% Cell Block=1; exact=1; source: results/observers_dev.csv:351, encoder=convnext_b, column=layer.
% Cell Rank=16; exact=16; source: results/observers_dev.csv:351, column=subspace, value=lens-16.
% Cell Lens=0.765; exact=0.7646297158720263; source: results/observers_dev.csv:351, encoder=convnext_b, block=1, subspace=lens-16, column=mean_srcc.
% Cell Raw=0.693; exact=0.6932643509564664; source: results/observers_dev.csv:357, encoder=convnext_b, block=1, subspace=full, column=mean_srcc.
% Cell PCA 64=0.697; exact=0.6970274011695237; source: results/observers_dev.csv:355, encoder=convnext_b, block=1, subspace=pca-64, column=mean_srcc.
% Cell $\Delta$ raw=+0.071; exact=0.0713653649155599; source: difference of results/observers_dev.csv:351, encoder=convnext_b, block=1, subspace=lens-16, column=mean_srcc minus results/observers_dev.csv:357, encoder=convnext_b, block=1, subspace=full, column=mean_srcc.
% Cell $\Delta$ PCA=+0.068; exact=0.0676023147025026; source: difference of results/observers_dev.csv:351, encoder=convnext_b, block=1, subspace=lens-16, column=mean_srcc minus results/observers_dev.csv:355, encoder=convnext_b, block=1, subspace=pca-64, column=mean_srcc.
% Cell Raw=0.764; exact=0.7639997155284018; source: results/observers_dev.csv:378, encoder=convnext_b, block=4, subspace=full, column=mean_srcc.
% Cell PCA 64=0.760; exact=0.760109083177969; source: results/observers_dev.csv:376, encoder=convnext_b, block=4, subspace=pca-64, column=mean_srcc.
ConvNeXt-B & 1 & 16 & 0.765 & 0.693 & 0.697 & +0.071 & \textbf{+0.068} & 0.764 & 0.760 \\
\bottomrule
\end{tabular*}
% \par\vspace{3pt}\begin{minipage}{\linewidth}\footnotesize Mean development \srcc{} on CSIQ and KADID-10k; higher is better. \textbf{Bold}: best; \underline{underlined}: second distinct displayed value; ties share marks. Choose each lens over blocks and ranks; exact ties favour the earlier block, then smaller rank. Raw and PCA 64 use that block; lens and PCA ranks can differ. Gains subtract their scores from the lens score. The final columns maximise each control across measured blocks. These development comparisons do not isolate equal-rank generalisation.\end{minipage}
% v23.6 P6: shorter note.
\end{table}

The same trend extends to video. 
At block~2, VideoMAE K400 reaches \srcc{} 0.786 with lens~16, compared with 0.771 for PCA~64 and 0.710 for full features.
A new encoder therefore requires only a frozen output and an intermediate feature representation; the estimator of Appendix~\ref{app:fit} can then be used to fit its lens.

\Needspace{4\baselineskip}
\section{Video}\label{app:video}
% arXiv layout: moved up from the end of Appendix L.1 (same figure order).
\begin{figure}[t]
\centering
\includegraphics[width=\linewidth]{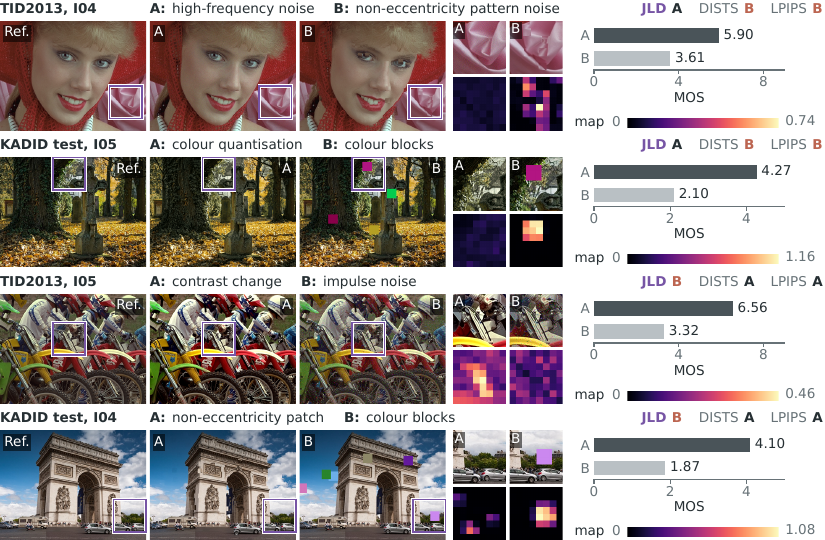}
\caption{Four same-reference pairs in which humans prefer A. \name{} agrees in the top two rows and reverses the preference in the bottom two. Maps share one scale within each row and show the lens term only.}
\label{fig:gallery}
\end{figure}

\namevideo{} applies the same construction to a frozen VideoMAE-Base fine-tuned on Kinetics-400 \citep{tong2022videomae}.
We use block~2 with a 16-dimensional lens, fitted to the class logits of 50 unlabelled clips using eight random probes.
For each video pair, we sample eight aligned 16-frame clips, project the space-time token differences through the lens, average their squared lengths over tokens and clips, and take the square root.
Video uses neither the chroma pre-filter nor an output term.
On Waterloo IVC 4K \citep{li2019waterloo4k}, used to select the block and rank, \namevideo{} reaches \srcc{} 0.786.
For comparison, PCA~64 reaches 0.771, all 768 features 0.710, VMAF 0.611, and PSNR 0.562.
The frame-wise image \name{} reaches 0.710, showing that the space-time representation captures information unavailable from individual frames.

\subsection{Held-out video}\label{app:video-heldout}

We evaluate the frozen Waterloo-selected configuration on all 120 released AVT-VQDB-UHD-1 test pairs \citep{rao2019avt}; no AVT video is used for fitting or model selection.
Decoded videos are scaled to a common display resolution of $1920\times1080$, and all methods are evaluated under the same display protocol.
% v23 (W5): both video datasets side by side. SRCC, within-source SRCC and the lower of the two dataset SRCCs from
% figures/v23/video_heldout_numbers.json (scripts/v23/video_heldout.py; per-pair inputs results/video2_pairs.csv and
% results/video3_pairs_avt_vqdb_uhd1.csv, display protocol). AVT KRCC and PLCC from results/video3_summary.csv,
% protocol heldout-b2-k16-display2160-then-1080-centre256-lens224-v1. Encoded-resolution row: results/video3_summary.csv,
% protocol heldout-b2-k16-eight16-centre256-lens224-fp32-v1.
\begin{table}[t]
\centering\footnotesize% v23.5 P2: was \small
\setlength{\tabcolsep}{3.4pt}
\caption{Video results on both datasets. Within: mean per-source \srcc{}. Lower: the smaller of the two dataset \srcc{}s. Bold marks the best value in each column.}
\label{tab:app_video_heldout}
\definecolor{ourMetrics}{HTML}{F3EFF8}
\begin{tabular}{@{}lccccccccc@{}}
\toprule
 & Fitted to & \multicolumn{2}{c}{Waterloo 4K (dev., 240)} & \multicolumn{4}{c}{AVT-UHD (held out, 120)} & & AVT \\
\cmidrule(lr){3-4}\cmidrule(lr){5-8}
Metric & video MOS & \srcc{} & Within & \srcc{} & \krcc{} & \plcc{} & Within & Lower & encoded \\
\midrule
\rowcolor{ourMetrics}\namevideo{} & no & \textbf{0.786} & 0.967 & 0.912 & 0.761 & 0.918 & 0.944 & \textbf{0.786} & 0.582 \\
VMAF & yes & 0.611 & 0.834 & \textbf{0.932} & \textbf{0.790} & \textbf{0.939} & 0.953 & 0.611 & 0.663 \\
MS-SSIM & no & 0.694 & 0.967 & 0.884 & 0.718 & 0.888 & 0.945 & 0.694 & 0.561 \\
PSNR & no & 0.562 & \textbf{0.968} & 0.908 & 0.758 & 0.915 & \textbf{0.964} & 0.562 & \textbf{0.754} \\
\bottomrule
\end{tabular}
\end{table}

On AVT, \namevideo{} reaches \srcc{} 0.912, compared with 0.932 for VMAF, 0.908 for PSNR, and 0.884 for MS-SSIM.
On Waterloo, \namevideo{} reaches 0.786 compared with 0.611 for VMAF.
Thus, \namevideo{} is the only one of these four methods above 0.78 on both datasets, despite using no subjective video scores during fitting.
Figure~\ref{fig:video-artifacts} shows representative cases where \namevideo{} and VMAF agree or disagree with human scores.

% \subsection{Visible artifacts}\label{app:video-examples}

% arXiv layout: fig:gallery is input after the section heading so it shares the top of this page with the video table.
% \begin{figure}[t]
% \centering
% \includegraphics[width=\linewidth]{v22/app_gallery.pdf}
% \caption{Four same-reference pairs in which humans prefer A. \name{} agrees in the top two rows and reverses the preference in the bottom two. Maps share one scale within each row and show the lens term only.}
% \label{fig:gallery}
% \end{figure}
% arXiv layout: moved here from sections/app_limitations.tex so that it sits next to Appendix L.1.
\begin{figure}[t]
\centering
\includegraphics[width=\linewidth]{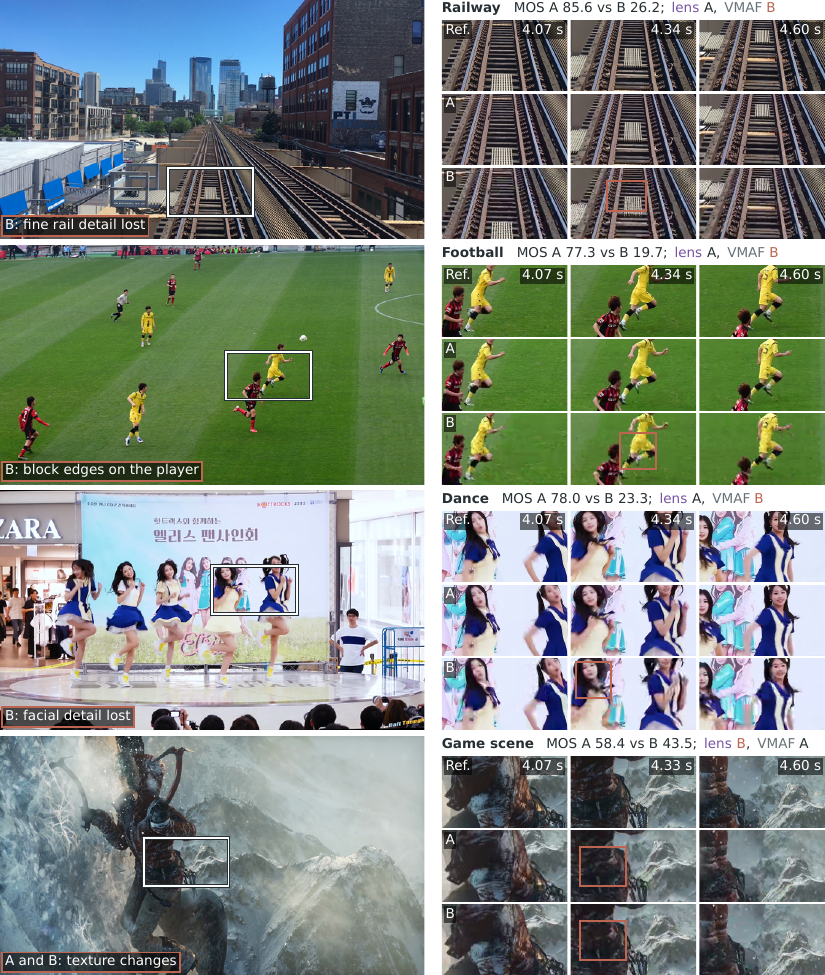}
\caption{Four Waterloo examples. Each row shows the reference and two distorted videos at synchronized frames; row titles report MOS and the \namevideo{} and VMAF choices. Coral marks disagreement with MOS and visible artifacts.}
\label{fig:video-artifacts}\label{fig:video-artifacts-more}
\end{figure}

\section{Matched pixel error}\label{app:gallery}

Figure~\ref{fig:gallery} shows representative same-reference pairs with similar pixel error where human judgements distinguish the distortions.
\name{} agrees with the human preference in the top two examples and fails in the bottom two, providing both representative successes and failure cases.

\section{Limitations}\label{app:limitations}\label{sec:limitations}

\name{} inherits limitations of its encoder. It is weaker on restoration outputs (0.624 \srcc{} on PIPAL) and on global contrast changes (0.378 on TID2013; Appendix~\ref{app:types}).
The lens term is a pseudometric with a non-trivial null space: displacements in the 320 discarded feature directions are invisible to it, so direct optimisation against the lens can exploit these directions \citep{ding2021optimization}.
Although \name{} is differentiable, using it as a training loss therefore requires care.
The output term removes some of this ambiguity but sacrifices the triangle inequality, with violations on at most 107 per million audited triplets; applications requiring a strict pseudometric can use the lens term alone.
The lens itself uses no human labels, but CSIQ and KADID-10k development data select the block, rank, and output weight; LIVE, KADID-10k test, and PIPAL are held out.
On held-out video, \namevideo{} reaches 0.912 versus 0.932 for VMAF (Appendix~\ref{app:video}).
Full \name{} is also slower than DISTS; the approximately four-fold speed advantage applies to \namefast{}.
Finally, the human study contains only ten participants, and \name{} assumes spatially aligned inputs.

\end{document}